\theoremstyle{definition}
\newtheorem{theorem}{Theorem}
\newtheorem{lemma}{Lemma}
\newtheorem{assumption}{Assumption}
\DeclareMathOperator*{\argmax}{arg\,max}
\DeclareMathOperator{\E}{\mathbb{E}}
\title{Efficient Querying for Cooperative Probabilistic Commitments}
\author{Qi Zhang$^1$, Edmund H. Durfee$^2$, Satinder Singh$^2$\\}
\begin{document}
% \linenumbers
\maketitle

\begin{abstract}
Multiagent systems can use commitments as the core of a general coordination infrastructure, supporting both cooperative and non-cooperative interactions. Agents whose objectives are aligned, and where one agent can help another achieve greater reward by sacrificing some of its own reward, should choose a cooperative commitment to maximize their \emph{joint} reward. We present a solution to the problem of how cooperative agents can efficiently find an (approximately) optimal commitment by querying about carefully-selected commitment choices. We prove structural properties of the agents' values as functions of the parameters of the commitment specification, and develop a greedy method for composing a query with provable approximation bounds, which we empirically show can find nearly optimal commitments in a fraction of the time methods that lack our insights require.
\end{abstract}

\section{Introduction}
Commitments are a proven approach to multiagent coordination~\cite{msingh,CohenLevesque,Castelfranchi1995,Mallya2003,Chesani2013,Al-Saqqar2014}. Through commitments, agents know more about what to expect from others, and thus can plan actions with higher confidence of success. That said, commitments are generally uncertain: an agent might abandon a commitment if it discovers that it cannot achieve what it promised, or that it prefers to achieve something else, or that others will not uphold their side of the commitment~\cite{Jennings,xing2001formalization,winikoff2006implementing}.

One way to deal with commitment uncertainty is to institute protocols so participating agents are aware of the status of commitments through their lifecycles~\cite{venkatraman1999verifying,xing2001formalization,yolum2002flexible,FornaraC-AAMAS08,baldoni2015composing,gunay2016promoca,pereira2017detecting,DastaniTY-JAAMAS17}. 
%ED: Moved some citations that were removed from related work to the list above.
Another has been to qualify commitments with conditional statements about what must (not) be true in the environment for the commitment to be fulfilled~\cite{msingh,Agotnes2008,Vokrinek2009}.  When such conditions might not be fully observable to all agents, agents might summarize the likelihood of the conditions being satisfied in the form of a {\em probabilistic commitment} \cite{kushmerick1994algorithm,xuan1999incorporating,Witwicki2007CommitmentdrivenDJ}.

Our focus is the process by which agents choose a probabilistic commitment, which serves as a probabilistic promise from one agent (the \emph{provider}) to another (the \emph{recipient}) about establishing a precondition for the recipient's preferred actions/objectives.  We formalize how the space of probabilistic commitments for the precondition captures different tradeoffs between timing and likelihood, where in general the recipient gets higher reward from earlier timing and/or higher likelihood, while the provider prefers later timing and/or lower likelihood because these leave it less constrained when optimizing its own policy.
Thus, when agents agree to work together (e.g., \cite{HanPL-JAAMAS17}), forming a commitment generally involves a negotiation~\cite{kraus1997negotiation,aknine2004extended,rahwan2004interest}. %where each agent prefers a commitment that maximizes its own rewards while still being acceptable to the other agent.

Sometimes, however, a pair of agents might have objectives/payoffs that are aligned/shared. For example, they might be a chef and waiter working in a restaurant (see Section~\ref{sec:Overcooked}). In a commitment-based coordination framework, such agents should find a \textit{cooperative} probabilistic commitment, whose timing and likelihood maximizes their \emph{joint} (summed) reward, in expectation.
Decomposing the joint reward into local, individual rewards is common elsewhere as well, like in the Dec-POMDP literature~\cite{oliehoek2016concise} and multi-agent reinforcement learning~\cite{zhang2018fully}.
% This optimization problem is complicated not only by being decentralized (the information relevant to optimization is distributed between the agents), but also because the space of possible timing/probability combinations is large, and the evaluation of any particular combination (requiring each agent to compute an optimal policy) is expensive.
This optimization problem is complicated by two main factors:
i) the information relevant to optimization is distributed, and thus the agents need to exchange knowledge, preferably with low communication cost;
and ii) the space of possible timing/probability combinations is large and evaluating a combination (requiring each agent to compute an optimal policy) is expensive, and thus identifying a desirable probabilistic commitment is computationally challenging even with perfect centralized information.
%As an example, consider the scenario with a chef and a waiter in a restaurant. The chef occupies the kitchen and knows the cost/time of food preparation, while the waiter occupies the dining room and knows the guests' food preferences. To coordinate, the chef can make probabilistic commitments on when and what food will be available.

% The main contribution of this paper is a decentralized, query-based approach that cooperative agents can use to efficiently converge on an approximately-optimal probabilistic commitment. To get the efficiency, we prove the existence of structural properties in the cooperating agents' value functions, and show that these can be provably exploited by our query-based approach.
The main contribution of this paper is an approach that addresses both challenges for cooperative agents to efficiently converge on an approximately-optimal probabilistic commitment.
To address i), our approach adopts a decentralized, query-based protocol for the agents to exchange knowledge effectively with low communication cost.
To get the efficiency for ii), we prove the existence of structural properties in the agents' value functions, and show that these can be provably exploited by the query-based protocol.

% The remainder of this paper is structured as follows. After situating our work in the literature (Section~\ref{sec:related-work}), we summarize the work on probabilistic commitments that we build upon (Section~\ref{sec:background}). In Section~\ref{sec:Structure of the Commitment Space}, we prove properties of the agents' value functions that allow us to safely prune portions of the possible commitment space. We then present our query-based decentralized approach (Section~\ref{sec:Commitment Queries}), and explain how the value function properties support an optimal (but more computationally costly) and an approximately-optimal (and less costly) strategy for formulating queries. We empirically evaluate the performance quality and computational costs of our new approach (Section~\ref{sec:empirical}), and discuss the promise of our approach and directions for future research (Section~\ref{sec:conclusions}).

\section{Related Work}
\label{sec:related-work}
Commitments are a widely-adopted framework for multiagent coordination \cite{kushmerick1994algorithm,xuan1999incorporating,msingh}.
We build on prior research on probabilistic commitments \cite{xuan1999incorporating,Bannazadeh2010}, where the timing and likelihood of achieving a desired outcome are explicitly specified. Choosing a probabilistic commitment thus corresponds to searching over the combinatorial space of possible commitment times and probabilities.
Prior work on such search largely relies on heuristics. 
Witwicki et al. \shortcite{Witwicki2007CommitmentdrivenDJ} propose the first probabilistic commitment search algorithm that initializes a set of commitments and then performs local adjustments on time and probability. 
Later work \cite{witwicki2009,oliehoek2012influence} further incorporates the commitment's feasibility and the best response strategy \cite{nair2003taming} to guide the search.
In contrast to these heuristic approaches, in this paper we analytically reveal the structure of the commitment space, which enables efficient search that provably finds the optimal commitment.

Because the search process is decentralized, it will involve message passing. The message passing between our decision-theoretic agents serves the purpose of preference elicitation, which is typically framed in terms of an agent querying another about which from among a set of choices it most prefers \cite{chajewska2000making,boutilier2002pomdp,viappiani2010optimal}. We adopt such a querying protocol as a means for information exchange between the agents. In particular, we draw on recent work that uses value-of-information concepts to formulate multiple-choice queries \cite{viappiani2010optimal,cohn2014characterizing,zhang2017approximately}, but as we will explain we augment prior approaches by annotating offered choices with the preferences of the agent posing the query.
Moreover, we prove several characteristic properties of agents' commitment value functions, which enables efficient formulation of near-optimal queries.

\section{Decision-Theoretic Commitments}
\label{sec:background}

The provider's and recipient's environments are modeled as two separate Markov Decision Processes (MDPs).
An MDP is defined as $M = ({S}, {A}, P, R, H, s_0)$ where ${S}$ is the finite state space, ${A}$ the finite action space, $P:{S}\times{A}\to\Delta({S})$ the transition function ($\Delta({S})$ denotes the set of all probability distributions over ${S}$), $R:{S}\times{A}\to\mathbb{R}$ the reward function, $H$ the finite horizon, and $s_0$ the initial state. The state space is partitioned into disjoint sets by the time step, ${S} = \bigcup_{h=0}^{H}{S}_h$, where states in ${S}_h$ only transition to states in ${S}_{h+1}$. The MDP starts in $s_0$ and ends in ${S}_H$. Given a policy $\pi:{S}\to \Delta({A})$, a random sequence of transitions $\{(s_h,a_h,r_{h},s_{h+1})\}_{h=0}^{H-1}$ is generated by $a_h \sim \pi(s_h), r_{h} = R(s_h,a_h), s_{h+1} \sim P(s_h, a_h)$. 
The value function of $\pi$ is $V^{\pi}_{M}(s) = \E[\textstyle\sum_{h'=h}^{H-1}r_{h'}|\pi, s_h=s]$ where $h$ is such that $s\in{S}_h$. The optimal policy $\pi^*_{M}$ maximizes $V^{\pi}_{M}$ for all $s\in{S}$, with value function $V^{\pi^*_{M}}_{M}$ abbreviated as $V^*_{M}$.

Superscripts ${\rm p}$ and ${\rm r}$ denote the provider and recipient, respectively. Thus, the provider's MDP is $M^{\rm p}$,
% $= (S^{\rm p}, A^{\rm p}, P^{\rm p}, R^{\rm p}, H^{\rm p}, s^{\rm p}_0)$
and the recipient's MDP is $M^{\rm r}$, sharing the horizon $H=H^{\rm p}=H^{\rm r}$.
% $= (S^{\rm r}, A^{\rm r}, P^{\rm r}, R^{\rm r}, H^{\rm r}, s^{\rm r}_0)$
We assume that the two MDPs are weakly-coupled in one direction in the sense that the provider's action might affect certain aspects of the recipient's state but not the other way around.
As one way to model such an interaction, we adopt the Transition-Decoupled POMDP (TD-POMDP) framework~\cite{witwicki2010influence}.
%ED: Did Becker do TD-POMDPs?  By that name?
%QI: I took it out.
Formally, both the provider's state $s^{\rm p}$ and the recipient's state $s^{\rm r}$ can be factored into state features.
The provider can fully control its state features.
The recipient's state can be factored as $s^{\rm r} =  (l^{\rm r}, u)$, where $l^{\rm r}$ is the set of all the recipient's state features {\underline{\em l}}ocally controlled by the recipient, and $u$ is the set of state features {\underline{\em u}}ncontrollable by the recipient but shared with the provider, i.e. $u = s^{\rm p} \cap s^{\rm r}$.
Formally, the dynamics of the recipient's state is factored as $P^{\rm r}=(P^{\rm r}_l, P^{\rm r}_u)$:
\begin{align*}
      P^{\rm r} \left( s^{\rm r}_{h+1}|s^{\rm r}_h,a^{\rm r}_h \right) 
    =& P^{\rm r}\left((l^{\rm r}_{h+1}, u_{h+1})|(l^{\rm r}_h, u_h),a^{\rm r}_h \right)\\
    =& P^{\rm r}_u(u_{h+1}|u_h) P^{\rm r}_l\left(l^{\rm r}_{h+1}|(l^{\rm r}_h,u_h),a^{\rm r}_h\right),
\end{align*}
where the dynamics of $u$, $P^{\rm r}_u$, is controlled only by the provider's policy (i.e., it is not a function of $a^{\rm r}_h$). 
Prior work refers to $P^{\rm r}_u$ as the {\em influence}~\cite{witwicki2010influence,oliehoek2012influence} that the provider exerts on the recipient's environment.
% Since state features $u$ are only controllable by the provider, coordination between the agents can be achieved by the provider specifying its influence from which the recipient plans accordingly~\cite{witwicki2010influence,witwicki2012heuristic,de2019influence}.
In this paper, we focus on the setting where $u$ contains a single binary state feature, $u\in\{u^-, u^+\}$, with $u$ initially taking the value of $u^-$.
Intuitively, $u^+ (u^-)$ stands for an enabled (disabled) precondition needed by the recipient, and the provider commits to enabling the precondition.
Further, we focus on a scenario where the flipping is permanent~\cite{HindriksR07,witwicki2009,zhang2016commitment}. That is, once feature $u$ flips to $u^+$, the precondition is permanently established and will not revert back to $u^-$.

\textbf{The provider's commitment semantics.}
Borrowing from the literature~\cite{Witwicki2007CommitmentdrivenDJ,zhang2016commitment}, we define a probabilistic commitment w.r.t. the shared feature $u$ via a tuple $c = (T, p)$, where $T$ is the commitment time and $p$ is the commitment probability.
The provider's commitment semantics is to follow a policy $\pi^{\rm p}$ that, starting from initial state $s^{\rm p}_0$ (in which $u$ is $u^-$), sets $u$ to $u^+$ by time step $T$ with at least probability $p$:
\begin{align}\label{eq:commitment semantics}
    \Pr \left( u^+ \in s^{\rm p}_T |s^{\rm p}_0, \pi^{\rm p} \right) \geq p.
\end{align}
For a commitment $c$, let $\Pi^{\rm p}(c)$ be the set of all possible provider policies respecting the commitment semantics (Eq.~\eqref{eq:commitment semantics}).
We call commitment $c$ {\em feasible} if and only if $\Pi^{\rm p}(c)$ is non-empty.
For a given commitment time $T$, there is a maximum feasible probability $\overline{p}(T) \le 1$ such that commitment $(T, p)$ is feasible if and only if $p\leq \overline{p}(T)$, 
and $\overline{p}(T)$ can be computed by solving the provider's MDP with the reward function modified to +1 reward for states where the commitment is realized at $T$, and 0 otherwise.
This is because maximizing this reward is equivalent to maximizing the probability of realizing the commitment at time step $T$, and thus the optimal initial state value is the maximum feasible probability $\overline{p}(T)$.

Given a feasible $c$, the provider's optimal policy maximizes the value with its original reward function of its initial state while respecting the commitment semantics:
\begin{align}\label{eq:provider commitment value}
\textstyle
    v^{\rm p}(c) = \max_{\pi^{\rm p} \in \Pi^{\rm p}(c)} V^{\pi^{\rm p}}_{M^{\rm p}} ( s^{\rm p}_0 ).
\end{align}
We call $v^{\rm p}(c)$ the provider's commitment value function, and $\pi^{\rm p}(c)$ denotes the provider's policy maximizing Eq.~\eqref{eq:provider commitment value}.

\textbf{The recipient's commitment modeling.}
Abstracting the provider's influence using a single time/probability pair reduces the complexity and communication 
% cost of the negotiation 
between the two agents,
and prior work has also shown that such abstraction, by leaving other time steps unconstrained, helps the provider handle uncertainty in its environment \cite{zhang2016commitment, zhang2020semantics}. 
Specifying just a single time/probability pair, however, increases the uncertainty of the recipient.
Given
% a 
commitment $c$, the recipient creates an approximation $\widehat{P}^{\rm r}_u(c)$ of influence $P^{\rm r}_u$, where $\widehat{P}^{\rm r}_u(c)$ hypothesizes the flipping probabilities at other timesteps. 
Formally, given $\widehat{P}^{\rm r}_u(c)$, let $\widehat{M}^{\rm r}(c)$ be the recipient's approximate model that differs from $M^{\rm r}$ only in terms of the dynamics of $u$.
The recipient's value of commitment $c$ is defined to be the optimal value of the initial state in $\widehat{M}^{\rm r}(c)$:
\begin{align}\label{eq:recipient commitment value}
\textstyle
    v^{\rm r}(c) =  \max_{\pi^{\rm r} \in \Pi^{\rm r}} V^{\pi^{\rm r}}_{\widehat{M}^{\rm r}(c)}(s^{\rm r}_0).
\end{align}
We call $v^{\rm r}(c)$ the recipient's commitment value function, and $\pi^{\rm r}(c)$ the recipient's policy maximizing Eq.~\eqref{eq:recipient commitment value} 
% for commitment $c$
.

Previous work~\cite{witwicki2010influence,zhang2020maintenance} has chosen an intuitive and straightforward strategy for the recipient to create $\widehat{P}^{\rm r}_u(c)$, which models the flipping with a single branch at the commitment time with the commitment probability.
% This strategy reduces the recipient's reasoning complexity by modeling fewer possible trajectories than considering branches at every time would.
In this paper, we adopt this commitment modeling strategy in Eq.~\eqref{eq:recipient commitment value} for the recipient, where the strategy determines the transition function of $\widehat{M}^{\rm r}(c)$ through $\widehat{P}^{\rm r}_u(c)$. 
%ED: Where is this strategy encoded in equation 3?

\textbf{The optimal commitment.}
Let $\mathcal{T} = \{1, 2,...,H\}$ be the space of possible commitment times, $[0,1]$ be the continuous commitment probability space, and $v^{\rm p+r} = v^{\rm p} + v^{\rm r}$ be the joint commitment value function.
The optimal commitment is a feasible commitment that maximizes the joint value, i.e.
\begin{align}\label{eq:optimal commitment}
\textstyle
    c^* = \argmax_{{\rm feasible}~ c\in\mathcal{T}\times[0,1]} v^{\rm p+r}(c).
\end{align}
Since commitment feasibility is a constraint for all our optimization problems, for notational simplicity we omit it for the rest of this paper.
%For notation simplicity, we will not explicitly write out this constraint for the remainder of the paper.
A na\"ive strategy for solving the problem in Eq.~\eqref{eq:optimal commitment} is to discretize the commitment probability space, and evaluate every feasible commitment in the discretized space.
The finer the discretization is, the better the solution will be. At the same time, the finer the discretization, the larger the computational cost of evaluating all the possible commitments.
Next, we prove structural properties of the provider's and the recipient's commitment value functions that
enable us to develop algorithms that {\em efficiently} search for the exact optimal commitment.

\section{Commitment Space Structure}
\label{sec:Structure of the Commitment Space}

\subsection{Properties of the Commitment Values}
We show that, as functions of the commitment probability, both commitment value functions are monotonic and piecewise linear; the provider's commitment value function is concave, and the recipient's is convex.
Proofs of all the theorems and the lemmas are included in the appendix.

% \subsection{Properties of the Provider's Value}\label{sec:The Provider's Commitment Value}

\begin{theorem}\label{theorem:provider commitment value}
Let $v^{\rm p}(c) = v^{\rm p}(T, p)$ be the provider's commitment value as defined in Eq.~\eqref{eq:provider commitment value}.
For any fixed commitment time $T$,  $v^{\rm p}(T, p)$ is monotonically non-increasing, concave, and piecewise linear in $p$.
\end{theorem}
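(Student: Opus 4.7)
The plan is to recast Eq.~\eqref{eq:provider commitment value} as a single-parameter linear program in $p$ and read off all three properties from standard LP sensitivity theory. Because $M^{\rm p}$ is a finite-horizon MDP with states stratified by time, I would represent each stochastic Markov policy $\pi^{\rm p}$ by its state-action occupancy measure $x^{\pi^{\rm p}}$, whose entries $x^{\pi^{\rm p}}_h(s, a)$ give the probability of visiting $(s, a)$ at step $h$ starting from $s^{\rm p}_0$. The set $X$ of all achievable occupancy measures is the convex polytope cut out by the flow-conservation equalities together with $x \geq 0$, and the provider's expected return becomes the linear functional $r^\top x = \sum_{h, s, a} R^{\rm p}(s, a) x_h(s, a)$. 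Because the flip of $u$ is permanent, the commitment-semantics probability $\Pr(u^+ \in s^{\rm p}_T \mid s^{\rm p}_0, \pi^{\rm p})$ equals $g^\top x = \sum_{s \in S^{\rm p}_T,\, u^+ \in s} \sum_a x_T(s, a)$, also linear in $x$. Hence, on the feasibility interval $p \in [0, \overline{p}(T)]$,
\[
v^{\rm p}(T, p) \;=\; \max_{x \in X,\; g^\top x \geq p} r^\top x,
\]
a linear program in which $p$ appears only on the right-hand side of one inequality.

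All three properties then fall out directly. For monotonic non-increase, raising $p$ only tightens the constraint, shrinking the feasible set and hence never increasing the optimum. For concavity, given optima $x_1, x_2$ at probabilities $p_1, p_2$ and any $\lambda \in [0, 1]$, the convex combination $x_\lambda = \lambda x_1 + (1-\lambda) x_2$ lies in $X$ and satisfies $g^\top x_\lambda \geq \lambda p_1 + (1-\lambda) p_2$, so
\[
v^{\rm p}(T,\, \lambda p_1 + (1-\lambda) p_2) \;\geq\; \lambda v^{\rm p}(T, p_1) + (1-\lambda) v^{\rm p}(T, p_2).
\]
For piecewise linearity, taking the LP dual expresses $v^{\rm p}(T, p)$ as the minimum over the finitely many dual vertices of a quantity that is affine in $p$, yielding a concave piecewise linear function whose breakpoints are the $p$-values where the optimal dual vertex switches.

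The main obstacle I anticipate is establishing the equivalence $\{x^{\pi^{\rm p}} : \pi^{\rm p} \in \Pi^{\rm p}((T, p))\} = \{x \in X : g^\top x \geq p\}$, since Eq.~\eqref{eq:commitment semantics} is stated in policy form rather than in occupancy-measure form. The forward inclusion is immediate, while the reverse uses the classical finite-horizon construction that reads a stochastic Markov policy off any $x \in X$ via $\pi^{\rm p}(a \mid s) = x_h(s, a) / \sum_{a'} x_h(s, a')$ (with an arbitrary default action whenever the denominator vanishes) and verifies that the induced occupancy measure coincides with $x$. Once this bijection between policy-level and occupancy-level feasibility is stated carefully, the three properties follow from the parametric LP argument above without further calculation.
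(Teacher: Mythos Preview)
Your proposal is correct and essentially matches the paper's proof: both cast $v^{\rm p}(T,p)$ as a parametric LP in occupancy-measure variables (the paper follows Altman's constrained-MDP LP), derive monotonicity from feasible-set nesting, concavity by showing the convex combination of two optimal occupancy measures is feasible at the interpolated probability, and piecewise linearity from single-parameter LP sensitivity. The only cosmetic difference is that the paper reads off piecewise linearity from primal basic feasible solutions (writing the optimum as $\max_B r_B^\top A_B^{-1} b$ with $b$ affine in $p$) rather than from dual vertices as you do, but these are two standard renderings of the same parametric-LP fact.
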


We introduce Assumption \ref{assumption:u} that formalizes the notion that $u^+$, as opposed to $u^-$, is the value of $u$ that is desirable for the recipient, and then state the properties of the recipient's commitment value function in Theorem \ref{theorem:recipient commitment value}.
\begin{assumption}\label{assumption:u}
Let $M^{\rm r+} (M^{\rm r-})$ be defined as the recipient's MDP identical to $M^{\rm r}$ except that $u$ is always set to $u^+(u^-)$.
For any $M^{\rm r}$ and any locally-controlled feature $l^{\rm r}$, letting $s^{\rm r+} = (l^{\rm r}, u^+)$ and $s^{\rm r-} = (l^{\rm r}, u^-)$, we assume
$
    V^*_{M^{\rm r-}}( s^{\rm r-}  ) \leq V^*_{M^{\rm r+}}(s^{\rm r+}).
$
\end{assumption}

\begin{theorem} \label{theorem:recipient commitment value}
Let $v^{\rm r}(c) = v^{\rm r}(T, p)$ be the recipient's commitment value as defined in Eq.~\eqref{eq:recipient commitment value}.
For any fixed commitment time $T$,  under Assumption \ref{assumption:u}, $v^{\rm r}(T, p)$ is monotonically non-decreasing, convex, and piecewise linear in $p$.
\end{theorem}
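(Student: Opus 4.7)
The proof follows the same high-level template as Theorem~\ref{theorem:provider commitment value}: represent $v^{\rm r}(T, p)$ as a pointwise optimum over a finite family of affine functions of $p$, and then invoke standard properties of upper envelopes. The twist here is that the optimum is a maximum rather than a minimum, which produces convexity rather than concavity, and that monotonicity is nontrivial; this is where Assumption~\ref{assumption:u} enters.

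First, I would establish that for any fixed deterministic recipient policy $\pi^{\rm r}$, the value $V^{\pi^{\rm r}}_{\widehat{M}^{\rm r}(c)}(s^{\rm r}_0)$ is affine in $p$ with $T$ held fixed. The key structural fact is that $\widehat{P}^{\rm r}_u(c)$ has a single $p$-dependent transition, namely the Bernoulli event at time $T$ that flips $u^-$ to $u^+$ with probability $p$; before $T$, $u\equiv u^-$ deterministically, and after $T$ the value of $u$ is frozen. Conditioning on this one event yields
\[
V^{\pi^{\rm r}}_{\widehat{M}^{\rm r}(c)}(s^{\rm r}_0) = A_{\pi^{\rm r}} + p\cdot B_{\pi^{\rm r}} + (1-p)\cdot C_{\pi^{\rm r}},
\]
where $A_{\pi^{\rm r}}$ aggregates the expected pre-$T$ reward and $B_{\pi^{\rm r}}$, $C_{\pi^{\rm r}}$ aggregate the expected tail rewards conditional on $u=u^+$ respectively $u=u^-$ at time $T$; none of these three quantities depends on $p$. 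Maximizing over the finite set of deterministic policies then exhibits $v^{\rm r}(T,\cdot)$ as the upper envelope of finitely many affine functions, so it is piecewise linear and convex.

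For monotonicity, fix $p_0\in[0,1]$ and let $\pi^*$ be an optimal recipient policy at $(T,p_0)$ (without loss of generality tail-optimal, since replacing its post-$T$ behavior by an optimal tail policy cannot decrease the value). Because the recipient observes $u$, and because under $\widehat{M}^{\rm r}(c)$ the tail dynamics from time $T$ onward coincide with those of $M^{\rm r+}$ on $u^+$ states and of $M^{\rm r-}$ on $u^-$ states, Bellman optimality at time $T$ forces $\pi^*$ to execute the corresponding optimal tail policy in each case. Letting $D_{\pi^*}$ denote the (pre-$T$, hence $p$-independent) distribution of the local feature $l^{\rm r}$ at time $T$ induced by $\pi^*$, we obtain $B_{\pi^*}=\E_{l^{\rm r}\sim D_{\pi^*}}[V^*_{M^{\rm r+}}(l^{\rm r},u^+)]$ and $C_{\pi^*}=\E_{l^{\rm r}\sim D_{\pi^*}}[V^*_{M^{\rm r-}}(l^{\rm r},u^-)]$. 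Assumption~\ref{assumption:u} gives $V^*_{M^{\rm r+}}(l^{\rm r},u^+)\ge V^*_{M^{\rm r-}}(l^{\rm r},u^-)$ pointwise in $l^{\rm r}$, hence $B_{\pi^*}\ge C_{\pi^*}$; that is, $\pi^*$'s affine value function has non-negative slope in $p$. Therefore for any $p\ge p_0$,
\[
v^{\rm r}(T,p) \ge A_{\pi^*} + p\cdot B_{\pi^*} + (1-p)\cdot C_{\pi^*} \ge A_{\pi^*} + p_0\cdot B_{\pi^*} + (1-p_0)\cdot C_{\pi^*} = v^{\rm r}(T,p_0),
\]
which is the monotonicity claim.

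The main delicate point is isolating exactly where $p$ enters: since the pre-$T$ sample path of the recipient involves only the locally-controlled features together with the deterministic value $u=u^-$, the law of the local state at time $T$ under any policy is independent of $p$, which is what makes $A_{\pi^{\rm r}},B_{\pi^{\rm r}},C_{\pi^{\rm r}}$ well-defined $p$-free constants. A secondary technical step, again via Bellman optimality at time $T$, is legitimizing the replacement of $\pi^*$'s tail by the optimal policies of $M^{\rm r\pm}$ so that Assumption~\ref{assumption:u} can be applied pointwise in $l^{\rm r}$ inside the expectation.
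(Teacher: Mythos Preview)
Your proof is correct. The convexity and piecewise-linearity argument---express $v^{\rm r}(T,\cdot)$ as the pointwise maximum over finitely many deterministic policies of affine functions of $p$---is exactly the paper's argument, up to the cosmetic difference that you split off the pre-$T$ reward as a separate constant $A_{\pi^{\rm r}}$ while the paper folds it into both branches.

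For monotonicity, however, you take a genuinely different route. The paper first proves an auxiliary Lemma (call it Lemma~\ref{lemma:M+-}): under Assumption~\ref{assumption:u}, $V^*_{M^{\rm r-}}(s^{\rm r}_0)\le V^*_{M^{\rm r}}(s^{\rm r}_0)$ for \emph{any} dynamics $P^{\rm r}_u$. It then derives the slope inequality $v^{\pi^*}_{T,1}\ge v^{\pi^*}_{T,0}$ by contradiction, since a negative slope would force $v^{\rm r}(T,p)<V^*_{M^{\rm r-}}(s^{\rm r}_0)$, violating that lemma. You instead invoke Bellman optimality at time $T$ to replace the tail of $\pi^*$ by the optimal tails in $M^{\rm r+}$ and $M^{\rm r-}$, and then apply Assumption~\ref{assumption:u} \emph{pointwise} in $l^{\rm r}$ under the pre-$T$ occupancy $D_{\pi^*}$ to get $B_{\pi^*}\ge C_{\pi^*}$ directly. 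Your argument is shorter and avoids both the auxiliary lemma and the contradiction; the paper's detour has the modest compensating benefit that Lemma~\ref{lemma:M+-} is stated for arbitrary (even multi-branch) $P^{\rm r}_u$, which is a slightly more general fact than what the theorem itself requires. Your ``WLOG tail-optimal'' caveat correctly handles the edge cases $p_0\in\{0,1\}$ where one of the two branches is unreached.
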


\subsection{Efficient Optimal Commitment Search}
\label{sec:Efficient Optimal Commitment Search}
% Here we show how the structure in the recipient's and provider's value functions presented above leads to a reduced search space for optimal commitments. This, in turn, allows for an efficient \emph{centralized} search algorithm for optimal commitments that we will use to benchmark the decentralized algorithms we develop in Section \ref{sec:Commitment Queries}. 

As an immediate consequence of Theorems \ref{theorem:provider commitment value} and \ref{theorem:recipient commitment value}, the joint commitment value is piecewise linear in the probability, and any local maximum for a fixed commitment time $T$ can be attained by a probability at the extremes of zero and $\overline{p}(T)$, or where the slope of the provider's commitment value function changes.
We refer to these probabilities as the provider's {\em linearity breakpoints}, or breakpoints for short.
Therefore, one can solve the problem in Eq.~\eqref{eq:optimal commitment} to find an optimal commitment by searching only over these  breakpoints, as formally stated in Theorem \ref{theorem: breakpoints}.

\begin{theorem}\label{theorem: breakpoints}
Let $\mathcal{P}(T)$ be the provider's  breakpoints for a fixed commitment time $T$.
Let $\mathcal{C} = \{(T,p): T\in\mathcal{T}, p\in\mathcal{P}(T)\}$ be the set of commitments in which the probability is a provider's breakpoint.
We have
\begin{align*}
\textstyle
    \max_{c\in\mathcal{T}\times[0,1]} v^{\rm p+r}(c) =
    \max_{c\in\mathcal{C}} v^{\rm p+r}(c).
\end{align*}
\end{theorem}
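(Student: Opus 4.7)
The plan is to leverage Theorems~\ref{theorem:provider commitment value} and~\ref{theorem:recipient commitment value}: for any fixed $T$, the provider's value $v^{\rm p}(T,\cdot)$ is concave piecewise linear while the recipient's value $v^{\rm r}(T,\cdot)$ is convex piecewise linear on $[0,\overline{p}(T)]$, so the joint $v^{\rm p+r}(T,\cdot)$ is piecewise linear but in general neither concave nor convex. First I would fix $T$ and partition $[0,\overline{p}(T)]$ at the provider's breakpoints $\mathcal{P}(T)$, which (as the paragraph immediately preceding the theorem indicates) are taken to include the interval endpoints $0$ and $\overline{p}(T)$ together with all interior probabilities where the slope of $v^{\rm p}(T,\cdot)$ changes. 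On each closed subinterval $[p_i,p_{i+1}]$ between consecutive breakpoints, $v^{\rm p}(T,\cdot)$ is affine by the definition of a breakpoint.

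Next I would invoke the elementary fact that an affine function plus a convex function is convex: since the restriction of $v^{\rm r}(T,\cdot)$ to $[p_i,p_{i+1}]$ is convex, $v^{\rm p+r}(T,\cdot)$ is convex on that subinterval. A convex function on a closed interval attains its maximum at one of the two endpoints, so
\[
\max_{p\in[p_i,p_{i+1}]} v^{\rm p+r}(T,p) \;=\; \max\bigl\{v^{\rm p+r}(T,p_i),\, v^{\rm p+r}(T,p_{i+1})\bigr\},
\]
and both $p_i,p_{i+1}\in\mathcal{P}(T)$. Ranging over all subintervals in the partition collapses the continuous inner problem $\max_{p\in[0,\overline{p}(T)]} v^{\rm p+r}(T,p)$ to the discrete one $\max_{p\in\mathcal{P}(T)} v^{\rm p+r}(T,p)$.

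Finally, I would take the maximum of both sides over $T\in\mathcal{T}$ and combine with the commitment-feasibility constraint (which restricts $p$ to $[0,\overline{p}(T)]$ and is absorbed into $\mathcal{P}(T)$ through its inclusion of the upper endpoint $\overline{p}(T)$). This directly yields the claimed equality between the continuous max over $\mathcal{T}\times[0,1]$ and the discrete max over $\mathcal{C}$. There is no real obstacle: the whole argument reduces to the one-line observation that affine-plus-convex is convex, followed by the endpoint-maximum property of convex functions on a closed interval. The only point that needs to be stated carefully is the convention that $\mathcal{P}(T)$ contains the feasibility boundary $\overline{p}(T)$ and the trivial point $0$ in addition to the slope-change points, since without those two endpoints the discrete max could strictly underestimate the continuous one.
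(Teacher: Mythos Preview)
Your proposal is correct and follows essentially the same approach as the paper, which also derives the result as an immediate consequence of Theorems~\ref{theorem:provider commitment value} and~\ref{theorem:recipient commitment value} by noting that the joint value is piecewise linear with local maxima attainable at the provider's breakpoints. Your version is in fact more explicit than the paper's terse appendix proof: by spelling out the ``affine plus convex is convex, hence maximum at an endpoint'' step, you make clear why the \emph{provider's} breakpoints alone suffice and the recipient's breakpoints need not be included.
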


Further, the property of convexity/concavity assures that, for any commitment time, the commitment value function is linear in a probability interval $[p_l, p_u]$ if and only if the value of an intermediate commitment probability $p_m\in(p_l,p_u)$ is the linear interpolation of the two extremes.
This enables us to adopt the binary search procedure in Algorithm \ref{algo:binary_search_for_breakpoints} to efficiently identify the provider's breakpoints.
For any fixed commitment time $T$, the strategy first computes the maximum feasible probability $\overline{p}(T)$.
Beginning with the entire interval of $[p_l,p_u]=[0, \overline{p}(T)]$, it recursively checks the linearity of an interval by checking the middle point, $p_m = (p_l+p_u)/2$. The recursion continues with the two halves, $[p_l, p_m]$ and $[p_m, p_u]$, only if the commitment value function is verified to be nonlinear in interval $[p_l,p_u]$.
Stepping through $T\in[ht]$ and doing the above binary search for each will find all probability breakpoint commitments $\mathcal{C}$.
\begin{algorithm}[t]
\caption{Binary search for breakpoints}
\label{algo:binary_search_for_breakpoints}
\SetAlgoLined
\KwIn{The provider's $M^{\rm p}$, commitment time $T$.}
\KwOut{ $\mathcal{P}(T)$: the provider's breakpoints for $T$.}
$\overline{p}(T)$ $\gets$ the maximum feasible probability for $T$\\
\texttt{q} $\gets$ A FIFO queue of probability intervals\\
 \texttt{q.push}$\left( [0, \overline{p}(T)] \right)$\\
Compute and save the provider's commitment value for $p = 0, \overline{p}(T)$, i.e. $v^{\rm p}(T,0)$ and  $v^{\rm p}(T, \overline{p}(T))$\\
Initialize $\mathcal{P}(T)\gets \{\}$\\

\While{\rm \texttt{q} not empty}{
    
    $[p_l,p_u] \gets \texttt{q.pop}()$;
    $\mathcal{P}(T)$ $\gets$ $\mathcal{P}(T)$ $\cup$ $\{p_l, p_u\}$ \\
    $p_m \gets (p_l+p_u)/2$; compute and save $v^{\rm p}(T,p_m)$\\
    \If{\rm $v^{\rm p}(T,p_m)$ is not the linear interpolation of $v^{\rm p}(T,p_l)$ and $v^{\rm p}(T,p_u)$}{
    \texttt{q.push}$\left( [p_l, p_m] \right)$;
    \texttt{q.push}$\left( [p_m, p_u] \right)$\\
    }
}
\end{algorithm}

This allows for an efficient {\em centralized} procedure to search for the optimal commitment: construct $\mathcal{C}$ as just described, compute the value of each $c\in\mathcal{C}$ for both the provider and recipient, and return the $c$ with the highest summed value.
We will use it to benchmark the decentralized algorithms we develop in Section \ref{sec:Commitment Queries}.

\section{Commitment Queries}
\label{sec:Commitment Queries}
We now develop a querying approach for eliciting the jointly-preferred (cooperative) commitment in a decentralized setting where neither agent has full knowledge about the other's environment.
In our querying approach, one agent poses a {\em commitment query} consisting of information about a set of feasible commitments, and the other responds by selecting the commitment from the set that best satisfies their joint preferences.
To limit communication cost and response time, the set of commitments in the query is often small.
A query poser thus should optimize its choices of commitments to include, and the responder's choice should reflect joint value.
In general, either the provider or recipient could be responsible for posing the query, and the other for responding, and in future work we will consider how these roles could be dynamically assigned.
In this paper, though, we always assign the provider to be the query poser and the recipient to be the responder.  We do this because the agents must assuredly be able to adopt the responder's selected choice, which means it must be feasible, and per Section~\ref{sec:background}, only the provider knows which commitments are feasible.
%the set of feasible commitments is known only to the provider.
%To maximize communication effectiveness, the poser must only offer feasible commitments, so that the agents quickly converge on the responder's selected commitment.
%Only the provider can do this as the poser.

Specifically, we consider a setting where the provider fully knows its MDP,
and where its uncertainty about the recipient's MDP is modeled as a distribution $\mu$ over a finite set of $N$ candidate MDPs containing the recipient's true MDP.
Given uncertainty $\mu$, the Expected Utility (EU) of a feasible commitment $c$ is defined as 
% the expected joint value of the commitment under $\mu$
:
\begin{align}\label{eq:Expected Utility}
    EU(c;\mu) = \E_{\mu} \left[ v^{\rm p+r}(c) \right],
\end{align}
where the expectation is w.r.t. the uncertainty about the recipient's MDP.
If the provider had to singlehandedly select a commitment based on its uncertainty $\mu$, the best commitment is the one that maximizes the expected utility:
\begin{align}\label{eq:optimal EU prior to query}
\textstyle
    c^*(\mu) = \argmax_{c} EU(c;\mu).
    % ,~\text{with } EU^*(\mu) = \max_{c} EU(c;\mu).
\end{align}
But through querying, the provider is given a chance to refine its knowledge about the recipient's actual MDP.
Formally, the provider's commitment query $\mathcal{Q}$ consists of a finite number $k=|\mathcal{Q}|$ of feasible commitments. The provider offers these choices to the recipient, where the provider also annotates each choice with the expected local value of its optimal policy respecting the commitment (Eq.~\eqref{eq:provider commitment value}). 
The recipient computes (using Eq.~\eqref{eq:recipient commitment value}) its own expected value for each commitment offered in the query, and adds that to the annotated value from the provider. 
It responds with the commitment that maximizes the summed value (with ties broken by selecting the smallest indexed) to be the commitment the two agents agree on.
Therefore, our motivation for a small query size $k$ is two-fold: it avoids large communication cost; and it induces short response time of the recipient evaluating each commitment in the query.

More formally, let $\mathcal{Q} \rightsquigarrow c$ denote the recipient's response that selects $c\in\mathcal{Q}$.
With the provider's prior uncertainty $\mu$, the posterior distribution given the response is denoted as $\mu~|~\mathcal{Q} \rightsquigarrow c$, which can be computed by Bayes' rule.
When the query size $k=|\mathcal{Q}|$ is limited, the response usually cannot fully resolve the provider's uncertainty.
In that case, the value of a query $\mathcal{Q}$ is the EU with respect to the posterior distribution averaged over all the commitments in the query being a possible response,
and, consistent with prior work~\cite{viappiani2010optimal},
% ,zhang2017approximately
we refer to it as the query's Expected Utility of Selection (EUS):
\begin{align*}
    EUS(\mathcal{Q};\mu) = \E_{\mathcal{Q}\rightsquigarrow c;\mu} \left[ EU(c; \mu~|~\mathcal{Q}\rightsquigarrow c) \right]. 
\end{align*}
Here, the expectation is with respect to the recipient's response under $\mu$.
The provider's \emph{querying problem} thus is to formulate a query $\mathcal{Q}\subseteq\mathcal{T} \times [0,1]$ consisting of $|\mathcal{Q}|=k$ feasible commitments that maximizes EUS:
\begin{align}\label{eq:EUS maximization}
\textstyle
    \max_{\mathcal{Q} \subseteq \mathcal{T}\times[0,1], |\mathcal{Q}|=k} EUS(\mathcal{Q};\mu).
\end{align}

Importantly, we can show that $EUS(\mathcal{Q};\mu)$ is a submodular function of $\mathcal{Q}$, as formally stated in Theorem \ref{theorem: submodularity}.
Submodularity serves as the basis for a greedy optimization algorithm~\cite{nemhauser1978analysis}, which we will describe after Theorem \ref{theorem: optimal discretization}.
\begin{theorem}\label{theorem: submodularity}
For any uncertainty $\mu$, $EUS(\mathcal{Q};\mu)$ is a submodular function of $\mathcal{Q}$. 
% That is, given two queries $\mathcal{Q}\subseteq \mathcal{Q}'$, commitment $c\notin\mathcal{Q}$, we have:
% \begin{align*}
%     EUS(\mathcal{Q} \cup \{c\};\mu) - EUS(\mathcal{Q};\mu) 
%     \geq
%     EUS(\mathcal{Q}' \cup \{c\};\mu) - EUS(\mathcal{Q}';\mu)
% \end{align*}
\end{theorem}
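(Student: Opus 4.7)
My plan is to rewrite $EUS(\mathcal{Q};\mu)$ as an expectation of a maximum, and then show that the maximum of a value function over a set is submodular in the set for each realization, so the expectation (a non-negative linear combination) is also submodular.

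\emph{Step 1: Rewrite EUS as an expected maximum.} Let the provider's uncertainty $\mu$ be supported on a finite collection of candidate recipient MDPs $\{M^{\rm r}_1,\dots,M^{\rm r}_N\}$, and for candidate $m$ write $v^{\rm p+r}_m(c)=v^{\rm p}(c)+v^{\rm r}_m(c)$ for the joint commitment value when the recipient's MDP is $M^{\rm r}_m$. Because the provider annotates each commitment in $\mathcal{Q}$ with its own value and the recipient responds by picking the commitment maximizing the summed value, the response conditioned on the true recipient MDP being $M^{\rm r}_m$ is deterministically $c^\star(\mathcal{Q},m)=\argmax_{c\in\mathcal{Q}} v^{\rm p+r}_m(c)$ (with ties broken by smallest index). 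For each $c\in\mathcal{Q}$ let $A_c=\{m:c^\star(\mathcal{Q},m)=c\}$; these sets partition the support of $\mu$, and by Bayes' rule $\mu\mid\mathcal{Q}\rightsquigarrow c$ is $\mu$ restricted and renormalized to $A_c$. Substituting into the definition of $EUS$ and cancelling the normalizing factor gives
\begin{align*}
EUS(\mathcal{Q};\mu)=\sum_{m=1}^{N}\mu(m)\,v^{\rm p+r}_m\bigl(c^\star(\mathcal{Q},m)\bigr)=\sum_{m=1}^{N}\mu(m)\,\max_{c\in\mathcal{Q}} v^{\rm p+r}_m(c).
\end{align*}

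\emph{Step 2: Submodularity of the max for each fixed $m$.} Define $f_m(\mathcal{Q})=\max_{c\in\mathcal{Q}} v^{\rm p+r}_m(c)$. For any $\mathcal{Q}_1\subseteq\mathcal{Q}_2$ and any candidate commitment $c\notin\mathcal{Q}_2$, the marginal gain satisfies
\begin{align*}
f_m(\mathcal{Q}_i\cup\{c\})-f_m(\mathcal{Q}_i)=\max\bigl(0,\ v^{\rm p+r}_m(c)-f_m(\mathcal{Q}_i)\bigr),
\end{align*}
which is non-increasing in $i$ because $f_m(\mathcal{Q}_1)\le f_m(\mathcal{Q}_2)$. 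Hence $f_m$ is submodular (and in fact also monotone).

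\emph{Step 3: Conclude.} Since $EUS(\cdot;\mu)=\sum_m \mu(m)\,f_m(\cdot)$ is a non-negative linear combination of submodular functions, it is itself submodular, proving the theorem.

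The only subtle step is Step~1, where one must verify that the Bayesian formulation of $EUS$ coincides with the expected-max representation; the rest reduces to a textbook fact about coverage-style set functions. A small extra care is needed to handle ties in $\argmax$ consistently (the stated tie-breaking rule makes $c^\star(\mathcal{Q},m)$ well defined and does not affect the value $\max_{c\in\mathcal{Q}}v^{\rm p+r}_m(c)$), and to confirm that feasibility of the commitments in $\mathcal{Q}$ plays no role in the argument beyond ensuring that the summands are well defined.
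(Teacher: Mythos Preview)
Your proof is correct and follows essentially the same route as the paper: the paper observes that the recipient's deterministic ``pick the best'' rule is precisely the noiseless response model of Viappiani and Boutilier (2010) and cites their submodularity result, whereas you spell out that same argument directly (noiseless response $\Rightarrow$ $EUS$ is an expected pointwise max $\Rightarrow$ submodular as a non-negative combination of max-of-set functions). The only difference is presentational: the paper defers the details to the reference, while you give a self-contained derivation.
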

Submodularity means that adding a commitment to the query can increase the EUS, but the increase is diminishing with the size of the query. 
An upper bound on the EUS of any query of any size $k$ can be obtained when $k\geq N$ such that the query can include the optimal commitment of each candidate recipient's MDP, i.e. 
\begin{align}\label{eq:EUS upper bound}
\textstyle
    \overline{EUS} = \E_{\mu} \left[ \max_{c\in\mathcal{T}\times[0,1]} v^{\rm p+r}(c) \right ].
\end{align}
% Upper bound $\overline{EUS}$ can be computed with the centralized algorithm we described in Section \ref{sec:Efficient Optimal Commitment Search}.
As the objective of Eq. \eqref{eq:EUS maximization} increases with size $k$, in practice the agents could choose size $k$ large enough to meet some predefined EUS.  
We will empirically investigate the effect of the choice of $k$ in Section \ref{sec:empirical}.

\paragraph{Structure of the Commitment Query Space.}
% \label{sec:Structure of the Commitment Query Space}
%ED: Technically, we shouldn't have a section 5.1 without a section 5.2....
Due to the properties of individual commitment value functions proved in Section \ref{sec:Structure of the Commitment Space}, the expected utility $EU(c; \mu)$ defined in Eq.~\eqref{eq:Expected Utility}, as calculated by the provider alone, becomes a summation of the non-increasing provider's commitment value function and the (provider-computed) weighted average of the non-decreasing recipient's commitment value functions.
With the same reasoning as for Theorem \ref{theorem: breakpoints}, the optimality of the breakpoint commitments can be generalized to any uncertainty, as formalized in Lemma \ref{lemma: breakpoints}.

\begin{lemma}\label{lemma: breakpoints}
Let $\mathcal{C}$ be defined as in Theorem \ref{theorem: breakpoints}. We have
$
    \max_{c\in\mathcal{T}\times[0,1]} EU(c;\mu) =
    \max_{c\in\mathcal{C}} EU(c;\mu).
$
\end{lemma}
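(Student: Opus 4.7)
The plan is to mimic the proof of Theorem \ref{theorem: breakpoints}, pushing the expectation over candidate recipient MDPs through the structural properties we already have. Since the provider's MDP is fully known, $v^{\rm p}(c)$ does not depend on the uncertainty $\mu$, so by linearity of expectation
\begin{align*}
EU(c;\mu) = v^{\rm p}(c) + \E_\mu\bigl[v^{\rm r}(c)\bigr].
\end{align*}
Fix a commitment time $T$. Theorem \ref{theorem:provider commitment value} says $v^{\rm p}(T,\cdot)$ is concave and piecewise linear on $[0,\overline{p}(T)]$ with breakpoint set $\mathcal{P}(T)$ (including the endpoints $0$ and $\overline{p}(T)$). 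Theorem \ref{theorem:recipient commitment value}, applied to each candidate recipient MDP in the support of $\mu$, gives a family of convex piecewise linear functions of $p$; since $\mu$ is a probability distribution, $p\mapsto \E_\mu[v^{\rm r}(T,p)]$ is a finite non-negative convex combination of such functions, hence itself convex and piecewise linear in $p$.

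The heart of the argument is then to pick two consecutive provider breakpoints $p_l<p_u$ in $\mathcal{P}(T)$ and observe that $v^{\rm p}(T,\cdot)$ is affine on $[p_l,p_u]$, so $EU((T,\cdot);\mu)$ restricted to $[p_l,p_u]$ is the sum of an affine function and a convex function, and therefore convex on that interval. A convex function on a closed interval attains its maximum at an endpoint, both of which lie in $\mathcal{P}(T)$. The subintervals between consecutive breakpoints cover $[0,\overline{p}(T)]$, so the maximum of $EU((T,\cdot);\mu)$ over the full feasible probability range is attained at some $p\in\mathcal{P}(T)$. Maximizing over $T\in\mathcal{T}$ then yields $\max_{c\in\mathcal{T}\times[0,1]} EU(c;\mu)=\max_{c\in\mathcal{C}} EU(c;\mu)$.

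The only new ingredient relative to Theorem \ref{theorem: breakpoints} is the observation that convexity and piecewise linearity of the recipient's value function are preserved under the $\mu$-expectation, which is a routine property of finite mixtures of convex piecewise linear functions. I do not expect this to present a real obstacle; the main thing to be careful about is that $\mathcal{P}(T)$ is taken to include the endpoints $0$ and $\overline{p}(T)$ (as in Algorithm \ref{algo:binary_search_for_breakpoints}), without which the union of subintervals between consecutive breakpoints would not cover the full probability range.
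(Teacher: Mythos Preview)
Your proposal is correct and follows essentially the same approach as the paper, which simply remarks that ``with the same reasoning as for Theorem~\ref{theorem: breakpoints}'' the result extends to any uncertainty $\mu$ because $EU(c;\mu)$ is the sum of the provider's value and a weighted average of recipient values. Your write-up spells out the details the paper leaves implicit (closure of convexity and piecewise linearity under finite mixtures, and the endpoint argument on each subinterval between consecutive provider breakpoints), but the underlying idea is identical.
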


As a consequence of Lemma \ref{lemma: breakpoints}, for EUS maximization, there is no loss in only considering the provider's breakpoints, as formally stated in Theorem \ref{theorem: optimal discretization}.

\begin{theorem}\label{theorem: optimal discretization}
% Let $\mathcal{C}$ be defined in the same manner as in Theorem \ref{theorem: breakpoints}.
For any query size $k$ and uncertainty $\mu$, we have
\begin{align*}
% \textstyle
  \max_{\mathcal{Q}\subseteq \mathcal{T}\times [0,1], |\mathcal{Q}|=k} EUS(\mathcal{Q};\mu) 
  =
  \max_{\mathcal{Q}\subseteq \mathcal{C}, |\mathcal{Q}|=k} EUS(\mathcal{Q};\mu).
\end{align*}
\end{theorem}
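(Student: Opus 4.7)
The plan is to prove the nontrivial direction: given any query $\mathcal{Q}\subseteq\mathcal{T}\times[0,1]$ of size $k$, construct a query $\mathcal{Q}'\subseteq\mathcal{C}$ of size $k$ with $EUS(\mathcal{Q}';\mu)\geq EUS(\mathcal{Q};\mu)$; the reverse inequality is immediate since $\mathcal{C}\subseteq\mathcal{T}\times[0,1]$. I would do this by replacing the elements of $\mathcal{Q}$ one at a time: for each $c_j=(T_j,p_j)\in\mathcal{Q}$ that is not already a breakpoint commitment, swap it for one of the two adjacent provider's breakpoints $(T_j,p_l)$ or $(T_j,p_u)$ in $\mathcal{C}$, where $p_l,p_u\in\mathcal{P}(T_j)$ satisfy $p_l\leq p_j\leq p_u$ with no breakpoint strictly in between. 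Such $p_l,p_u$ always exist because $0$ and $\overline{p}(T_j)$ lie in $\mathcal{P}(T_j)$ and $p_j\in[0,\overline{p}(T_j)]$ by feasibility of $c_j$.

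The key lemma is that, holding $T_j$ and every other element of $\mathcal{Q}$ fixed, the function
\begin{align*}
    g(p)\;=\;EUS\bigl((\mathcal{Q}\setminus\{c_j\})\cup\{(T_j,p)\};\mu\bigr)
\end{align*}
is convex in $p$ on $[p_l,p_u]$. Writing $\mu$ as a distribution over candidate recipient MDPs $\{M_i\}_{i=1}^N$ with weights $\mu_i$, and using the recipient's best-response semantics, EUS admits the closed form
\begin{align*}
    g(p)\;=\;\sum_{i=1}^N \mu_i\,\max\!\bigl(\beta_i,\; v^{\rm p+r}_{M_i}(T_j,p)\bigr),
\end{align*}
where $\beta_i=\max_{c'\in\mathcal{Q}\setminus\{c_j\}}v^{\rm p+r}_{M_i}(c')$ is constant in $p$. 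On $[p_l,p_u]$ the provider's value $v^{\rm p}(T_j,\cdot)$ is linear (no interior breakpoint, by Theorem~\ref{theorem:provider commitment value}), while $v^{\rm r}_{M_i}(T_j,\cdot)$ is convex by Theorem~\ref{theorem:recipient commitment value}, so $v^{\rm p+r}_{M_i}(T_j,\cdot)$ is convex as a sum of linear and convex. The pointwise maximum with the constant $\beta_i$ is then convex, and a nonnegative linear combination of convex functions is convex, so $g$ is convex on $[p_l,p_u]$.

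Convexity yields $g(p_j)\leq\max\bigl(g(p_l),g(p_u)\bigr)$, so replacing $c_j$ with whichever endpoint achieves that max does not decrease EUS. If the chosen endpoint is already in the current query, the query shrinks by one; because EUS is monotone in the query set (adding any commitment can only raise the inner max in each term of the expectation), I can pad with any unused breakpoint commitment to restore size $k$ without decreasing EUS. Iterating over all $k$ original elements produces the desired $\mathcal{Q}'\subseteq\mathcal{C}$.

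The main obstacle, and the real content of the argument, is the convexity of $g$: it is what justifies collapsing each replacement to a single breakpoint rather than a two-for-one split that would blow the size budget. Once convexity is in hand, the remaining bookkeeping for possibly coincident replacements and padding back up to size $k$ is routine, handled entirely by monotonicity of EUS in the query set.
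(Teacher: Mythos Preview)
Your proof is correct but follows a genuinely different route from the paper's. The paper argues via the Viappiani--Boutilier ``query iteration'' operator $T$: given any query $\mathcal{Q}=\{c_1,\dots,c_k\}$, the set $T(\mathcal{Q})=\{c^*(\mu\mid\mathcal{Q}\rightsquigarrow c_i)\}_{i=1}^{k}$ satisfies $EUS(T(\mathcal{Q});\mu)\geq EUS(\mathcal{Q};\mu)$; then Lemma~\ref{lemma: breakpoints} guarantees each posterior-optimal commitment lies in $\mathcal{C}$, so $T(\mathcal{Q})\subseteq\mathcal{C}$. Because this machinery is stated only for finite candidate sets, the paper first proves a finite version (Lemma~\ref{lemma: optimal discretization}) and then passes to the continuum via an $\epsilon$-approximation by even discretizations. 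Your argument instead works directly on the continuous space: you fix all but one coordinate and show that $p\mapsto EUS$ is convex between adjacent provider breakpoints (linear provider value plus convex recipient value, then pointwise max with a constant), so the non-breakpoint probability can be pushed to an endpoint without loss. This is more elementary in that it avoids both the external $T$-operator result and the limiting argument, and it exposes exactly which structural fact does the work (coordinate-wise convexity of EUS on breakpoint-free intervals). The paper's route, on the other hand, yields a one-shot replacement of the entire query by $T(\mathcal{Q})$ rather than an element-by-element swap, and reuses a known preference-elicitation lemma rather than re-deriving convexity from scratch. Both approaches share the same minor bookkeeping issue of possibly producing fewer than $k$ distinct elements, which is handled identically by padding and EUS monotonicity.
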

% \subsection{Efficient Commitment Query Formulation}
% \label{sec:Efficient Commitment Query Formulation}
Theorem \ref{theorem: optimal discretization} enables
% allows us to develop 
an efficient procedure for solving the query formulation problem (Eq.~\eqref{eq:EUS maximization}).
The provider first identifies its breakpoint commitments $\mathcal{C}$ and evaluates them for its MDP and each of the $N$ recipient's possible MDPs.
Due to the concavity and convexity properties, 
% commitments 
$\mathcal{C}$ can be identified and evaluated efficiently with the binary search strategy we described in Section~\ref{sec:Efficient Optimal Commitment Search}.
Finally, a size $k$ query is formulated from commitments $\mathcal{C}$ that solves the EUS maximization problem either exactly with exhaustive search, 
or approximately with greedy search \cite{viappiani2010optimal,cohn2014characterizing}. The greedy search begins with $\mathcal{Q}_0$ as an empty set and iteratively performs $\mathcal{Q}_{i}\leftarrow \mathcal{Q}_{i-1} \cup \{c_i\}$ for $i=1,...,k$, where
$
c_i =
\argmax_{c\in\mathcal{C}, c\notin\mathcal{Q}_{i-1}} 
EUS(\mathcal{Q}_{i-1} \cup \{c\}; \mu).
$
Since EUS is a submodular function of the query (Theorem~\ref{theorem: submodularity}), the greedily-formed size $k$ query $\mathcal{Q}_k$ is within a factor of $1-(\frac{k-1}{k})^k$ of the optimal EUS~\cite{nemhauser1978analysis}.
% \begin{description}
% \item [Exhaustive query search.]
% The finite EUS maximization problem can be exactly solved by exhaustively forming and evaluating each $k$-subset of breakpoint commitments, and selecting the best one.

% \item [Greedy query search.]
% The finite EUS maximization problem can be approximately solved by a greedy procedure~\cite{viappiani2010optimal,cohn2014characterizing} that iteratively grows the query by adding the breakpoint commitment that contributes maximum EUS.
% Formally, beginning with $\mathcal{Q}_0$ as an empty set, the algorithm iteratively performs $\mathcal{Q}_{i}\leftarrow \mathcal{Q}_{i-1} \cup \{c_i\}$ for $i=1,...,k$, where
% $
% c_i =
% \argmax_{c\in\mathcal{C}, c\notin\mathcal{Q}_{i-1}} 
% EUS(\mathcal{Q}_{i-1} \cup \{c\}; \mu).
% $
% Since EUS is a submodular function of the query (Theorem~\ref{theorem: submodularity}), the greedily formed size $k$ query $\mathcal{Q}_k$ is within a factor of $1-(\frac{k-1}{k})^k$ of the optimal EUS~\cite{nemhauser1978analysis}.
% \end{description}

\section{Empirical Evaluation}
\label{sec:empirical}
Our empirical evaluations focus on these questions:
\begin{itemize}
    \item For EUS maximization, how effective and efficient is the breakpoints discretization compared with alternatives?
    \item For EUS maximization, how effective and efficient is greedy query search compared with exhaustive search? 
    % \item By maximizing EUS, does the commitment querying process derive high-quality joint policies for coordination?
\end{itemize}
To answer these questions, in Section \ref{sec:Synthetic MDPs}, we conduct empirical evaluations in synthetic MDPs with minimal assumptions on the structure of transition and reward functions, and we use an environment in Section \ref{sec:Overcooked} inspired by the video game of Overcooked to evaluate the breakpoints discretization and the greedy query search in this more grounded and structured domain. 
% with structured transition and reward functions

\subsection{Synthetic MDPs}
\label{sec:Synthetic MDPs}
The provider's environment is a randomly-generated MDP. It has 10 states the provider can be in at any time step, one of which is an absorbing state denoted as $s^+$, and where the initial state is chosen from the non-absorbing states.
Feature $u$ takes the value of $u^+$ only in the absorbing state, i.e. $u^+ \in s^{\rm p}$ if and only if $s^{\rm p} = s^+$.
There are 3 actions.
For each state-action pair $(s^{\rm p},a^{\rm p})$ where $s^{\rm p}\neq s^+$, the transition function $P^{\rm p}(\cdot|s^{\rm p},a^{\rm p})$ is determined independently by filling the 10 entries with values uniformly drawn from $[0, 1]$, and normalizing $P^{\rm p}(\cdot|s^{\rm p},a^{\rm p})$.
The reward $R^{\rm p}(s^{\rm p}, a^{\rm p})$ for a non-absorbing state $s^{\rm p}\neq s^+$ is sampled uniformly and independently from $[0, 1]$, and for the absorbing state $s^{\rm p}= s^+$ is zero. Thus, the random MDPs are intentionally generated to introduce a tension for the provider between helping the recipient (but getting no further local reward) versus accumulating more local reward. (Our algorithms also work fine in cases without this tension, but the commitment search is less interesting without it because no compromise is needed.)
%ED: Revised the above

The recipient's environment
% , inspired by the random walk domains used in the planning literature~\cite{fern2004learning},
is a one-dimensional space with $10$ locations represented as integers $\{0,1,...,9\}$.
In locations $1-8$, the recipient can move right, left, or stay still.
Once the recipient reaches either end (location $0$ or $9$), it stays there.
There is a gate between locations $0$ and $1$ for which $u=u^+$ denotes the state of open and $u=u^-$ closed.
Initially, the gate is closed and the recipient starts at an initial location $L_0$.
A negative reward of $-10$ is incurred by bumping into the closed gate.
For each time step the recipient is at neither end, it gets a reward of $-1$.
If it reaches the left end (i.e. location 0), it gets a one-time reward of $r_{0}>0$.
The recipient gets a reward of 0 if it reaches the right end.
In a specific instantiation,
% of the recipient's MDP
$L_0$ and $r_{0}$ are fixed.
$L_0$ is randomly chosen from locations $1-8$ and $r_{0}$ from interval $(0,10)$ to create various MDPs for the recipient.

To generate a random coordination problem, we sample an MDP for the provider, and $N$ candidate MDPs for the recipient, setting the provider's prior uncertainty $\mu$ over the recipient's MDP to be the uniform distribution over the $N$ candidates.
The horizon for both agents is set to be $20$.
Since the left end has higher rewards than the right end, if the recipient's start position is close enough to the left end and the provider commits to opening the gate early enough with high enough probability, the recipient should utilize the commitment by checking if the gate is open by the commitment time, and pass through it if so; 
otherwise, the recipient should simply ignore the commitment and move to the right end.
The distribution for generating the recipient's MDPs is designed to include diverse preferences regarding the commitments, such that the provider's query should be carefully formulated to elicit the recipient's preference.

\paragraph{Evaluating the Breakpoints Discretization.}
The principal result from Section \ref{sec:Structure of the Commitment Space}
% , where we proved properties of the agents' value functions, 
was that the commitment probabilities to consider can be restricted to breakpoints without loss of optimality. Further, the hypothesis was that the space of breakpoints would be relatively small, allowing the search to be faster.  We now empirically confirm the optimality result, and test the hypothesis of greater efficiency, by comparing the breakpoint commitments discretization to the following alternative discretizations:
\begin{description}
\item \textit{Even discretization.}
%ED: Really could use a citation or two below for which prior work uses the even discretization.
Prior work 
% using heuristic search
~\cite{Witwicki2007CommitmentdrivenDJ}
% described in Section \ref{sec:related-work} 
discretizes the probability space up to a certain granularity.
Here, the probability space $[0,1]$ is evenly discretized as  $\{p_0, ..., p_n \}$ where $p_i =\frac{i}{n}$.
\item 
\textit{Deterministic Policy (DP) discretization.}
This discretization finds all of the probabilities of toggling feature $u$ at the commitment time that can be attained by the provider following a deterministic policy~\cite{Witwicki2007CommitmentdrivenDJ,witwicki2010influence}. 
\end{description}
For the even discretization, we consider the resolutions $n\in\{10, 20, 50\}$.
For DP, we found that the number of toggling probabilities of all the provider's deterministic policies is large, and the corresponding computational cost of identifying and evaluating them is high.
To reduce the computational cost and for fair comparison, we group the probabilities in the DP discretization that
are within $\frac{i}{n}$ of each other for $n\in\{10, 20, 50\}$.
Since the problem instances have different reward scales, to facilitate analyses we normalize for each instance the EUS with the upper bound $\overline{EUS}$ defined in Eq.~\eqref{eq:EUS upper bound} and the EUS of the optimal and greedy query of the even discretization for $k=1, n=10$.
% , which we denote as $\underline{EUS}$.
% That is, a value for EUS is normalized as $(EUS-\underline{EUS} ) / (\overline{EUS}-\underline{EUS})$.
\begin{figure}[tb]
\centering
\begin{subfigure}{.23\textwidth}
  \centering
  \includegraphics[width=\linewidth]{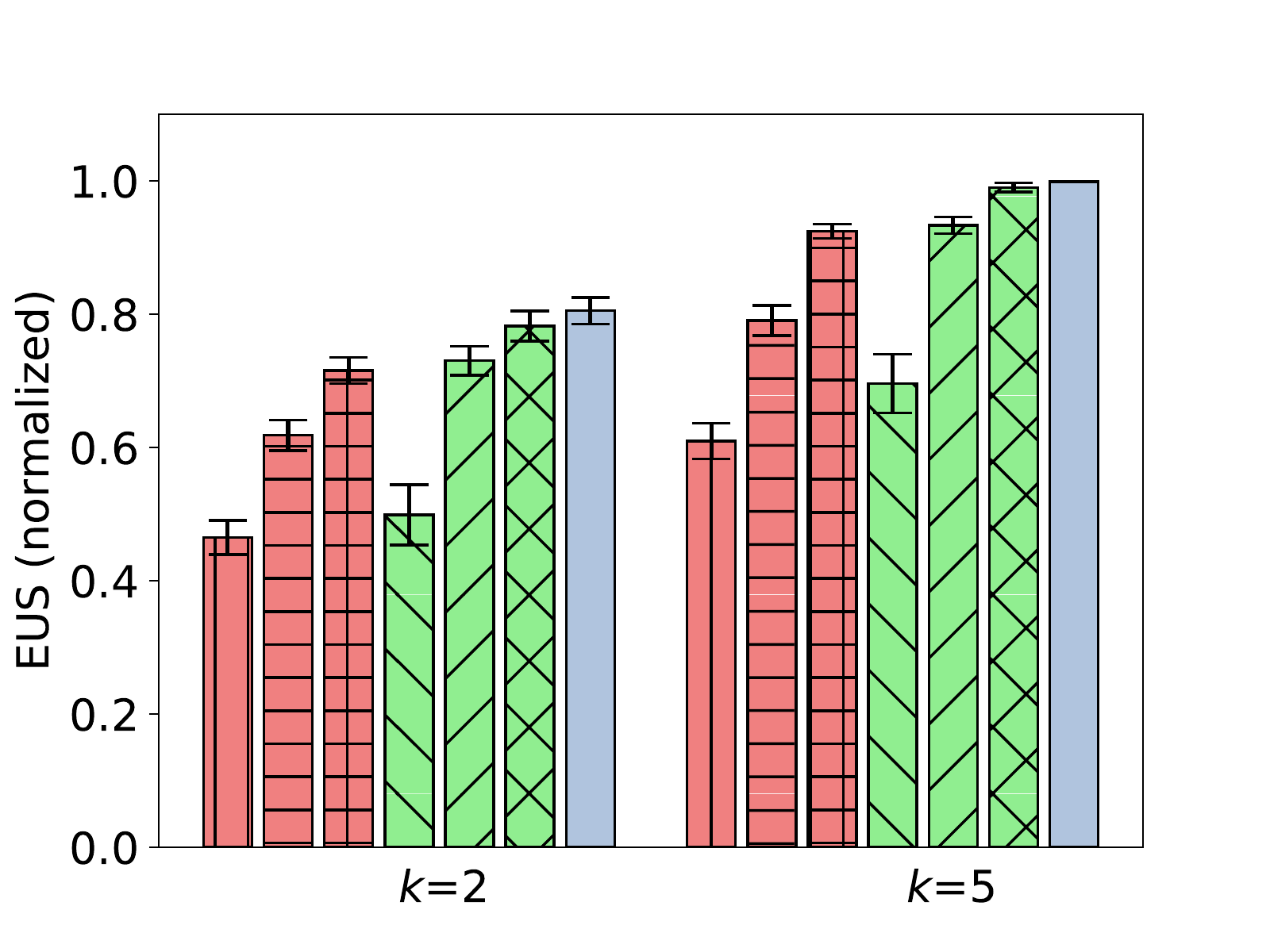}
\end{subfigure}
\begin{subfigure}{.21\textwidth}
  \centering
  \includegraphics[width=\linewidth]{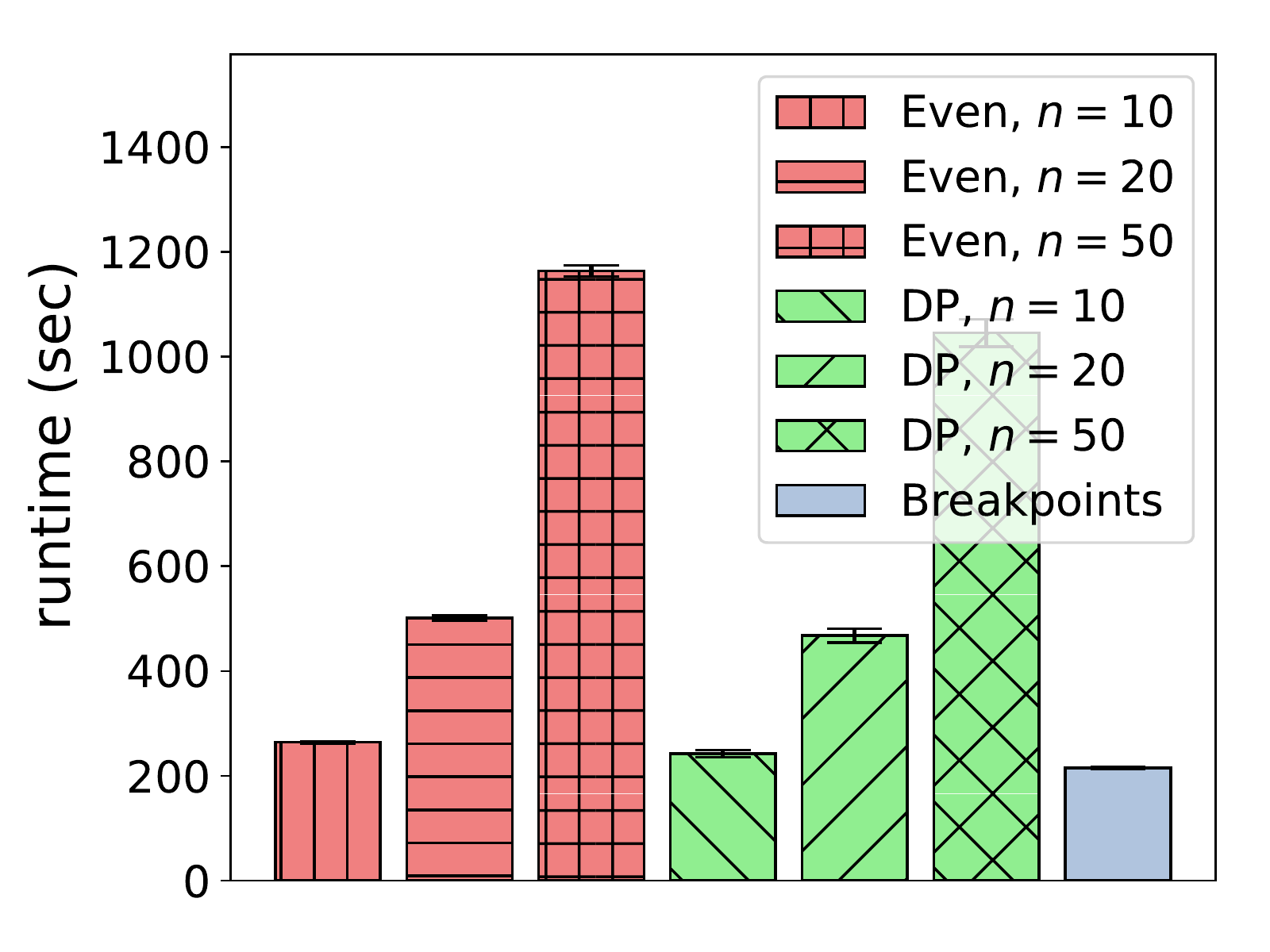}
\end{subfigure}
\caption{Means and standard errors of the EUS (left) and runtime (right) of the  discretizations in Synthetic MDPs.}
% \vspace{-3mm}
\label{fig:EUS_runtime_vs_baselines}
\end{figure}

\begin{table}[t]
\centering
\caption{Averaged discretization size per commitment time (mean and standard error) in Synthetic MDPs.}
\label{table:discretization size}
\begin{tabular}{c|ccc} 
\toprule
            & $n=10$                      & $n=20$                      & $n=50$                       \\
\hline
Even        & $7.8 \pm 0.1$ & $15.1\pm0.1$ & $37.1\pm0.1$  \\
DP          & $6.1 \pm 0.2$ & $12.0\pm0.3$ & $26.5\pm0.7$  \\
\hline
Breakpoints & \multicolumn{3}{c}{$10.0\pm0.1$}                                       \\
\bottomrule
\end{tabular}
\end{table}

Figure \ref{fig:EUS_runtime_vs_baselines} gives the EUS for the seven discretizations over $50$ randomly-generated problem instances, for $N=10$ candidate MDPs for the recipient and $k=2$ and $5$.
Figure \ref{fig:EUS_runtime_vs_baselines} shows that, coupled with the greedy query algorithm, our breakpoint commitments discretization yields the highest EUS with the lowest computational cost.
In Figure \ref{fig:EUS_runtime_vs_baselines}(left), we see that, for the even and the DP discretizations, the EUS increases with the probability resolution $n$, and only once we reach $n=50$ is the EUS comparable to our breakpoints discretization.
Figure \ref{fig:EUS_runtime_vs_baselines}(right) compares the runtimes of forming the discretization and evaluating the commitments in the discretization for the downstream query formulation procedure, confirming the hypothesis that using breakpoints is faster.
Table \ref{table:discretization size} compares the sizes of these discretizations, and confirms our intuition that the breakpoints discretization is most 
% computationally 
efficient because it identifies fewer commitments that are sufficient for the EUS maximization.

\paragraph{Evaluating the Greedy Query.}
Next, we empirically confirm that the greedy query search is effective for EUS maximization. Given the results confirming the effectiveness and efficiency of the breakpoint discretization, the query searches here are over the breakpoint commitments.
% Specifically, we show that, given the breakpoint commitments, formulating the commitment query greedily yields EUS that is comparable to the optimal, and is computationally much more efficient than exhaustively searching for the optimal query.
Figure \ref{fig:EUS_runtime_vs_k}(left) compares the EUS of the greedily-formulated query with the optimal (exhaustive search) query, and with a query comprised of randomly-chosen breakpoints. 
The EUS is normalized with $\overline{EUS}$ and the optimal EU prior to querying given uncertainty $\mu$ as defined in Eq.~\eqref{eq:optimal EU prior to query}. 
% That is, a value for EUS is normalized as $ (EUS-EU^*(\mu) ) / (\overline{EUS}-EU^*(\mu))$. (Note that $EU^*(\mu)$ is also the EUS of the optimal and greedy query when $k=1$, since the recipient is only given one choice, which is the one optimizing the provider's model.)
We vary the query size $k$, and report means and standard errors over the same $50$ coordination problems.
We see that the EUS of the greedy query tracks that of the optimal query closely, while greedy's runtime scales much better.

\begin{figure}[t]
\centering
\begin{subfigure}{.23\textwidth}
  \centering
  \includegraphics[width=\linewidth]{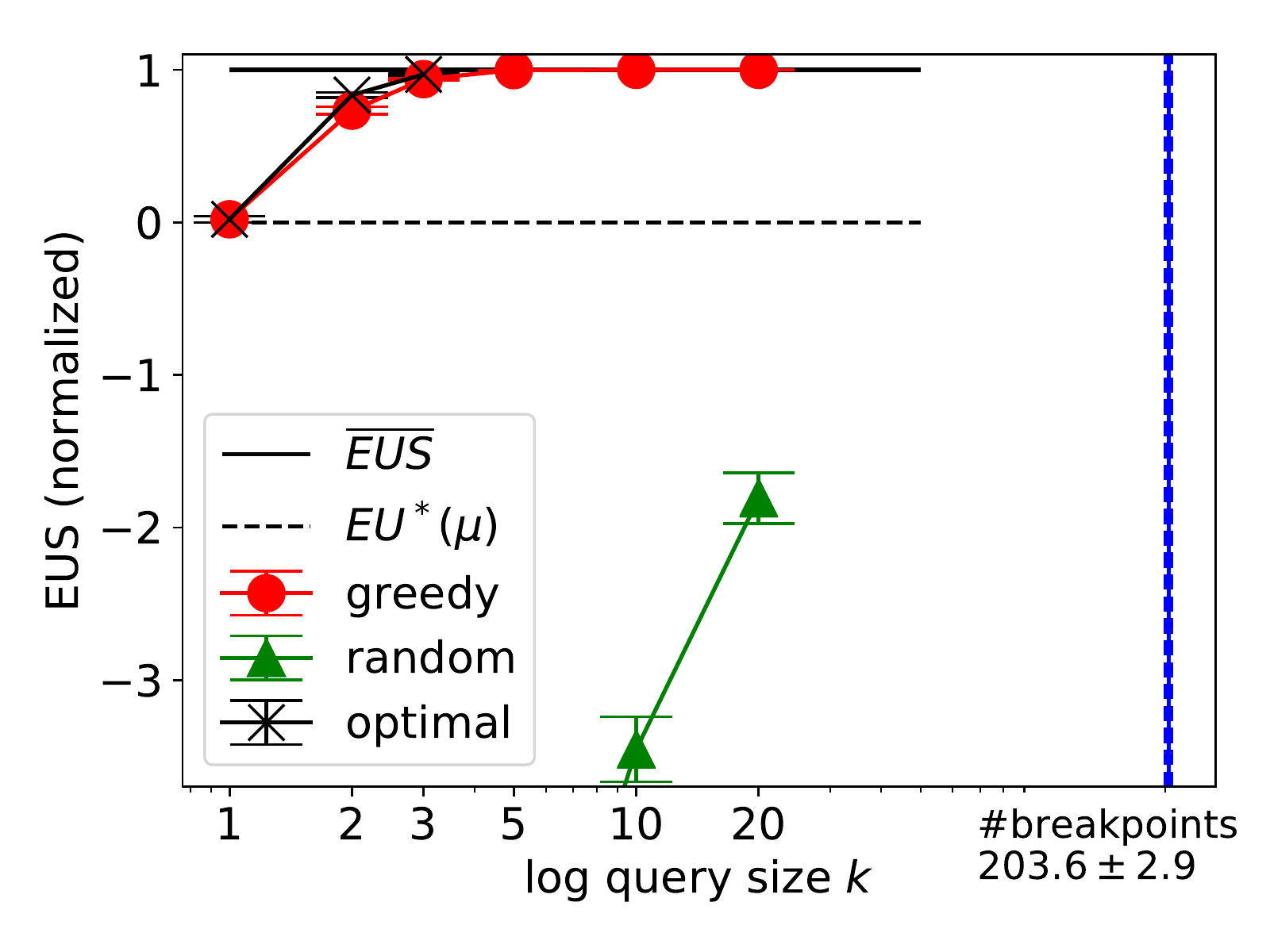}
\end{subfigure}
\begin{subfigure}{.23\textwidth}
  \centering
  \includegraphics[width=\linewidth]{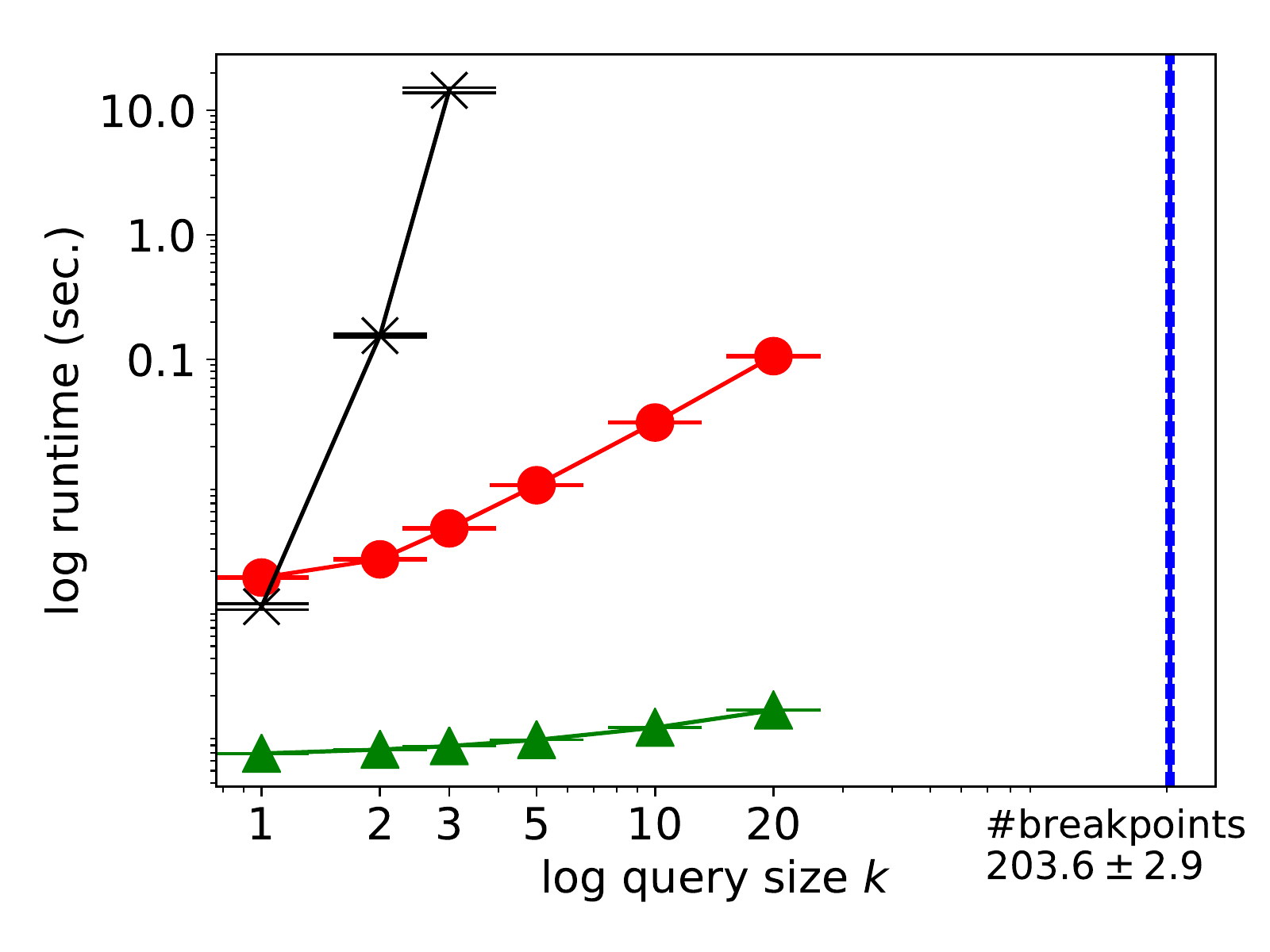}
\end{subfigure}
\caption{Means and standard errors of the EUS (left) and runtime (right) of the optimal, the greedy, and the random queries formulated from the breakpoints in Synthetic MDPs.
}
% \vspace{-3mm}
\label{fig:EUS_runtime_vs_k}
\end{figure}

% \paragraph{Diverse priors and multi-round querying.}
\begin{comment}
We have also considered larger numbers of the recipient's candidate MDPs, and various types of the provider's prior $\mu$ besides the uniform prior, such as Gaussian priors.
Besides priors that are synthetically generated, we have also explored priors that naturally emerge in a multi-round querying process.
We observe consistent results in these various settings, confirming the robustness of greedy querying. 
\textcolor{blue}{
We provide details in Appendix \ref{appendix:Diverse Priors and Multi-round Querying}.
}
\end{comment}

\subsection{Overcooked}
\label{sec:Overcooked}
In this section, we further test our approach in a more grounded domain.
The domain, Overcooked, was inspired by the video game of the same name and introduced by \cite{wang2020too} to study theory of mind in the absence of communication but with global observability. 
We reuse one of their Overcooked settings with two high-level modifications:
1) instead of having global observability, each agent observes only its local environment,
and 2) we introduce probabilistic transitions.
These modifications induce for the domain a rich space of meaningful commitments, over which the agents should carefully negotiate for the optimal cooperative behavior.
\begin{wrapfigure}{r}{.217\textwidth}
% \begin{figure}
\begin{center}
    \includegraphics[width=\linewidth]{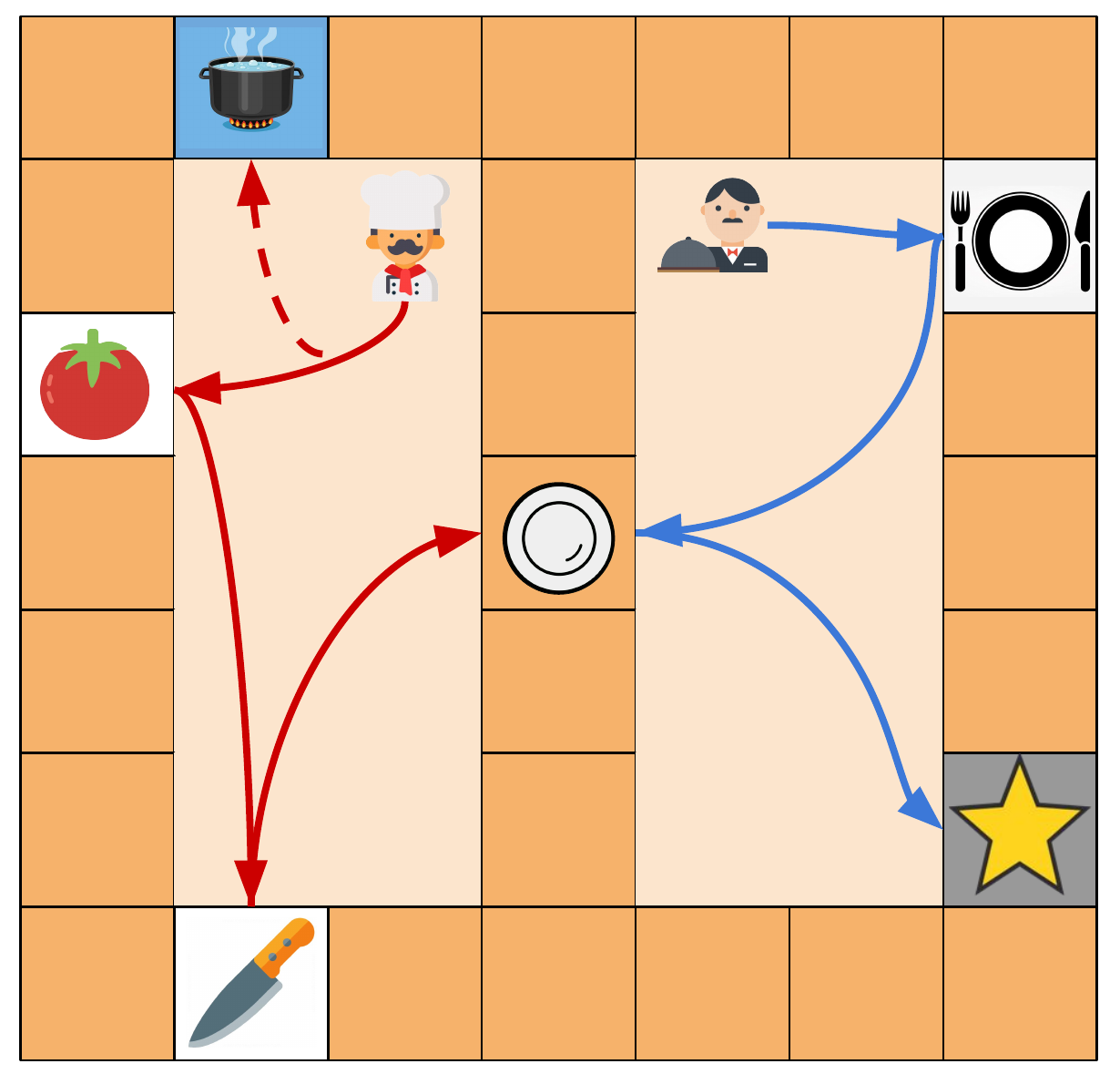}
\end{center}
\caption{Overcooked.}
% \vspace{-3mm}
\label{fig:overcooked}
% \end{figure}
\end{wrapfigure}
Figure \ref{fig:overcooked} illustrates this Overcooked environment.
Two agents, the chef and the waiter, together occupy a grid with counters being the boundaries.
The chef is supposed to pick up the tomato, chop it, and place it on the plate. Afterwards, the waiter is supposed to pick up the chopped tomato and deliver to the counter labelled by the star.
Meanwhile, the chef needs to take care of the pot that can probabilistically begin boiling, and the waiter needs to take care of a dine-in customer (labelled by the plate with fork and knife).
This introduces interesting tensions between delivering the food and taking care of the pot and the customer.
Please refer to Appendix \ref{appendix:Domain Description: Overcooked} for a detailed description of the environment.
 
For coordination, the chef makes a probabilistic commitment that it will place the chopped tomato on the plate.
Thus, the chef is the provider and the waiter is the recipient.
Crucially, the commitment decouples the agents' planning problems, allowing the agents to only model the MDP in their half of the grid.
We repeat the experiments in Section \ref{sec:Synthetic MDPs} that evaluate the breakpoints discretization and the greedy query over $50$ problem instances. 
We conjecture that, since the provider's transition function in Overcooked is more structured than in Section \ref{sec:Synthetic MDPs}, the breakpoints discretization is relatively smaller, leading to greater efficiency.
The results, presented in Figure \ref{fig:overcooked_EUS_runtime_vs_baselines} and Table \ref{table:overcooked discretiztion size} as the counterparts of Figure \ref{fig:EUS_runtime_vs_baselines} and Table \ref{table:discretization size}, confirm our conjecture.
Comparing Table \ref{table:overcooked discretiztion size} with Table \ref{table:discretization size}, we see that the breakpoints discretization in Overcooked is relatively smaller.
Therefore, it is unsurprising to see that the runtime in the Overcooked environment, as shown in Figure \ref{fig:overcooked_EUS_runtime_vs_baselines}(right), is relatively smaller than that in Figure \ref{fig:EUS_runtime_vs_baselines}(right).
The results that are the counterpart of Figure \ref{fig:EUS_runtime_vs_k} are presented in Appendix \ref{appendix:Greedy Query from the Breakpoints}, which confirm that the greedy query is again efficient and effective.
\begin{figure}
\centering
\begin{subfigure}{.23\textwidth}
  \centering
  \includegraphics[width=1\linewidth]{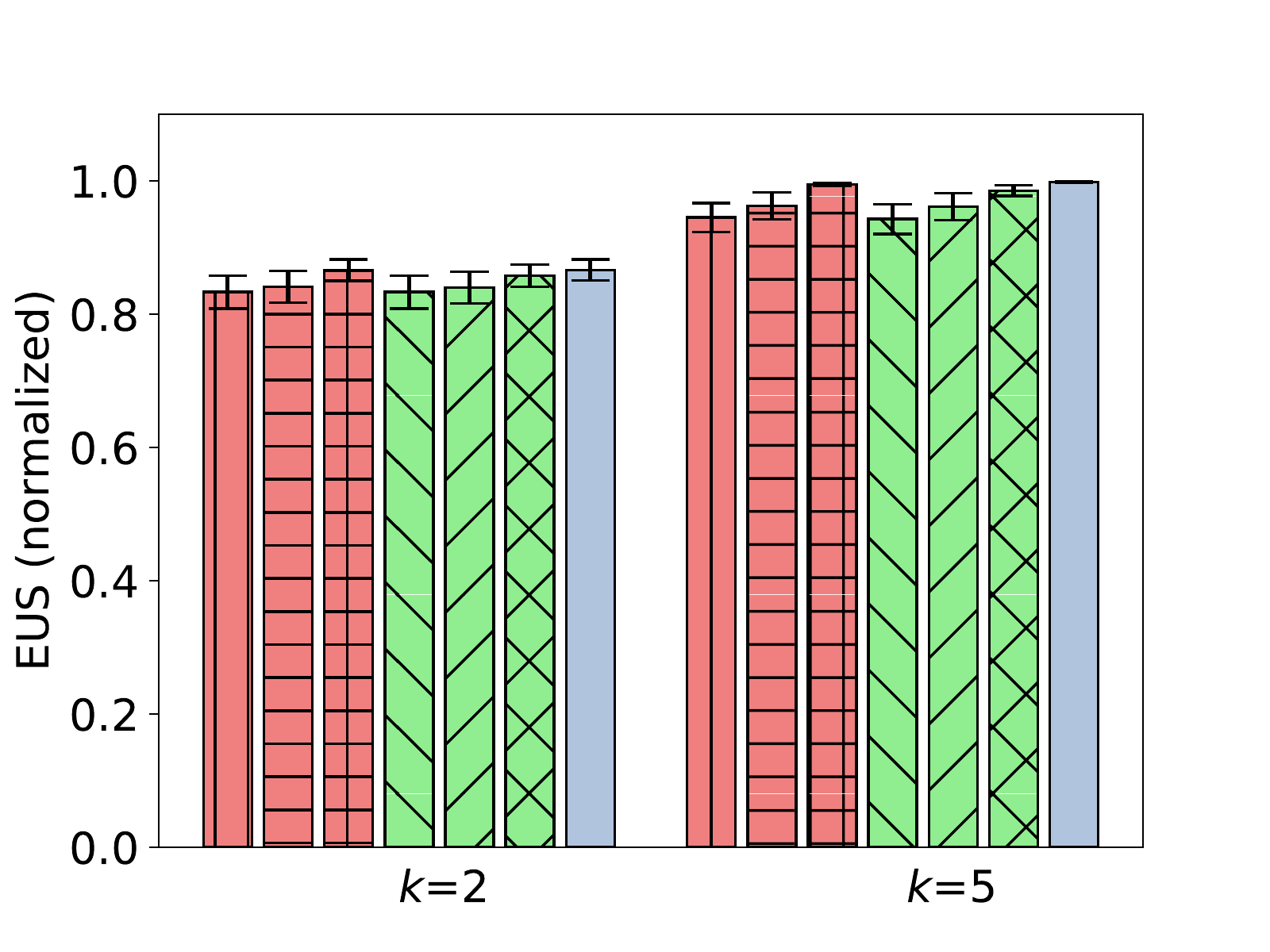}
%   \caption{EUS (normalized)}
%   \label{fig:overcooked_EUS_vs_baselines}
\end{subfigure}
\begin{subfigure}{.23\textwidth}
  \centering
  \includegraphics[width=1\linewidth]{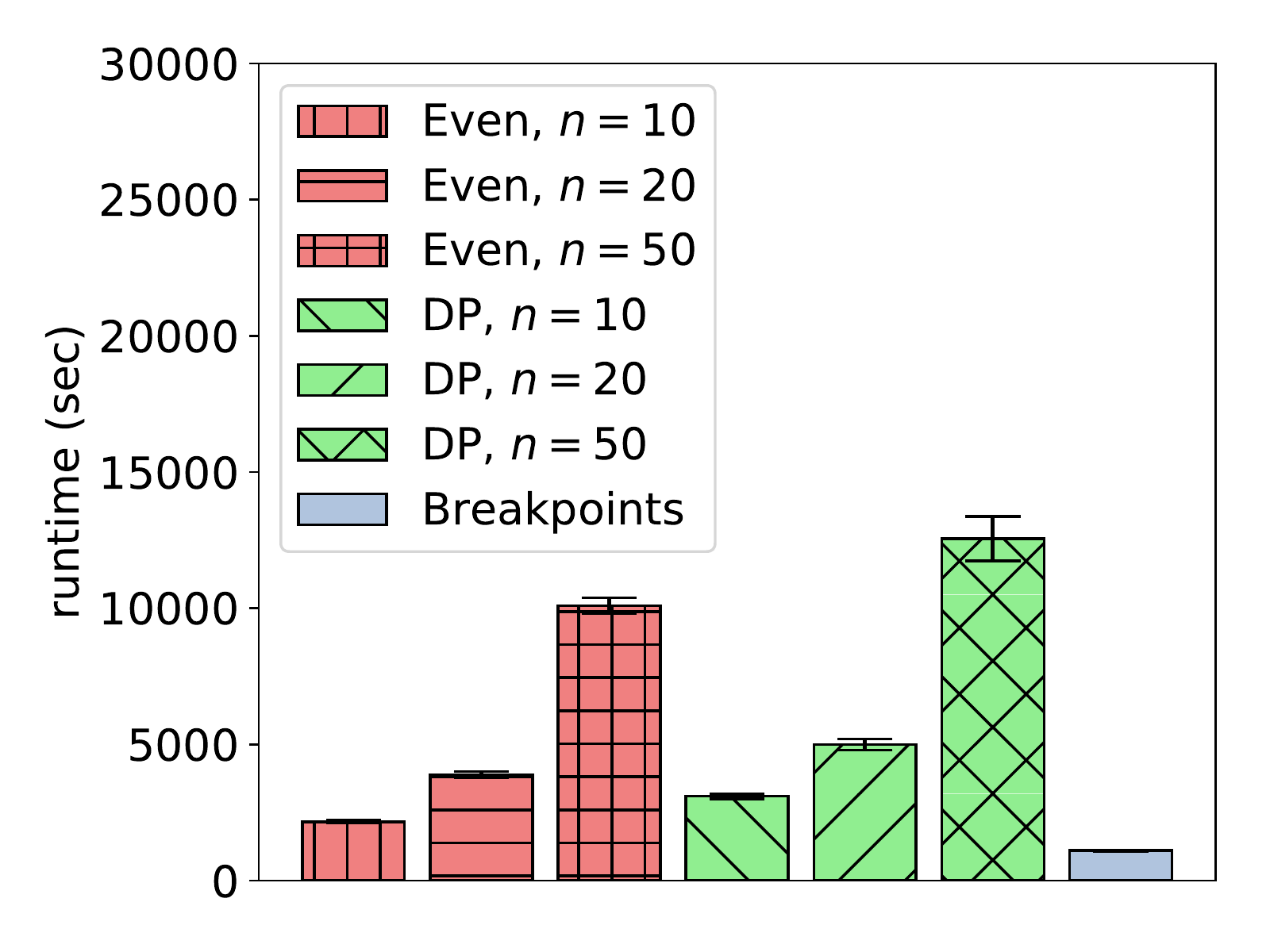}
%   \caption{runtime (sec.)}
%   \label{fig:overcooked_runtime_vs_baselines}
\end{subfigure}
\caption{Means and standard errors of the EUS (left) and runtime (right) 
% of the alternative discretizations 
in Overcooked.
% The runtime is for forming the discretization and evaluating the commitments in the discretization.
}
% \vspace{-2mm}
\label{fig:overcooked_EUS_runtime_vs_baselines}
\end{figure}

\begin{table}
\centering
\caption{Averaged discretization size per commitment time (mean and standard error) in Overcooked.}
\label{table:overcooked discretiztion size}

\begin{tabular}{c|ccc} 
\toprule
            & $n=10$        & $n=20$  & $n=50$            \\ 
\hline
Even        & $6.4 \pm 0.1$ & $11.8\pm0.3$ & $28.0\pm0.7$      \\
DP          & $5.2 \pm 0.2$ & $8.5\pm0.4$ & $16.4\pm0.9$       \\ 
\hline
Breakpoints & \multicolumn{3}{c}{$4.9\pm0.2$}  \\
\bottomrule
\end{tabular}
% \vspace{-1mm}
\end{table}

\paragraph{Quality of Coordination from Querying.}
The results thus far confirm that the agents are able to agree on a commitment from our querying process to achieve high expected joint commitment value (i.e. EUS).
By agreeing on $c$, the agents will execute joint policies $(\pi^{\rm p}(c), \pi^{\rm r}(c))$ derived from $c$, respectively.
Note that,  because $\pi^{\rm r}(c))$ corresponds to the recipient's approximation of the provider's true influence (Eq. \ref{eq:recipient commitment value}), the joint commitment value is not perfectly aligned with the value of the joint policies. 
A reasonable remaining question is: how effective is our commitment query approach in terms of maximizing joint policies' value, compared with the coordination approach originally examined in Overcooked, and with other approaches that make different tradeoffs with respect to observability, communication, and centralization?
To answer this question, we measure the following joint values:
1) a centralized planner view of the provider and the recipient as a single agent; this also corresponds to the multi-agent MDPs (MMDPs) model \cite{boutilier1996planning} where agents have global observability of the state and the reward when selecting actions.
2) decentralized MMDPs, which allows global observability but no centralization, so that the agents need to infer each other's intention individually; in this case, Wang el al. \shortcite{wang2020too} achieved values close to centralization in Overcooked;
3) centralized local observability, where a centralized planner yields joint policies that select actions based on local observability (i.e. half of the grid and private rewards); in particular, we consider policies derived from the optimal commitment (Section \ref{sec:Efficient Optimal Commitment Search}) found by the centralized planner;
4) decentralized local observability, which corresponds to our commitment query approach; we also consider the null commitment policy where the chef and the waiter only optimize the reward for the pot and the dine-in customer without delivering food. Note that this null commitment policy is a reasonable baseline if there is no communication allowed.

The results, presented in Table \ref{table:overcooked joint value of commitment query}, show that our commitment query approach uses modest communication to achieve joint values comparable to cases with stronger information infrastructures that assume centralization and/or global observability during planning and/or execution.
The results also confirm that careful selection of commitments for querying is crucial to induce effective coordination, as random commitments yield significantly lower joint values.

\begin{table}
\centering
\caption{Values of joint policies (mean and standard error in \%) in Overcooked, with MMDPs normalized to 100 and null commitment to 0.}
\label{table:overcooked joint value of commitment query}
% \begin{tabular}{cccc} 
% \toprule
% \begin{tabular}[c]{@{}c@{}}Random~\\Commitment\end{tabular} & \begin{tabular}[c]{@{}c@{}}Query\\ $k = 2$\end{tabular} & \begin{tabular}[c]{@{}c@{}}Query\\$k = 5$\end{tabular} & \begin{tabular}[c]{@{}c@{}}Optimal\\Commitment~ \end{tabular}  \\ 
% \midrule
% $14.5 \pm 1.4 \%$~                                             & $99.6 \pm 0.1\%$~                                      & $99.8 \pm 0.1\%$~                                      & $99.9 \pm 0.1\%$                                                 \\
% \bottomrule
% \end{tabular}

\begin{tabular}{l|c|c} 
\toprule
                                 & Centralized                                                & Decentralized                                                         \\ 
\hline
\multicolumn{1}{c|}{Global Obs.} & \begin{tabular}[c]{@{}c@{}}100\\(MMDPs) \end{tabular}      & \begin{tabular}[c]{@{}c@{}}near 100\\(Wang et al.) \end{tabular}      \\ 
\hline
\multicolumn{1}{c|}{Local Obs.}  & \begin{tabular}[c]{@{}c@{}}$99.9 \pm 0.1$\\(Optimal $c$) \end{tabular} & \begin{tabular}[c]{@{}c@{}}$99.6 \pm 0.1$, $99.8 \pm 0.1$\\(Query $k = 2$, $5$) \end{tabular}  \\ 
\cline{2-3}
                                 & \multicolumn{2}{c}{Null $c$: $0$;~~~~Random $c$: $14.5 \pm 1.4$ }                                                                                               \\
\bottomrule
\end{tabular}

\end{table}

\section{Discussion}
\label{sec:conclusions}
Built on provable foundations and  evaluated in two separate domains, our approach proves highly appropriate for settings where cooperative agents coordinate their plans through commitments in a decentralized manner, and could provide a good performance/cost tradeoff even compared to coordination that is not restricted to being commitment-based. 

For future directions, if the agents can afford the time and bandwidth, querying need not be limited to a single round, which then raises questions about how agents should consider future rounds when deciding on what to ask in the current round. 
The querying can also be extended to the setting where the query poser is uncertain about both the responder's and its own environments.  
% As pointed out earlier, the provider and recipient could possibly switch roles of who poses the query and who responds. 
As dependencies between agents get richer (with chains and even cycles of commitments), continuing to identify and exploit structure in intertwined value functions will be critical to scaling up for efficient multi-round querying of connected commitments.

\noindent{\bf Acknowledgments}
We thank the anonymous reviewers for their valuable feedback.
This work was supported in part by the Air Force Office of Scientific Research under grant FA9550-15-1-0039. Opinions, findings, conclusions, or recommendations expressed here are those of the authors and do not necessarily reflect the views of the sponsor.

\bibliography{ref}

\newpage\clearpage
\appendix
\section*{Appendix}
\vspace{3mm}
\section{Proofs}
\label{appendix:Proofs}
\subsection{Proof of Theorem \ref{theorem:provider commitment value}}
\paragraph{Proof of monotonicity}
By the commitment semantics of Eq.~\eqref{eq:commitment semantics}, $\Pi^{\rm p}(c) = \Pi^{\rm p}(T, p)$ is monotonically non-increasing in $p$ for any fixed $T$, i.e.  $\Pi^{\rm p}(T,p') \subseteq \Pi^{\rm p}(T,p)$ for any $p'>p$. Therefore, $v^{\rm p}(T, p)$ is monotonically non-increasing in $p$.

\paragraph{Proof of concavity}
Consider the linear program (LP), patterned on the literature\cite{altman1999constrained,Witwicki2007CommitmentdrivenDJ},
that solves the provider's planning problem in Eq.~\eqref{eq:provider commitment value}:
\begin{subequations}
\begin{align} 
\max_{x}  ~&\sum_{s^{\rm p},a^{\rm p}}x(s^{\rm p},a^{\rm p}) R^{\rm p}(s^{\rm p},a^{\rm p}) \label{program:Obj}\\
\mathrm{s.t.}~~&\forall s^{\rm p}, a^{\rm p} \quad x(s^{\rm p},a^{\rm p})\geq 0;  \label{program:x geq zero}\\
&\forall s^{p\prime} \quad \sum_{a^{p\prime}}x(s^{p\prime},a^{p\prime}) \label{program:dynamics}\\
&~~~~~=\sum_{s^{\rm p},a^{\rm p}}x(s^{\rm p},a^{\rm p})P^{\rm p}(s^{p\prime}|s^{\rm p},a^{\rm p})+\delta(s^{p\prime},s^{\rm p}_0); \nonumber\\
&\sum_{s^{\rm p}\in s^+_T} \sum_{a^{\rm p}}x(s^{\rm p},a^{\rm p}) \geq p \label{program:commitment}
\end{align}
\end{subequations}
where $\delta(s^{p\prime},s^{\rm p}_0)$ is the Kronecker delta that returns 1 when $s^{p\prime}=s^{\rm p}_0$ and 0 otherwise, and 
$
    s^+_T =\{ s^{\rm p}: s^{\rm p}\in S^{\rm p}_T, u^+ \in s^{\rm p} \}
$
is the set of the provider's states at commitment time $T$ in which $u=u^+$.
If $x$ satisfies constraints \eqref{program:x geq zero} and \eqref{program:dynamics},  then it is the occupancy measure of policy $\pi^{\rm p}$,
\begin{align*}
\pi^{\rm p}(a^{\rm p}|s^{\rm p})=\frac{x(s^{\rm p},a^{\rm p})}{\sum_{a^{p\prime}}x(s^{\rm p},a^{p\prime})},
\end{align*}
% inserted
where $x(s^{\rm p}, a^{\rm p})$ is the expected number of times action $a^{\rm p}$ is taken in state $s^{\rm p}$ by following policy $\pi^{\rm p}$.
Constraint \eqref{program:commitment} expresses the commitment semantics of Eq.~\eqref{eq:commitment semantics}.
The expected cumulative reward is expressed in the objective function \eqref{program:Obj}.
Therefore, $v^{\rm p} (c)$ is the optimal value of this linear program.

For a fixed commitment time $T$ and any two commitment probabilities $p$ and $p'$, let $x^*_{p}, x^*_{p'}$ be the optimal solutions to the LP, respectively.
For any $\eta\in[0,1]$, let $p_\eta = \eta p' + (1-\eta) p$. Consider $x_\eta$ that is the $\eta$-interpolation of $x^*_{p}, x^*_{p'}$,
\begin{align*}
    x_\eta(s^{\rm p},a^{\rm p}) =  \eta x^*_{p'}(s^{\rm p},a^{\rm p}) + (1-\eta) x^*_{p}(s^{\rm p},a^{\rm p}).
\end{align*}
Note that $x_\eta$ satisfies constraints \eqref{program:x geq zero} and \eqref{program:dynamics}, and so it is the occupancy measure of policy $\pi^{\rm p}_\eta$ defined as 
\begin{align*}
\pi^{\rm p}_\eta(a^{\rm p}|s^{\rm p})=\frac{x_\eta(s^{\rm p},a^{\rm p})}{\sum_{a^{p}}x_\eta(s,a^{p})}.
\end{align*}
Since the occupancy measure of $\pi^{\rm p}_\eta$ is the $\eta$-interpolation of $x^*_{p}$ and $x^*_{p'}$,
it is easy to verify that $\pi^{\rm p}_\eta$ is feasible for commitment probability $p_\eta$.
Therefore, the concavity holds because
\begin{align*}
     &v^{\rm p}(T, p_\eta) \\
    \geq& V^{\pi^{\rm p}_\eta}_{M^{\rm p}}(s^{\rm p}_0)
    = \sum_{s^{\rm p},a^{\rm p}}x_\eta(s^{\rm p},a^{\rm p}) R^{\rm p}(s^{\rm p},a^{\rm p}) \\
    =&  \sum_{s^{\rm p},a^{\rm p}}  \left( \eta x^*_{p'}(s^{\rm p},a^{\rm p}) + (1-\eta) x^*_{p_0}(s^{\rm p},a^{\rm p}) \right)  R^{\rm p}(s^{\rm p},a^{\rm p}) \\
    =& \eta v^{\rm p}(T, p') + (1-\eta) v^{\rm p}(T, p_0).
\end{align*}

\paragraph{Proof of piecewise linearity}
We first convert the original linear program into its standard form:
\begin{subequations}
\begin{align} 
\max_{\tilde{x}} ~~~ & r^T\tilde{x} \nonumber\\ %\label{program:Standard Form Obj}\\
\mathrm{s.t.}~  
& A \tilde{x} = b; \nonumber\\ %\label{program:Standard Form Equality}\\
&~\tilde{x}\geq 0;  \nonumber %\label{program:Standard Form x geq zero}
\end{align}
\end{subequations}
To convert constraint \eqref{program:commitment} into an equality constraint, we introduce a slack variable $\xi \geq 0$:
\begin{align*}
    \sum_{s^{\rm p} \in S^{p+}_T} \sum_{a^{\rm p}} x(s^{\rm p},a^{\rm p}) - \xi = p.
\end{align*}
The slack variable is a decision variable in the standard form, $\tilde{x} = [x ~|~ \xi] \in \mathbb{R}^{|S^{\rm p}||A^{\rm p}| + 1}$.
The standard form eliminates redundant constraints so that $A\in\mathbb{R}^{m \times (|S||A| + 1)}$ is full row rank ($rank(A) = m$). Note that the elimination produces $b\in\mathbb{R}^m$ whose elements are linear in $p$.

Pick a set of indices $B$ corresponding to $m$ columns of the matrix $A$. We can think of $A$ as the concatenation of two matrices $A_B$ and $A_N$ where $A_B$ is the $m \times m$ matrix of these $m$ linearly independent columns, and $A_N$ contains the other columns. Correspondingly, $\tilde{x}$ is decomposed into $\tilde{x}_B$ and $\tilde{x}_N$.
Then, $\tilde{x} = [\tilde{x}_B ~|~ \tilde{x}_N]$ is basic feasible if $x_N=0$, $A_B$ is invertible, and $x_B = A_B^{-1} b \geq 0$.

It is known that the optimal solution can be found in the basic feasible solutions,
\begin{align*}
    v^{\rm p}(T, p) 
    &= \max_{B: \tilde{x} \text{ is basic feasible} }  r^T\tilde{x} \\
    &= \max_{B: \tilde{x} \text{ is basic feasible} }  r^T_B \tilde{x}_B \\
    &= \max_{B: \tilde{x}  \text{ is basic feasible} }  r^T_B A_B^{-1} b.
\end{align*}
Since $b$ is in linear in $p$, $v^{\rm p}(T, p)$ is the maximum of a set of linear functions in $p$, and therefore it is piecewise linear.

\subsection{Proof of Theorem \ref{theorem:recipient commitment value}}

\paragraph{Proof of monotonicity.}
We fix the commitment time $T$.
For any recipient policy $\pi^r$, let $v^{\pi^r}_{T, 1}$ be the initial state value of $\pi^r$ when $u$ is enabled from $u^-$ to $u^+$ with probability 1 at $T$, and let $v^{\pi^r}_{T, 0}$ be the initial state value of $\pi^r$ when $u$ never flips to $u^+$.
It is useful to notice that 
\begin{align} \label{eq:value decomposition}
    V^{\pi^r}_{\widehat{M}^r(c)} (s^r_0) = p v^{\pi^r}_{T, 1} + (1-p) v^{\pi^r}_{T, 0}
\end{align}
In words, the initial state value can be expressed as the weighted sum of the two scenarios, with the weight determined by the commitment probability.
Consider the optimal policy $\pi^*_{\widehat{M}^r(c)}$ for $\widehat{M}^r(c)$.
It is guaranteed that $v^{\pi^*_{\widehat{M}^r(c)}}_{T, 1} \geq  v^{\pi^*_{\widehat{M}^r(c)}}_{T, 0}$ because, intuitively, $u^+$ is more desirable than $u^-$ to the recipient. We will formally prove this later.
Now consider $p' > p$ and let $c'= (t, p')$:
\begin{align*}
     v^r(T, p)
    &= p v^{\pi^*_{\widehat{M}^r(c)}}_{T, 1} + (1-p) v^{\pi^*_{\widehat{M}^r(c)}}_{T, 0} \\
    &\leq p' v^{\pi^*_{\widehat{M}^r(c)}}_{T, 1} + (1-p') v^{\pi^*_{\widehat{M}^r(c)}}_{T, 0}
    \leq v^r(T, p').
\end{align*}

Now, we finish the proof by formally showing $v^{\pi^*_{\widehat{M}^r(c)}}_{T, 1} \geq  v^{\pi^*_{\widehat{M}^r(c)}}_{T, 0}$.
To this end, it is useful to first give Lemma \ref{lemma:M+-} that directly follows from Assumption \ref{assumption:u}, stating that the value when $u$ is always set to $u^-$ is no more than the value of any arbitrary $M^r$.
\begin{lemma}\label{lemma:M+-}
Under Assumption \ref{assumption:u}, 
for any $M^r$ with arbitrary $P^r_u$,
we have $V^*_{M^{r-}}(s^{r}_0) \leq V^*_{M^r}(s^r_0)$.
\end{lemma}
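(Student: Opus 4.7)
The plan is to prove the lemma by backward induction on the time step $h$, exploiting the permanence of $u$ flipping from $u^-$ to $u^+$. The key preliminary observation is that because $u^+$ is absorbing for the $u$-feature, any state $s = (l^r, u^+)$ reached in $M^r$ evolves from that point on identically to how it would in $M^{r+}$; in particular, $V^*_{M^r}(l^r, u^+) = V^*_{M^{r+}}(l^r, u^+)$ for every such state. This lets us reduce comparisons involving $V^*_{M^r}$ at $u^+$ states to applications of Assumption~\ref{assumption:u}.

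Concretely, I plan to show the stronger statement $V^*_{M^{r-}}(l^r, u^-) \leq V^*_{M^r}(l^r, u^-)$ for every locally-controlled feature $l^r$ and every time step $h$, which implies the lemma by specializing to $s^r_0$. The base case is $h = H$, where both values are $0$ by the definition of the finite-horizon value function. For the inductive step, assume the inequality holds at time $h+1$. Fix any $s^r = (l^r, u^-) \in S^r_h$ and any action $a^r$. Using the factorization $P^r = (P^r_u, P^r_l)$ and writing $q = P^r_u(u^+ \mid u^-)$ at step $h$, the Bellman equations give
\begin{align*}
Q^*_{M^{r-}}(s^r, a^r) &= R^r(s^r, a^r) + \sum_{l^{r\prime}} P^r_l\bigl(l^{r\prime} \mid (l^r, u^-), a^r\bigr)\, V^*_{M^{r-}}(l^{r\prime}, u^-),\\
Q^*_{M^r}(s^r, a^r) &= R^r(s^r, a^r) + \sum_{l^{r\prime}} P^r_l\bigl(l^{r\prime} \mid (l^r, u^-), a^r\bigr)\Bigl[q\,V^*_{M^r}(l^{r\prime}, u^+) + (1-q)\,V^*_{M^r}(l^{r\prime}, u^-)\Bigr],
\end{align*}
where the $l^r$-marginal transition is the same in both MDPs because it is conditioned on the current $u$-value, which is $u^-$ in both. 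The permanence observation gives $V^*_{M^r}(l^{r\prime}, u^+) = V^*_{M^{r+}}(l^{r\prime}, u^+)$, which is $\geq V^*_{M^{r-}}(l^{r\prime}, u^-)$ by Assumption~\ref{assumption:u}, while the inductive hypothesis gives $V^*_{M^r}(l^{r\prime}, u^-) \geq V^*_{M^{r-}}(l^{r\prime}, u^-)$. Taking a convex combination with weights $q$ and $1-q$ and then the maximum over $a^r$ yields $V^*_{M^r}(s^r) \geq V^*_{M^{r-}}(s^r)$, closing the induction.

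The main subtlety I expect is not mathematical but bookkeeping: making sure that the factored transition is applied correctly, in particular that at a state with $u = u^-$ the $l^r$-dynamics in $M^r$ and $M^{r-}$ genuinely coincide (they do, since both condition on $u_h = u^-$), and that the permanence of $u$ is invoked only where it applies (at states with $u = u^+$). Note that the argument never uses any structure on $P^r_u$ beyond the permanence of $u^+$, so it holds for arbitrary $P^r_u$ as stated in the lemma, and it implicitly covers the case where $P^r_u$ depends on time or on the provider's policy.
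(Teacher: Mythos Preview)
Your proof is correct and takes a genuinely different route from the paper's. The paper argues by \emph{policy construction}: it first handles the case where $P^r_u$ can flip $u$ only at a single time $T$, building a policy in $M^r$ that mimics $\pi^*_{M^{r-}}$ until $T$ and then switches to the $M^{r+}$-optimal policy if the flip occurs (invoking Assumption~\ref{assumption:u} at the switch); it then extends to general $P^r_u$ by decomposing the value as a weighted average over single-flip scenarios. Your argument instead works directly at the level of the Bellman backup via backward induction, using the permanence of $u^+$ to identify $V^*_{M^r}(\cdot,u^+)$ with $V^*_{M^{r+}}(\cdot,u^+)$ and then applying Assumption~\ref{assumption:u} and the inductive hypothesis term-by-term inside the convex combination. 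What this buys you is a single unified argument that handles arbitrary (time-varying, multi-step) $P^r_u$ without any case split or scenario decomposition, and that yields the stronger statewise inequality $V^*_{M^{r-}}(l^r,u^-)\le V^*_{M^r}(l^r,u^-)$ at every time step, not just at $s^r_0$. The paper's approach, on the other hand, is perhaps more intuitive operationally (it exhibits the witnessing policy explicitly) but its multi-flip extension is a bit terse and requires the reader to supply the observation that one fixed ``mimic-then-switch-on-flip'' policy works simultaneously across all flip-time scenarios.
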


\begin{proof}
Let's first consider the case in which $P^r_u$ flips $u$ only at a single time step $T$.
We show $V^*_{M^{r-}}(s^{r}_0) \leq V^*_{M^r}(s^r_0)$ by constructing a policy in $M^r$ for which the value is at least $V^*_{M^{r-}}(s^{r}_0)$ by mimicking $\pi^*_{M^{r-}}$.
We can construct a policy $\pi_{M^r}$ that chooses the same actions as $\pi^*_{M^{r-}}$ up until time step $T$.
If $u$ is not toggled at $T$, then we keep choosing the same actions as $\pi^*_{M^{r-}}$ throughout the episode; 
otherwise, after $T$ we chooses actions that are optimal for $u^+$.
By Assumption \ref{assumption:u}, this policy yields a value that is at least $V^*_{M^{r-}}(s^{r}_0)$.

For the case in which $P^r_u$ flips $u$ with positive probability at $K>1$ time steps, we can decompose the value function for $P^r_u$ as the weighted average of $K$ value functions, each of which corresponds to the scenario where $u$ only flips at a single time step, and the weights of the average are the flipping probabilities of $P^r_u$ at these $K$ time steps.
\end{proof}
Now we can show $v^{\pi^*_{\widehat{M}^r(c)}}_{T, 1} \geq  v^{\pi^*_{\widehat{M}^r(c)}}_{T, 0}$ because, otherwise, we have
\begin{align*}
    v^{r} (T, p) 
    &= p v^{\pi^*_{\widehat{M}^r(c)}}_{T, 1} + (1-p) v^{\pi^*_{\widehat{M}^r(c)}}_{T, 0} \\
    &< p  v^{\pi^*_{\widehat{M}^r(c)}}_{T, 0} + (1-p) v^{\pi^*_{\widehat{M}^r(c)}}_{T, 0} \\
    & =  v^{\pi^*_{\widehat{M}^r(c)}}_{T, 0}
    \leq V^*_{M^-} (s^{r-}_0)
\end{align*}
where $v^{r} (T, p)<V^*_{M^-} (s^{r-}_0)$ contradicts Lemma \ref{lemma:M+-}.

\paragraph{Proof of convexity and piecewise linearity.}
Let $\Pi^r_D$ be the set of all the recipient's deterministic policies. It is well known~\cite{puterman2014markov}
that the optimal value can be attained by a deterministic policy,
\begin{align*}
    v^r(T, p)
    = \max_{\pi^r \in \Pi^r_D }  V^{\pi^r}_{\widehat{M}^r(c)}(s^r_0) 
    = \max_{\pi^r \in \Pi^r_D }  p v^{\pi^r}_{T, 1} + (1-p) v^{\pi^r}_{T, 0}
\end{align*}
which indicates that $v^r(T, p)$ is the maximum of a finite number of value functions that are linear in $p$.
Therefore, $v^r(T, p)$ is convex and piecewise linear in $p$.

\subsection{Proof of Theorem \ref{theorem: breakpoints}}

As an immediate consequence of Theorems \ref{theorem:provider commitment value} and \ref{theorem:recipient commitment value}, the joint commitment value is piecewise linear in the probability, and any local maximum for a fixed commitment time $T$ can be attained by a breakpoint probability. Therefore, restricting to those breakpoint commitments incurs no loss of optimality.

\subsection{Proof of Theorem \ref{theorem: submodularity}}
Since the recipient always chooses the one that maximizes the joint value over all commitments in the query, this reduces to the scenario referred to as the noiseless response model in prior work on EUS maximization~\cite{viappiani2010optimal}. 
\cite{viappiani2010optimal} proves the submodularity under the noiseless response model, which also proves Theorem \ref{theorem: submodularity}.

% \subsection{Proof of Lemma \ref{lemma: breakpoints} }
% Placeholder.

\subsection{Proof of Theorem \ref{theorem: optimal discretization} }
We first give Lemma \ref{lemma: optimal discretization} that says any discretization that contains the linearity breakpoints is no worse than any other discretization.

\begin{lemma}
\label{lemma: optimal discretization}
Let $\mathcal{C}$ be defined in the same manner as in Theorem \ref{theorem: optimal discretization}.
Consider any finite set of commitments $\overline{\mathcal{C}}$ that contains $\mathcal{C}$, i.e.  $\overline{\mathcal{C}}\supseteq\mathcal{C}$. 
For any query size $k$ and any uncertainty $\mu$, 
\begin{align} \label{eq: lemma of optimal discretization}
  \max_{Q\subseteq \overline{\mathcal{C}}, |Q|=k} EUS(\mathcal{Q};\mu) 
  =
  \max_{Q\subseteq \mathcal{C}, |Q|=k} EUS(\mathcal{Q};\mu).
\end{align}
\end{lemma}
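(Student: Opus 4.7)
The plan is to prove the nontrivial inequality $\max_{\mathcal{Q}\subseteq\overline{\mathcal{C}},|\mathcal{Q}|=k} EUS(\mathcal{Q};\mu) \leq \max_{\mathcal{Q}\subseteq\mathcal{C},|\mathcal{Q}|=k} EUS(\mathcal{Q};\mu)$; the reverse direction is immediate from $\mathcal{C}\subseteq\overline{\mathcal{C}}$. The first step is to rewrite EUS in its noiseless-response form: since the recipient deterministically responds with the commitment that maximizes the joint value under its true MDP, expanding the conditional expectations collapses the Bayesian posterior bookkeeping and yields $EUS(\mathcal{Q};\mu) = \E_{\mu}\left[\max_{c\in\mathcal{Q}} v^{\rm p+r}(c)\right]$, with the expectation taken over the $N$ candidate recipient MDPs. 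This pointwise-max representation is what makes the argument tractable.

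The core step is a one-at-a-time replacement that turns a non-breakpoint commitment into breakpoint commitments without decreasing EUS. Fix a query $\mathcal{Q}\subseteq\overline{\mathcal{C}}$ with $|\mathcal{Q}|=k$ and suppose $c_1=(T,p)\in\mathcal{Q}\setminus\mathcal{C}$. Let $p_l<p<p_u$ be consecutive elements of $\mathcal{P}(T)$ straddling $p$, set $c_1^l=(T,p_l)$ and $c_1^u=(T,p_u)$ (both in $\mathcal{C}$), let $\lambda=(p_u-p)/(p_u-p_l)$, and define $\mathcal{Q}_l=(\mathcal{Q}\setminus\{c_1\})\cup\{c_1^l\}$ and $\mathcal{Q}_u=(\mathcal{Q}\setminus\{c_1\})\cup\{c_1^u\}$. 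For each recipient MDP $i$, the provider's value $v^{\rm p}(T,\cdot)$ is linear on $[p_l,p_u]$ by the definition of breakpoints, while $v^{\rm r}_i(T,\cdot)$ is convex by Theorem~\ref{theorem:recipient commitment value}; hence $v^{\rm p+r}_i(T,\cdot)$ is convex on $[p_l,p_u]$, and so $v^{\rm p+r}_i(c_1)\leq \lambda v^{\rm p+r}_i(c_1^l) + (1-\lambda)v^{\rm p+r}_i(c_1^u)$.

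Combining this pointwise bound with the elementary inequality $\max(\lambda a+(1-\lambda)b,\, r) \leq \lambda\max(a,r) + (1-\lambda)\max(b,r)$, applied with $r=\max_{c\in\mathcal{Q}\setminus\{c_1\}} v^{\rm p+r}_i(c)$, gives $\max_{c\in\mathcal{Q}} v^{\rm p+r}_i(c) \leq \lambda \max_{c\in\mathcal{Q}_l} v^{\rm p+r}_i(c) + (1-\lambda)\max_{c\in\mathcal{Q}_u} v^{\rm p+r}_i(c)$ for every $i$. Taking the expectation over $\mu$ yields $EUS(\mathcal{Q};\mu) \leq \lambda EUS(\mathcal{Q}_l;\mu) + (1-\lambda)EUS(\mathcal{Q}_u;\mu) \leq \max\{EUS(\mathcal{Q}_l;\mu),EUS(\mathcal{Q}_u;\mu)\}$, so at least one of $\mathcal{Q}_l,\mathcal{Q}_u$ matches or beats $\mathcal{Q}$'s EUS while containing strictly fewer non-breakpoint commitments (if $c_1^l$ or $c_1^u$ is already in $\mathcal{Q}$, we pad with any element of $\mathcal{C}$, which only increases the pointwise max and therefore the EUS).

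Iterating this replacement at most $k$ times produces a query contained in $\mathcal{C}$ of size $k$ whose EUS is at least that of the original, which establishes the lemma. I expect the main obstacle to be the tempting but flawed idea of picking a single replacement for $c_1$ that works for all recipient MDPs simultaneously: $c_1^l$ may dominate $c_1$ on some MDPs while $c_1^u$ dominates on others, so no uniform choice is guaranteed. The key trick that sidesteps this is the convexity-of-max inequality, which lifts the pointwise convex combination past $\max$ and then past $\E_\mu$, so that one of the two candidate replacements must work \emph{on average} even when neither works uniformly.
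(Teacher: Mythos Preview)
Your proof is correct and takes a genuinely different route from the paper's. The paper's argument is a one-shot transformation: given any query $\mathcal{Q}=\{c_1,\dots,c_k\}$, it forms $T(\mathcal{Q})=\{c^*(\mu\,|\,\mathcal{Q}\rightsquigarrow c_i)\}_{i=1}^k$, the set of EU-optimal commitments under each response's posterior, cites Viappiani and Boutilier for the inequality $EUS(T(\mathcal{Q});\mu)\geq EUS(\mathcal{Q};\mu)$, and then invokes Lemma~\ref{lemma: breakpoints} to conclude that every $c^*(\mu')$ already lies in $\mathcal{C}$, so $T(\mathcal{Q})\subseteq\mathcal{C}$ and one application finishes the job. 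Your approach is instead an iterative, elementary convexity argument: you first unpack EUS into its noiseless form $\E_\mu[\max_{c\in\mathcal{Q}} v^{\rm p+r}(c)]$, then replace one non-breakpoint commitment at a time by one of its two neighboring breakpoints, using that $v^{\rm p+r}_i(T,\cdot)$ is convex on any inter-breakpoint interval (linear provider term plus convex recipient term) together with convexity of $x\mapsto\max(x,r)$ to push the bound through the inner $\max$ and then through $\E_\mu$. The paper's route is shorter and ties the result directly to the existing optimal-recommendation-set machinery and to Lemma~\ref{lemma: breakpoints}; your route is self-contained, avoids the external citation, and makes the structural reason (convexity of the joint value between provider breakpoints) fully explicit, at the cost of a longer inductive replacement. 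Both arguments gloss over the same minor edge case of padding the resulting set back up to size $k$ when a replacement coincides with an element already in the query.
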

\begin{proof}
Because $\overline{\mathcal{C}}\supseteq\mathcal{C}$, it is obvious that ``$\geq$'' holds for Eq.~\eqref{eq: lemma of optimal discretization}. We next show ``$\leq$''.

Given a commitment query $\mathcal{Q} = \{c_1,...,c_k\}$, define $T(\mathcal{Q})$ as a commitment query where each commitment is the optimal commitment with respect to the posterior given a response for $Q$, i.e. 
\begin{align*}
    T(\mathcal{Q}) =\{c^*(\mu~|~\mathcal{Q} \rightsquigarrow c_1),..., c^*(\mu~|~\mathcal{Q} \rightsquigarrow c_k)\}.
\end{align*}
Previous work~\cite{viappiani2010optimal} shows that $EUS(T (\mathcal{Q}); \mu) \geq EUS(\mathcal{Q}; \mu)$.
Due to Lemma \ref{lemma: breakpoints}, we now have $c^*(\mu) \in \mathcal{C}$ for any uncertainty $\mu$.
Thus, given an EUS maximizer $\mathcal{Q}^*$ for $\overline{\mathcal{C}}$, $T(\mathcal{Q}^*)$ is a subset of $\mathcal{C}$ with a EUS that is no smaller, which shows ``$\leq$'' holds for Eq.~\eqref{eq: lemma of optimal discretization}.
This concludes the proof.
\end{proof}

We are ready to prove Theorem \ref{theorem: optimal discretization}.
Consider the even discretization of $[0, 1]$, $\mathcal{P}_n = \{p_0,$ $p_1, ..., p_n\}$ where $p_i = \frac{i}{n}$.
Because $v^{p+r}$ is bounded and piecewise linear in the commitment probability, for any $\epsilon>0$, there exists a large enough discretization resolution $n$, such that for any size $k$ query $\mathcal{Q}\subseteq \mathcal{T}\times [0,1]$, there is a size $k$ query $\widehat{\mathcal{Q}} \in \mathcal{T}\times\mathcal{P}_n$ that $|EUS(\mathcal{Q};\mu)-EUS(\widehat{\mathcal{Q}};\mu)|\leq\epsilon$.
Therefore, we have
\begin{align*}
     EUS(\mathcal{Q};\mu) - \epsilon 
  \leq&
  \max_{\widehat{\mathcal{Q}}\subseteq \mathcal{T}\times \mathcal{P}_n , |\widehat{\mathcal{Q}}|=k} EUS(\widehat{\mathcal{Q}};\mu) \\
  \leq&
  \max_{\mathcal{Q}\subseteq (\mathcal{C} \cup \mathcal{T}\times \mathcal{P}_n), |\mathcal{Q}|=k} EUS(\mathcal{Q};\mu)\\
  =&
  \max_{\mathcal{Q}\subseteq \mathcal{C}, |\mathcal{Q}|=k} EUS(\mathcal{Q};\mu)
\end{align*}
for any query $\mathcal{Q}\subseteq \mathcal{T}\times [0,1]$ with $|\mathcal{Q}|=k$, where the equality is a direct result from Lemma \ref{lemma: optimal discretization}. 
This concludes the proof.

\section{Domain Description: Synthetic MDPs}
\label{appendix:Domain Description: Synthetic MDPs}
\textbf{The provider's environment} is a randomly-generated MDP, from a distribution designed such that, in expectation, the provider's reward when enabling the precondition is smaller than when not enabling it. This introduces tension in the provider between enabling the precondition to help the recipient, versus increasing its own reward.

We now describe the provider's MDP-generating distribution.
The MDP has 10 states the provider can be in at any time step, one out of which is an absorbing state denoted as $s^+$, and where the initial state is chosen from the non-absorbing states.
Feature $u$ takes the value of $u^+$ only in the absorbing state, i.e. $u^+ \in s^{\rm p}$ if and only if $s^{\rm p} = s^+$.
There are 3 actions.
For each state-action pair $(s^{\rm p},a^{\rm p})$ where $s^{\rm p}\neq s^+$, the transition function $P^{\rm p}(\cdot|s^{\rm p},a^{\rm p})$ is determined independently by filling the 10 entries with values uniformly drawn from $[0, 1]$, and normalizing $P^{\rm p}(\cdot|s^{\rm p},a^{\rm p})$.
The reward $R^{\rm p}(s^{\rm p}, a^{\rm p})$ for a non-absorbing state $s^{\rm p}\neq s^+$ is sampled uniformly and independently from $[0, 1]$, and for the absorbing state $s^{\rm p}= s^+$ is zero, meaning the provider prefers to avoid the absorbing state, but that state is the only one that satisfies the commitment.

\textbf{The recipient's environment}, inspired by the random walk domains used in the planning literature~\cite{fern2004learning,nakhost2009monte}, is a one-dimensional space with $10$ locations represented as integers $\{0,1,...,9\}$, as illustrated in Figure \ref{fig:1DWalk}.
In locations $1-8$, the recipient can move right, left, or stay still.
Once the recipient reaches either end (location $0$ or $9$), it stays there.
There is a gate between locations $0$ and $1$ for which $u=u^+$ denotes the state of open and $u=u^-$ closed.
Initially, the gate is closed and the recipient starts at an initial location $L_0$.
A negative reward of $-10$ is incurred by bumping into the closed gate.
For each time step the recipient is at neither end, it gets a reward of $-1$.
If it reaches the left end (i.e. location 0), it gets a one-time reward of $r_{0}\geq0$.
The recipient gets a reward of 0 if it reaches the right end.
In a specific instantiation of the recipient's MDP, $L_0$ and $r_{0}$ are fixed, and they are randomly chosen to create various MDPs for the recipient. $L_0$ is randomly chosen from locations $1-8$ and $r_{0}$ from interval $[0,10]$.
\begin{figure}[ht]
\begin{center}
    \includegraphics[width=.7\linewidth]{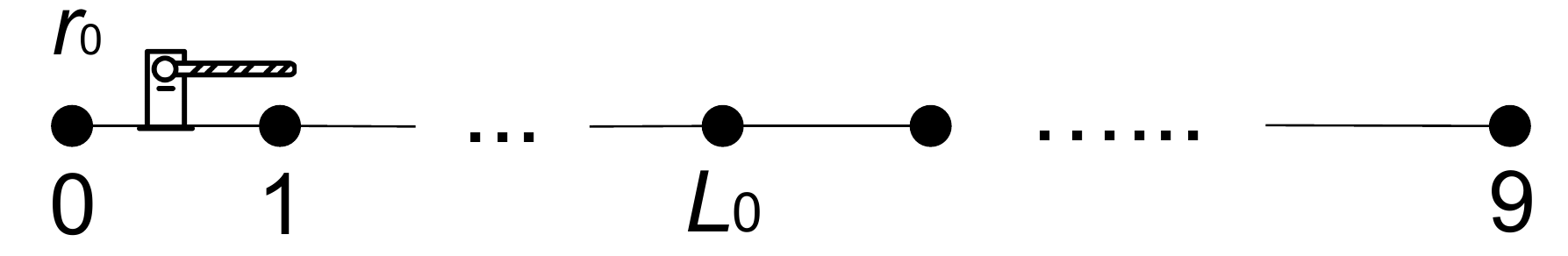}
\end{center}
\caption{1D Walk as the recipient's environment in Section \ref{sec:Synthetic MDPs}.}
\label{fig:1DWalk}
\end{figure}

To generate a random coordination problem, we sample an MDP for the provider, and $N$ candidate MDPs for the recipient, setting the provider's prior uncertainty $\mu$ over the recipient's MDP to be the uniform distribution over the $N$ candidates.
The horizon for both agents is set to be $H=H^{\rm p}=H^{\rm r}=20$.
Since the left end has higher rewards than the right end, if the recipient's start position is close enough to the left end and the provider commits to opening the gate early enough with high enough probability, the recipient should utilize the commitment by checking if the gate is open by the commitment time, and pass through it if so; 
otherwise, the recipient should simply ignore the commitment and move to the right end.
The distribution for generating the recipient's MDPs is designed to include diverse preferences regarding the commitments, such that the provider's query should be carefully formulated to elicit the recipient's preference.

\section{Domain Description: Overcooked}
\label{appendix:Domain Description: Overcooked}
The domain, Overcooked, was inspired by the video game of the same name and introduced by \cite{wang2020too}.
It is a gridworld domain that requires the agents to cooperate in an environment that mimics a restaurant.
We use an environment of Overcooked as is in \cite{wang2020too} with two high-level modifications:
1) instead of having global observability, each agent observes its local environment,
and 2) we introduce probabilistic effects into the transition function.
We make these modifications to induce for the domain a rich space of meaningful commitments, over which the agents should carefully negotiate for the optimal cooperative behavior.

% \begin{figure}[th]
% \centering
%   \includegraphics[width=.57\linewidth]{figs_overcooked/overcooked.pdf}
% \caption{Overcooked.}
% \label{fig:overcooked}
% \end{figure}

Figure \ref{fig:overcooked} illustrates this Overcooked environment, and we here describe it in detail.
Two agents, the chef and the waiter, together occupy a 7x7 grid with counters being the boundaries. Counters divide the grid into halves with the chef on the left and the waiter on the right.
The chef is supposed to pick up the tomato, chop it, and place it on the plate. Afterwards, the waiter is supposed to pick up the chopped tomato and deliver to the counter labelled by the star.
Meanwhile, the chef needs to take care of the pot, and the waiter needs to take care of a dine-in customer (labelled by the plate with fork and knife).
Specifically, each agent has nine actions: \{N,E,S,W\}-move, \{N,E,S,W\}-interact, and do-nothing. 
The \{N,E,S,W\}-move actions change the agents' location in cardinal directions.
The \{N,E,S,W\}-interact actions change the status of the object in the corresponding cardinal directions:
the chef picks up the (unchopped) tomato by interacting with it; 
after picking up the tomato, the chef chops it (and keeps carrying it) by interacting with the knife;
after chopping the tomato, the chef places it on the plate by interacting with the plate that is initially empty;
the waiter picks up the tomato on the plate by interacting with the plate;
after picking up the tomato, the waiter delivers it by interacting with the counter labelled by the star;
being initially unboiled, at each time step the pot can turn boiling with probability $p_{\rm boiling}$, and when it is boiling the chef can turn the heat off by interacting with it;
except for the aforementioned cases, the interact actions has no effect (equivalent to do-nothing).
The chef get a reward -1 for every time step the pot is boiling.
The waiter gets a positive reward $r_{\rm delivery}$ upon the delivery.
At every time step, the waiter also gets a negative reward $d r_{\rm distance}$, where $d$ is the Manhattan distance between the waiter and the dine-in customer, and $r_{\rm distance}$ is a negative number, which encourages the waiter to stay close to the dine-in customer.

To facilitate coordination between the two agents, we consider commitments concerning the tomato, where the chef makes a commitment that it will place the chopped tomato on the plate by some time step with at least a certain probability.
Thus, the chef is the provider and the waiter is the recipient.
Crucially, the commitment decouples the agents' planning problems, allowing the chef to only model in its MDP the left half of the grid and the waiter to only model the right half.
Similarly to Section \ref{sec:empirical}, we generate a random coordination problem by sampling an MDP for the chef, and 10 candidate MDPs for the waiter.
For the chef's MDP, the initial locations of the chef, the tomato, the pot, and the knife are random and different, while the plate is always on the counter shown in Figure \ref{fig:overcooked}.
For the waiter's MDP, the initial locations of the waiter, the delivery counter, and the dine-in customer are random and different.
The probability $p_{\rm boiling}$ is uniformly sampled from $[0, 0.1]$, $r_{\rm distance}$ from $[0, 0.1]$, and $r_{\rm delivery}$ from $[5,15]$.
Knowing $p_{\rm boiling}$ but not $r_{\rm distance}$ and $r_{\rm delivery}$, the chef should carefully formulate the commitment query to elicit the commitment that balances the tradeoff between delivering the food and taking care of the pot and the dine-in customer.

\section{Supplementary Results in Synthetic MDPs}
All experiments were run on Intel Xeon E5-2630 v4 (2.20GHz) CPUs, with mip and pulp Python packages as (MI)LP solvers.
\label{appendix:Supplementary Results in Synthetic MDPs}

\subsection{Diverse Priors and Multi-round Querying}
\label{appendix:Diverse Priors and Multi-round Querying}
Figure \ref{fig:EUS_runtime_vs_k} has demonstrated the effectiveness of the greedy query for a particular type of the provider's prior $\mu$, which is the uniform distribution over the recipient's $N=10$ candidate MDP.
Here, we further show that the greedy query's effectiveness is robust to diverse prior types.
Besides the uniform prior, we consider two other prior types.
For the random prior, the probability for each candidate recipient's MDP is proportional to a number that is randomly sampled from interval $[0,1]$.
For the Gaussian prior, the probability for each candidate recipient's MDP is proportional to the standard Gaussian distribution's probability density function evaluated at a number randomly sampled from the three-sigma interval $[-3,3]$.
Figure \ref{fig:prior_type} shows the EUS, normalized in the same manner as Figure \ref{fig:EUS_runtime_vs_k}, of the greedy query for the three prior types, with the number of candidate recipient's MDPs $N=10$, and $50$.
For comparison, Figure \ref{fig:prior_type} shows, for query size $k=1,2,3$, the EUS of the optimal query and the greedy query's theoretical lower bound ($1-(\frac{k-1}{k})^k$ of the EUS of the optimal query of size $k$).

\begin{figure}[ht]
\centering

\begin{subfigure}[b]{.23\textwidth}
  \centering
  \includegraphics[width=\linewidth]{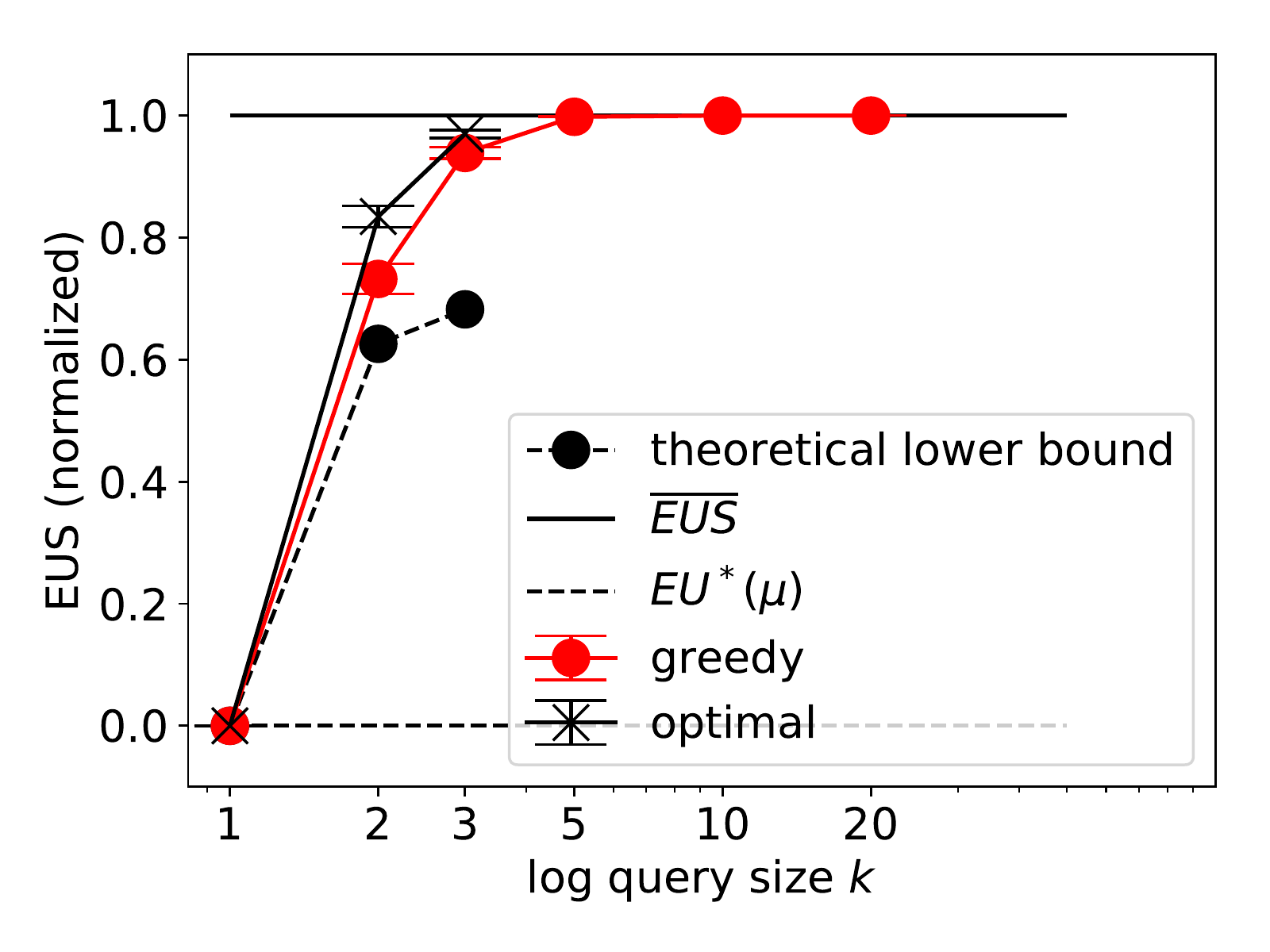}
  \caption{Uniform Prior, $N=10$}
  \label{fig:uniform_prior_N10}
\end{subfigure}
\begin{subfigure}[b]{.23\textwidth}
  \centering
  \includegraphics[width=\linewidth]{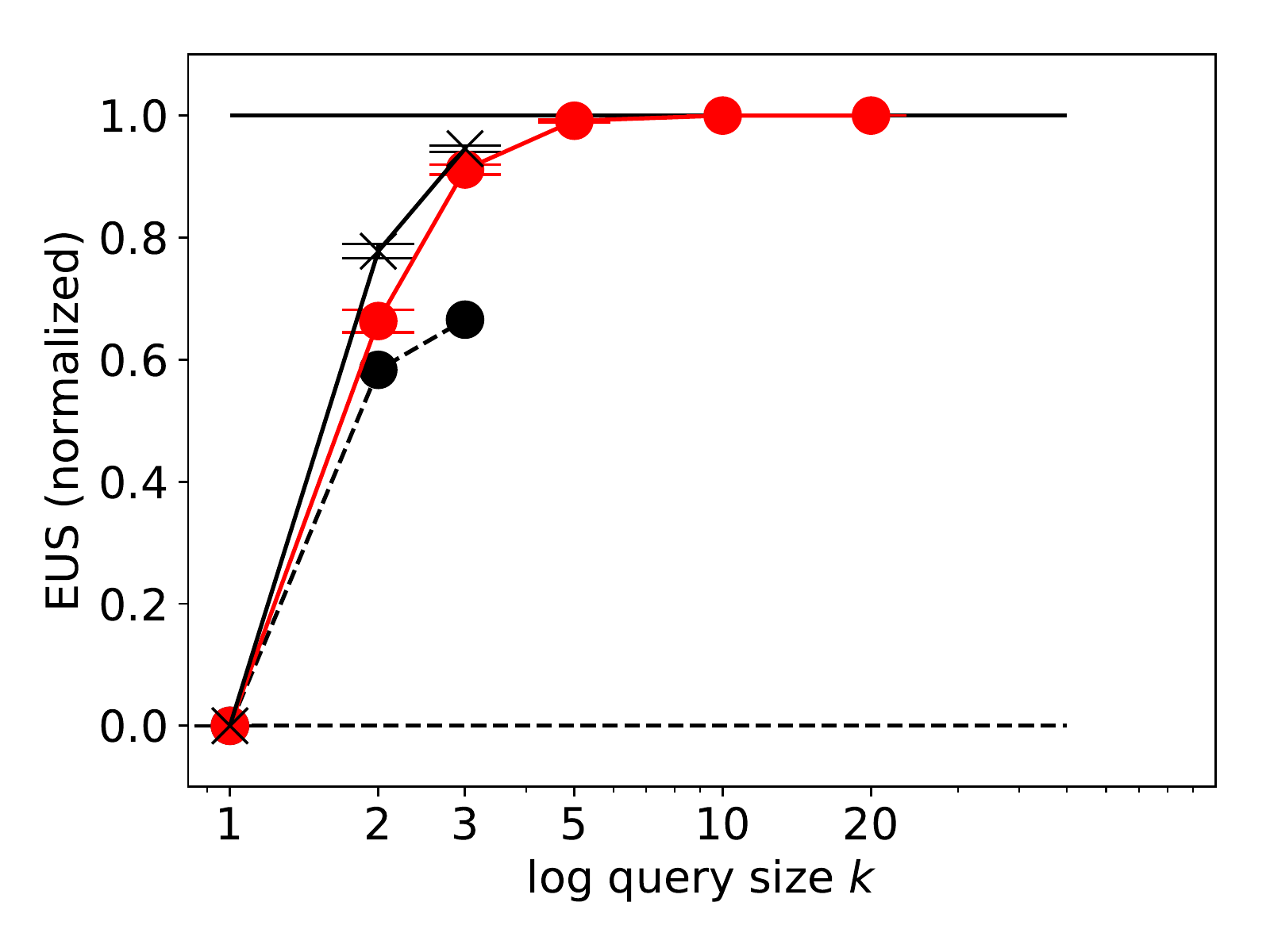}
  \caption{Uniform Prior, $N=50$}
  \label{fig:uniform_prior_N50}
\end{subfigure}

\begin{subfigure}[b]{.23\textwidth}
  \centering
  \includegraphics[width=\linewidth]{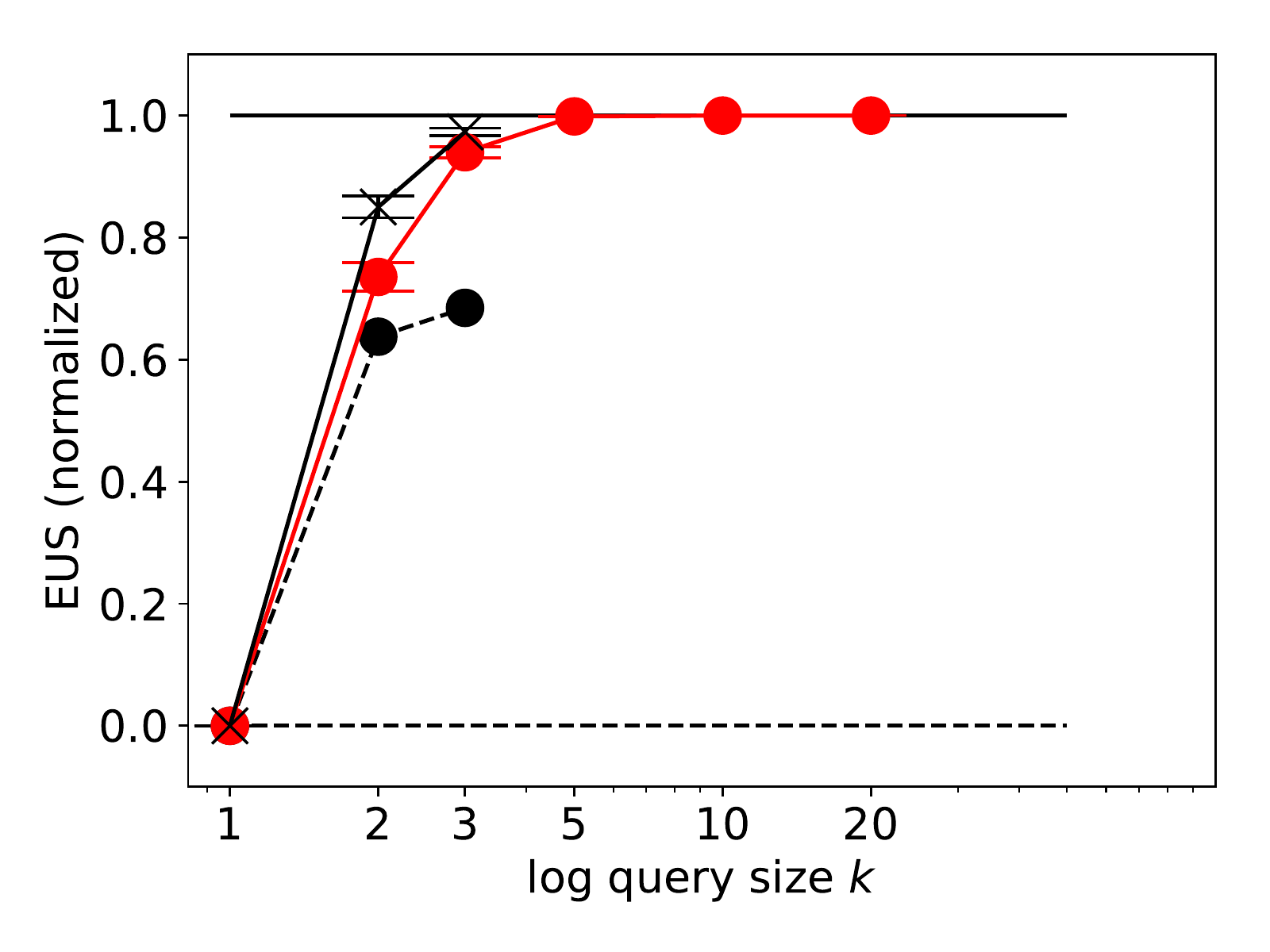}
  \caption{Random Prior, $N=10$}
  \label{fig:random_prior_N10}
\end{subfigure}
\begin{subfigure}[b]{.23\textwidth}
  \centering
  \includegraphics[width=\linewidth]{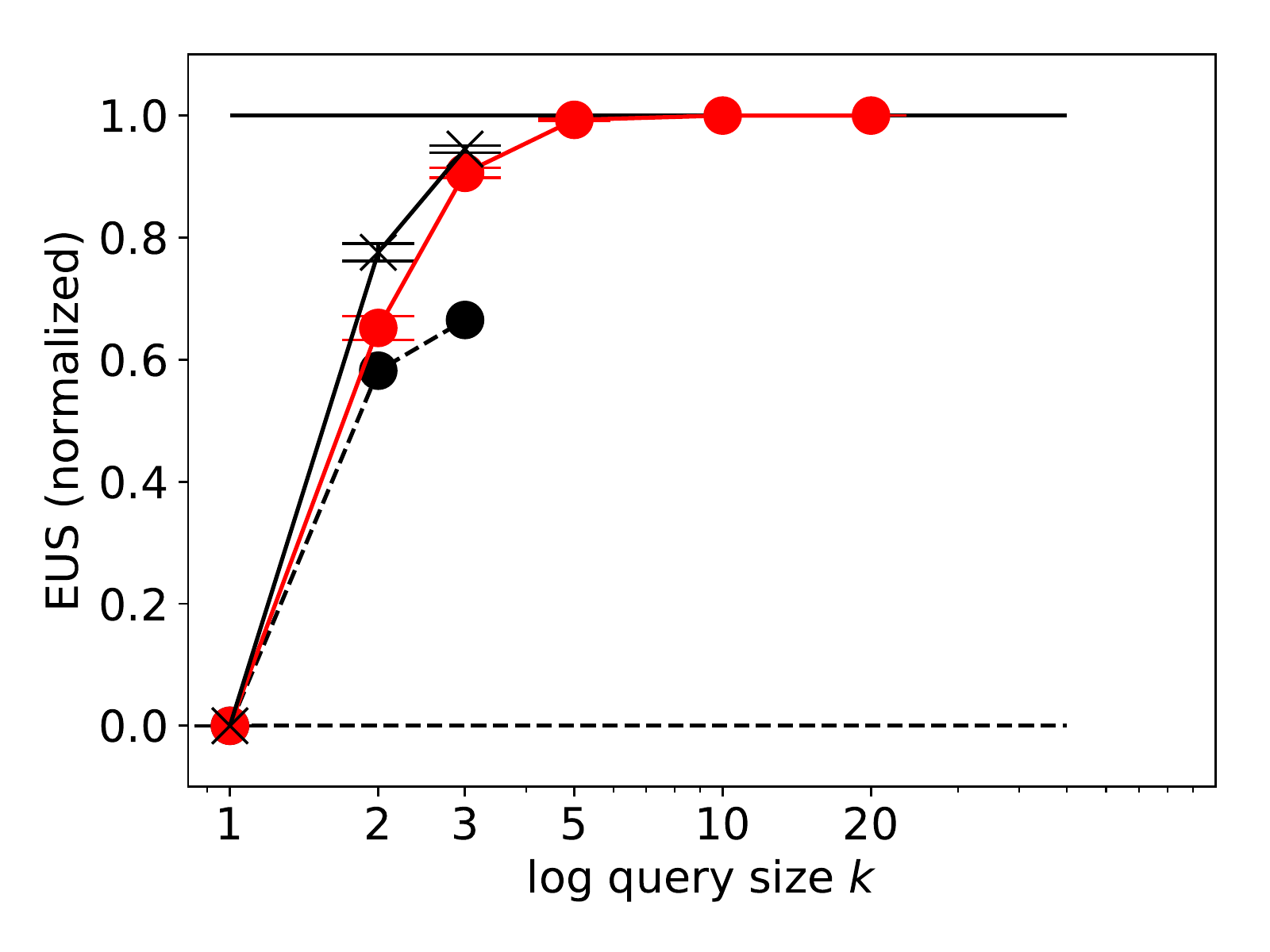}
  \caption{Random Prior, $N=50$}
  \label{fig:random_prior_N50}
\end{subfigure}

\begin{subfigure}[b]{.23\textwidth}
  \centering
  \includegraphics[width=\linewidth]{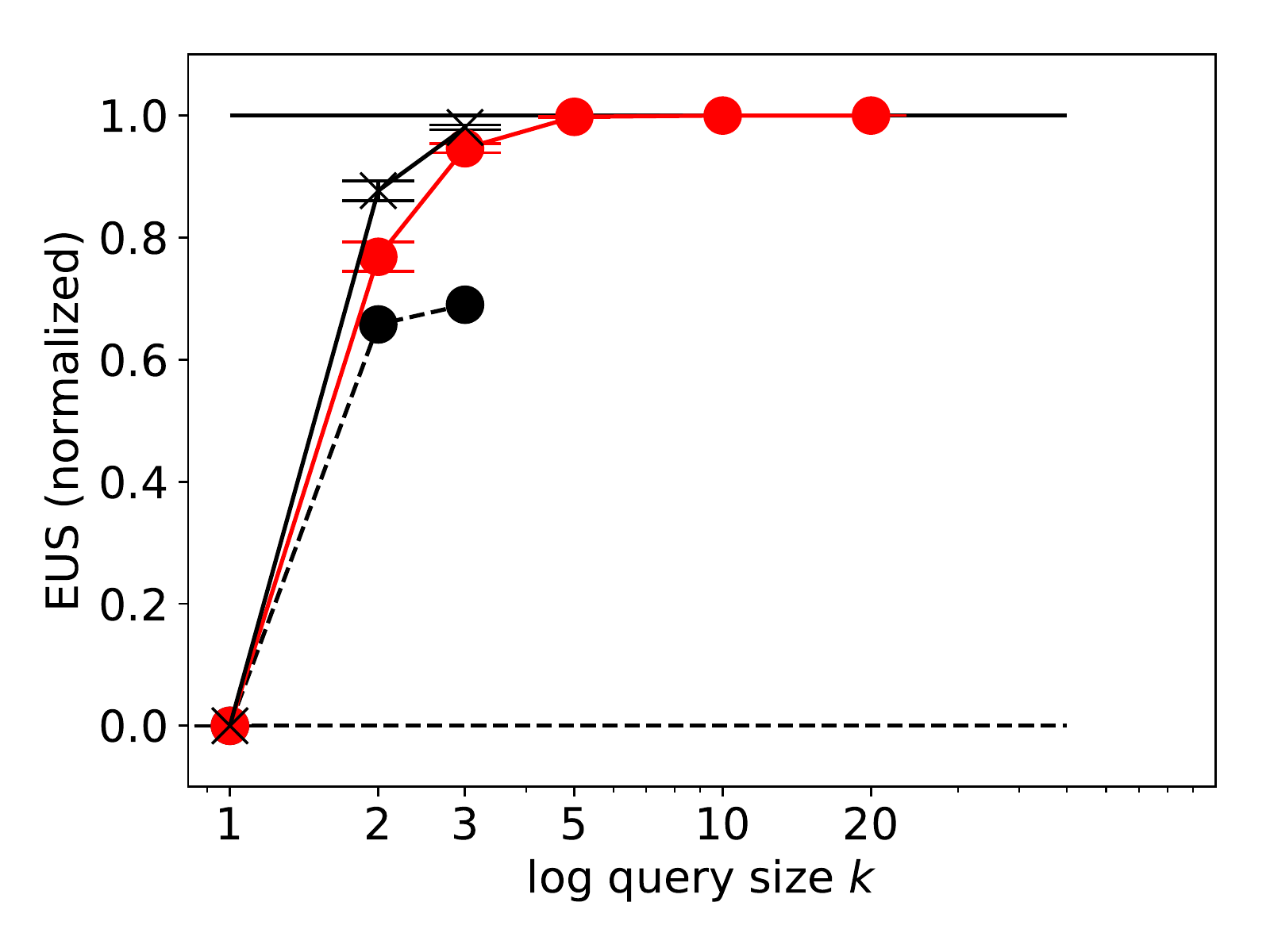}
  \caption{Gaussian Prior, $N=10$}
  \label{fig:gaussian_prior_N10}
\end{subfigure}
\begin{subfigure}[b]{.23\textwidth}
  \centering
  \includegraphics[width=\linewidth]{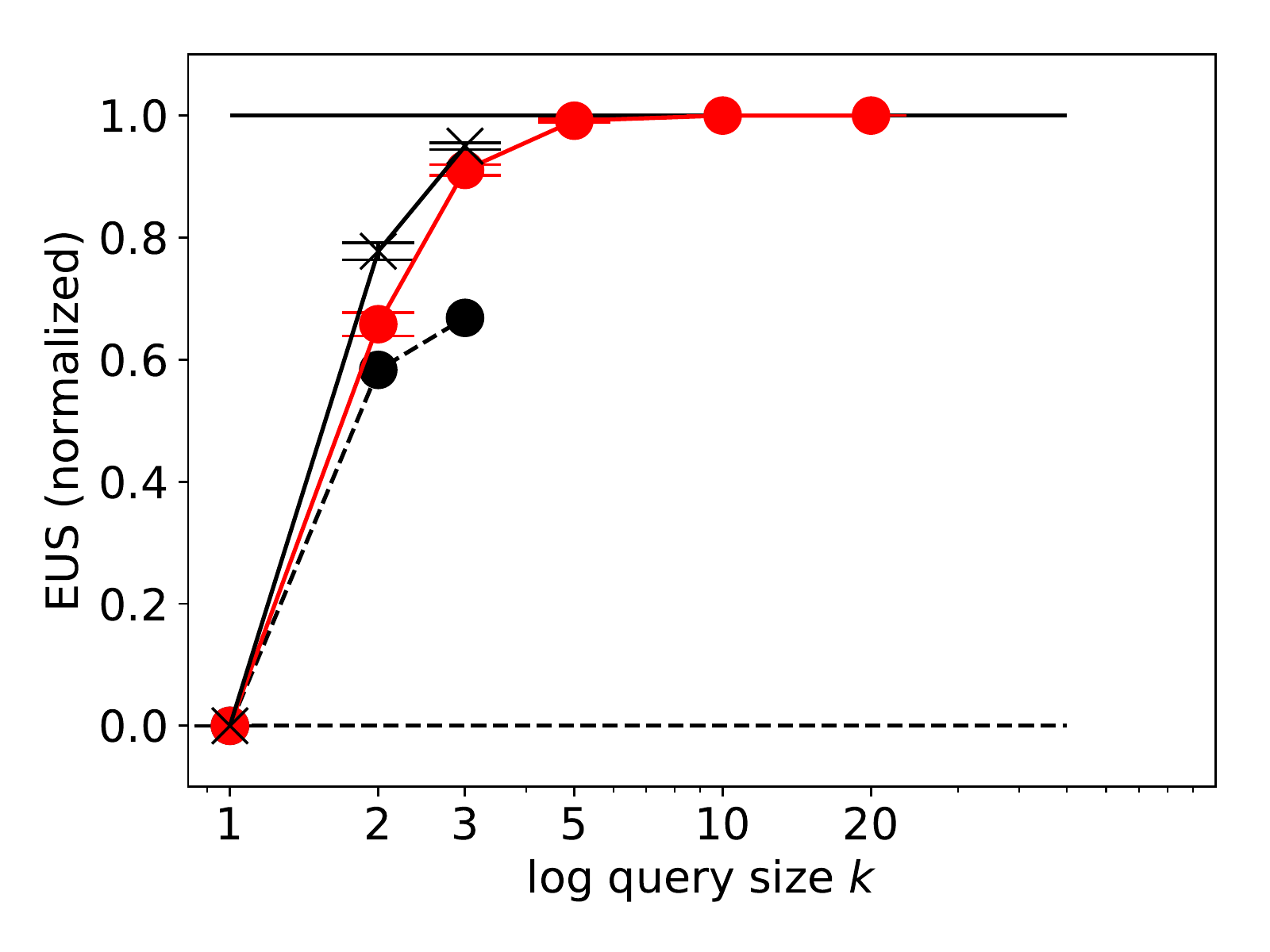}
  \caption{Gaussian Prior, $N=50$}
  \label{fig:gaussian_prior_N50}
\end{subfigure}

\caption
{EUS of the greedy query for the uniform (top), random (middle), and Gaussian (bottom) priors.
The queries are formed from the breakpoints discretization.
The results are means and standard errors of the EUS over 50 problem instances, each consisting of one provider MDP and $N$ recipient MDPs randomly generated as described in \ref{sec:Synthetic MDPs}.
The results are for $N=10$ (left) and $N=50$ (right).}
\label{fig:prior_type}
\end{figure}

Besides priors that are synthetically generated, we here also explore priors that naturally emerge in a two-round querying process.
Specifically, the provider's initial prior $\mu_0$ is a random prior over $N$ candidate recipient's MDPs generated as described above.
The provider forms the first greedy query of size $k_0$, updates its prior to $\mu_1$ based on the recipient's response, and then forms the second greedy query of size $k$ for prior $\mu_1$.
We are interested in the quality of the second greedy query for the updated prior $\mu_1$, which emerges from the first round of querying.
Figure \ref{fig:mutli_round} shows the results for $N=50$, $k_0=2$ and $5$, comparing the greedy query with its theoretical lower bound and the optimal query.
Consistent with the results in Figure \ref{fig:prior_type}, the results in Figure \ref{fig:mutli_round} show that the greedy query is effective for the priors that emerge from the first round of querying.

\begin{figure}[ht]
\centering

\begin{subfigure}[b]{.23\textwidth}
  \centering
  \includegraphics[width=\linewidth]{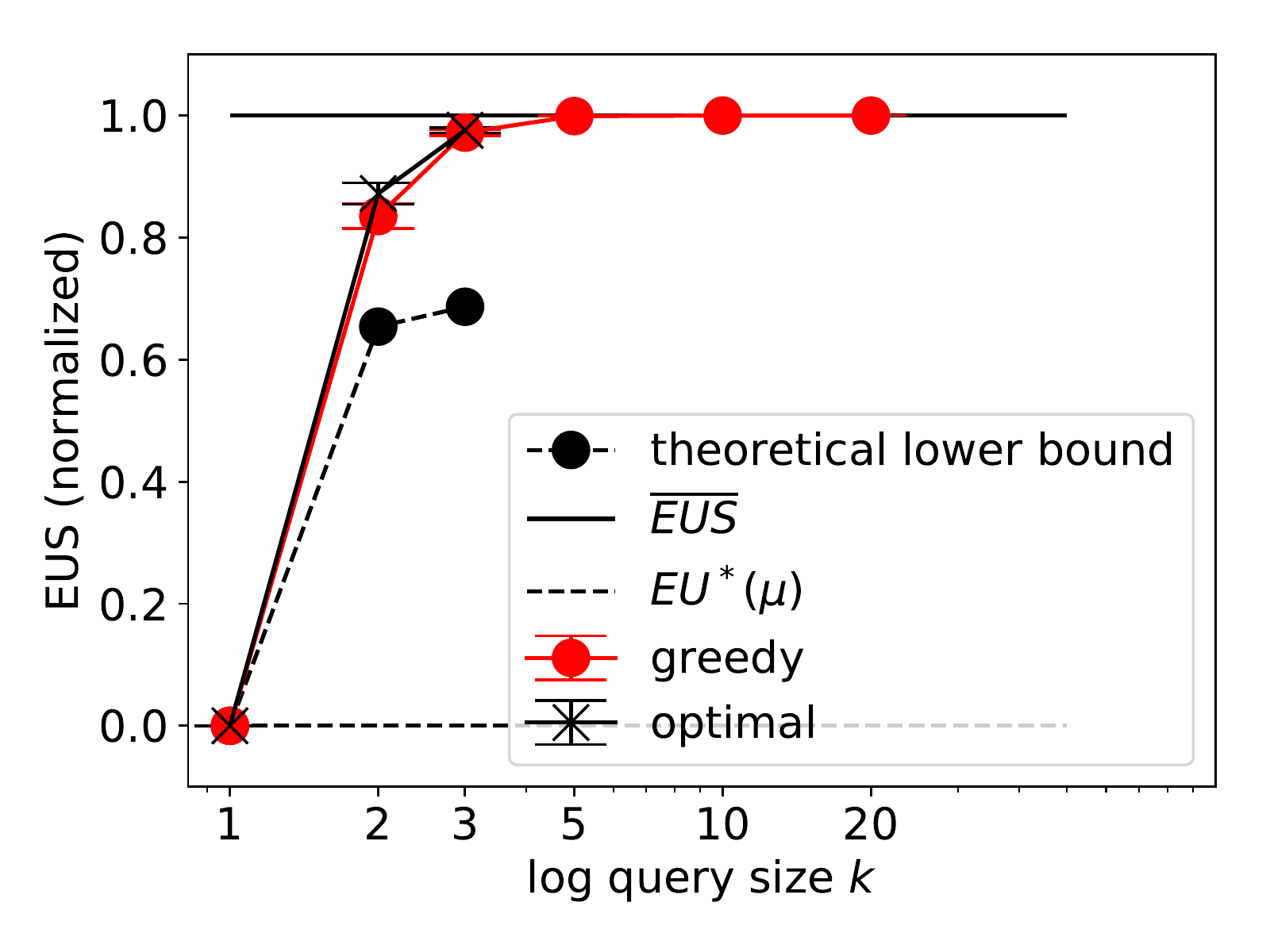}
  \caption{$k_0=2$}
  \label{fig:multi_round_k2}
\end{subfigure}
\begin{subfigure}[b]{.23\textwidth}
  \centering
  \includegraphics[width=\linewidth]{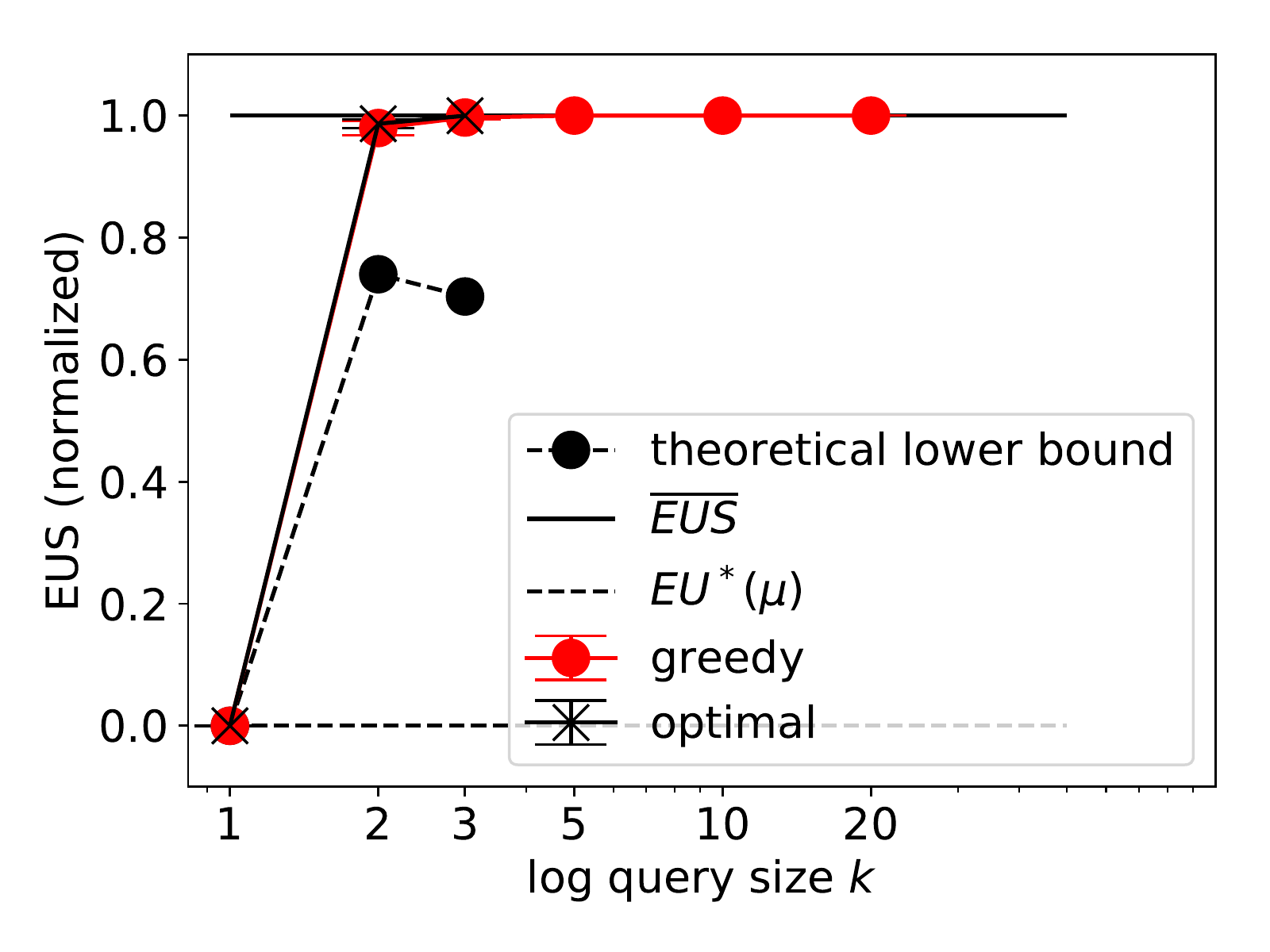}
  \caption{$k_0=5$}
  \label{fig:multi_round_k5}
\end{subfigure}

\caption
{EUS of the greedy query in the second round of querying.
For the first round, the prior is the random prior over $N=50$ candidate recipient's MDPs, and the provider forms the first greedy query of size $k_0$ and updates its prior based on the recipient's response.
For the second round, the provider constructs the second query of size $k$ (X-axis) for the updated prior, and the corresponding normalized EUS is shown along the Y-axis.
The results are means and standard errors of the EUS over 50 problem instances, each consisting of this two-round querying process, for $N=50$ and $k_0=2, 5$.
The provider's MDP and $N=50$ recipient MDPs are randomly generated as described in \ref{sec:Synthetic MDPs}.
}
\label{fig:mutli_round}
\end{figure}

\subsection{Probabilistic Commitment Effectiveness}
\label{appendix:Probabilistic Commitment Effectiveness}

The results in Section \ref{sec:Synthetic MDPs} confirm the effectiveness of our greedy approach to forming a query from the breakpoint commitments. 
Similar to Section \ref{sec:Overcooked}, we measure the joint value of the policies $(\pi^{\rm p}(c^*), \pi^{\rm r}(c^*))$ derived from the optimal commitment found by the centralized algorithm, and the joint value of the optimal joint value when the provider and the recipient plan as a single agent.
Note that this single-agent's optimal value is an upper bound for the joint value of any distributed policies the agents could find.
Table \ref{table:joint value} shows the results for the pairs of the provider's MDP and the recipient's MDP from the same $50$ coordination problems.
For each pair, the values are normalized by the optimal single-agent plan's value.
We also report the joint value achieved by a random policy and by policies $(\pi^{\rm p}(c), \pi^{\rm r}(c))$ derived from a randomly-selected feasible commitment.

As expected, the restriction of coordinating plans only through a commitment specification incurs loss in joint value compared to optimal joint planning, but the loss is modest, given that (as evidenced by the poor performance of the other approaches) the problems were not inherently easy.
The results assure us that our commitment query approach induces high-quality coordination.

\begin{table}[ht]
\centering
\caption{Joint value (mean and standard error) of the policies based on probabilistic commitments.}
\label{table:joint value}
\begin{tabular}{cccc} 
\toprule
\begin{tabular}[c]{@{}c@{}}Random \\Policy \end{tabular} & \begin{tabular}[c]{@{}c@{}}Random \\Commitment \end{tabular} & \begin{tabular}[c]{@{}c@{}}Optimal\\Commitment \end{tabular} & 
\begin{tabular}[c]{@{}c@{}}Optimal\\Single Agent \end{tabular}  \\ 
\midrule
$-4.62 \pm1.58$  & $0.82 \pm 0.03$ & $0.90 \pm 0.02$ & $1.00$\\
\bottomrule
\end{tabular}
\end{table}

\newpage \clearpage
\section{Supplementary Results in Overcooked}
\label{appendix:Supplementary Results in Overcooked}
\subsection{Greedy Query from the Breakpoints}
\label{appendix:Greedy Query from the Breakpoints}
For Overcooked, we repeat the experiments in Section \ref{sec:Synthetic MDPs}, with Figure \ref{fig:overcooked_EUS_runtime_vs_baselines}, Table \ref{table:overcooked discretiztion size}, and Figure \ref{fig:overcooked_EUS_runtime_vs_k} as the counterparts of Figure \ref{fig:EUS_runtime_vs_baselines}, Table \ref{table:discretization size}, Figure \ref{fig:EUS_runtime_vs_k} respectively.

The results in Figure \ref{fig:overcooked_EUS_runtime_vs_baselines} and Table \ref{table:overcooked discretiztion size}, presented in the main body, confirm our conjecture that the breakpoints discretization in Overcooked is relatively smaller, leading to greater efficiency.
Figure \ref{fig:overcooked_EUS_runtime_vs_baselines}(left) confirms that the breakpoints discretization still yields the highest EUS.
Comparing Table \ref{table:overcooked discretiztion size} with Table \ref{table:discretization size}, we see that the breakpoints discretization in Overcooked is even smaller that the even discretization with $n=10$, while the breakpoints discretization is significantly larger than the even discretization with $n=10$ for the synthetic MDPs in Section \ref{sec:Synthetic MDPs}.
Therefore, it is unsurprising to see that the runtime in the Overcooked environment, as shown in Figure \ref{fig:overcooked_EUS_runtime_vs_baselines}(right), is relatively smaller than that in Figure \ref{fig:EUS_runtime_vs_baselines}(right).
Figure \ref{fig:overcooked_EUS_runtime_vs_k} confirms that formulating the greedy query is again both efficient and effective. 

\begin{figure}[ht]
\centering
\begin{subfigure}{.23\textwidth}
  \centering
  \includegraphics[width=\linewidth]{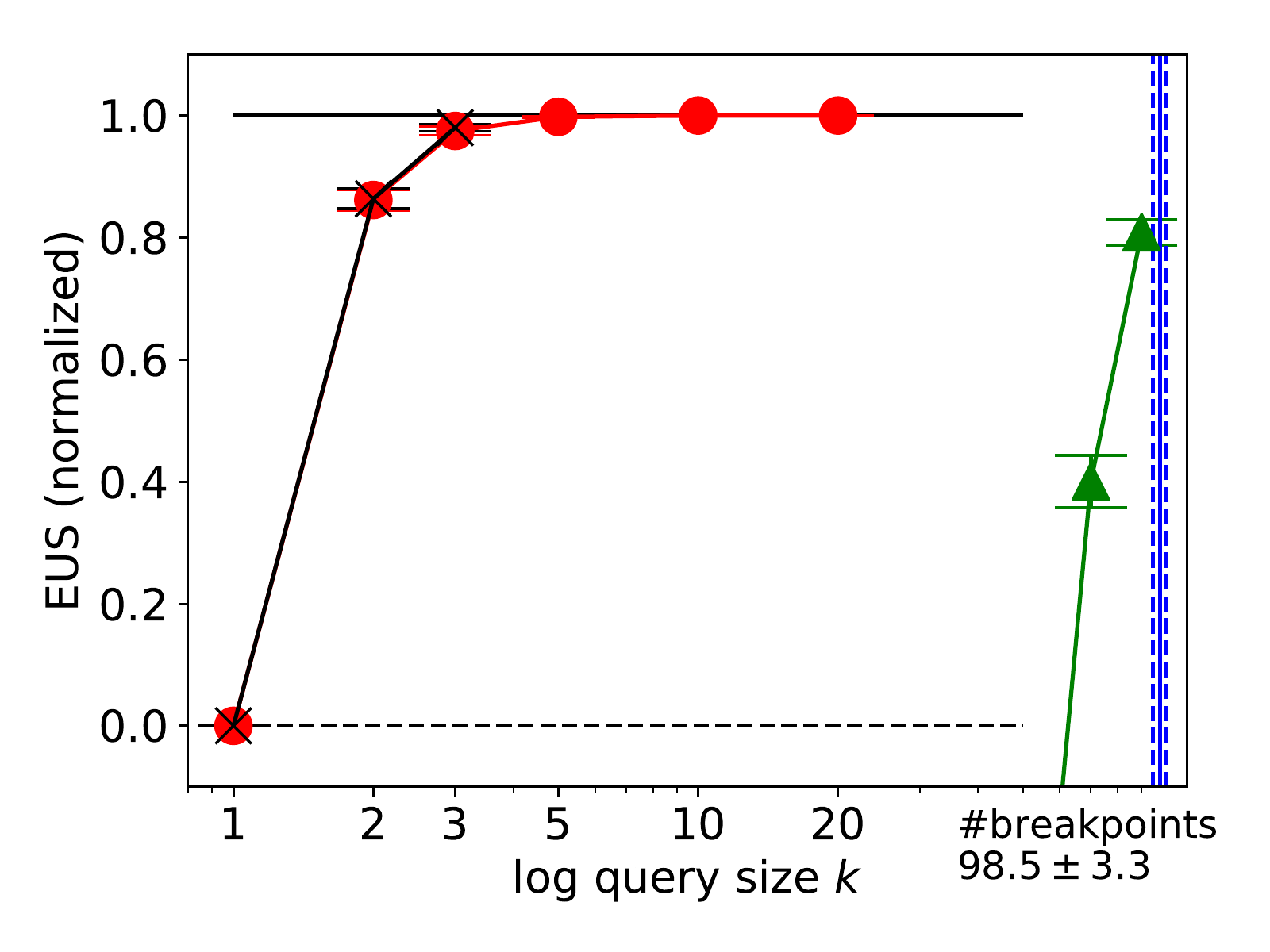}
\end{subfigure}
\begin{subfigure}{.23\textwidth}
  \centering
  \includegraphics[width=\linewidth]{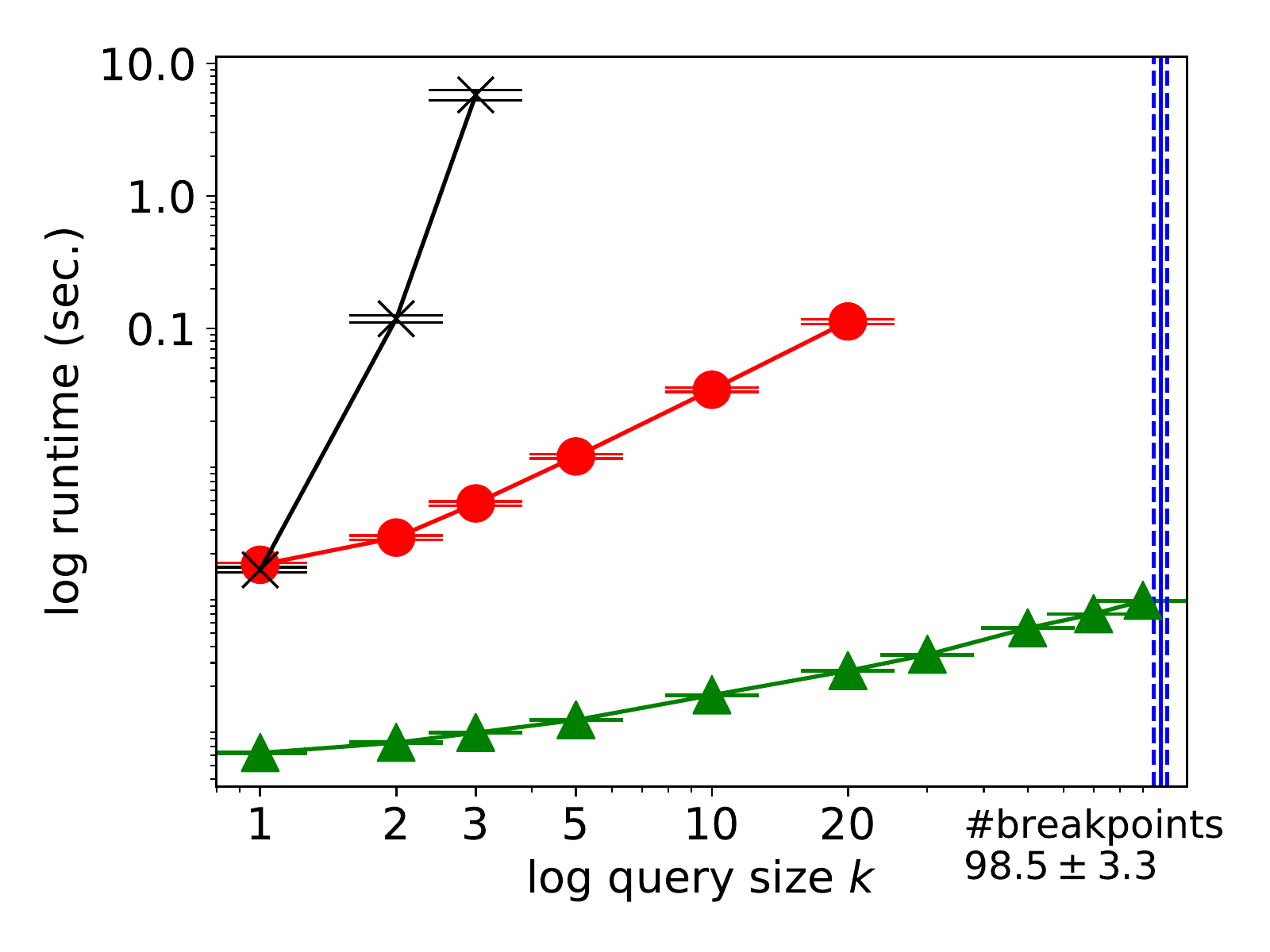}
\end{subfigure}
\caption{ Means (markers) and standard errors (bars) of the EUS (left) and runtime (right) of the optimal, the greedy, and the random queries formulated from the breakpoints in Overcooked.}
\vspace{-3mm}
\label{fig:overcooked_EUS_runtime_vs_k}
\end{figure}

% \begin{table}
% \centering
% \caption{Joint value (mean and standard error) of the policies based on probabilistic commitments in Overcooked.}
% \label{table:overcooked joint value}
% \begin{tabular}{cccc} 
% \toprule
% \begin{tabular}[c]{@{}c@{}} Null~\\Commitment\end{tabular} & \begin{tabular}[c]{@{}c@{}}Random \\Commitment \end{tabular} & \begin{tabular}[c]{@{}c@{}}Optimal\\Commitment \end{tabular} & \begin{tabular}[c]{@{}c@{}}Optimal\\Single Agent \end{tabular}  \\ 
% \midrule
% $0.00$~~                                                   & $0.15 \pm 0.01$~                                             & $0.99 \pm 0.01$~                                             & $1.00$                                                          \\
% \bottomrule
% \end{tabular}
% \end{table}

\subsection{Multiple Food Items}
\label{appendix:Multiple Food Items}
We also consider the scenario where there are more than one food item.
In such a scenario, if the time horizon is long enough, the commitment querying process could consist of multiple rounds, one commitment per round concerned with a single food item.
We here consider evaluating our approach by restricting the chef to make a single commitment regarding one out of the multiple food items, leaving the full-fledged problem of multi-round querying to future work.
If there are $m>1$ food items, besides the commitment time $T_c$ and probability $p_c$, the commitment $c$ should also specify as its commitment feature $u_c$ the id of the food it is concerned with, i.e. $u_c\in\{1,...,m\}$.
It can be easily verified that for each fixed $u_c\in\{1,...,m\}$, the structural properties presented in Section \ref{sec:Structure of the Commitment Space} still hold. Therefore, we can use the binary search procedure to identify the breakpoints for each $u_c\in\{1,...,m\}$ independently, and all the theoretical guarantees presented in the main body still hold.

We repeat the experiments in Section \ref{sec:Overcooked} with $m=3$. The results are presented in 
Figure \ref{fig:overcooked_EUS_runtime_vs_baselines_m3},
Table \ref{table:overcooked discretiztion size m3},
Figure \ref{fig:overcooked_EUS_runtime_vs_k_m3},
and Table \ref{table:overcooked joint value of commitment query m3}
as the counterparts of
Figure \ref{fig:overcooked_EUS_runtime_vs_baselines},
Table \ref{table:overcooked discretiztion size},
Figure \ref{fig:overcooked_EUS_runtime_vs_k},
and Table \ref{table:overcooked joint value of commitment query}, respectively.
These results are qualitatively consistent with the results for $m=1$.

\begin{figure}[ht]
\centering
\begin{subfigure}{.23\textwidth}
  \centering
  \includegraphics[width=1\linewidth]{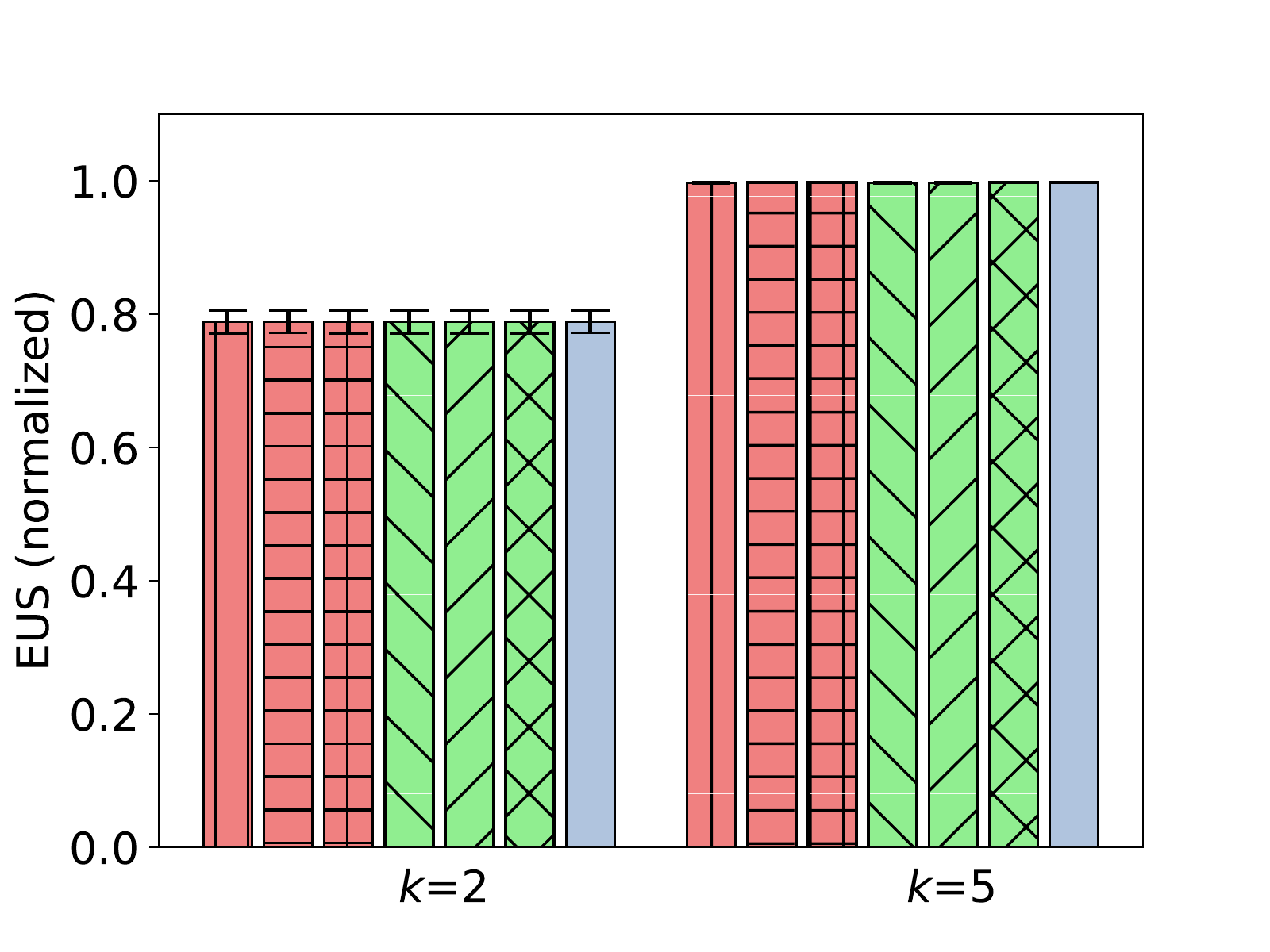}
%   \caption{EUS (normalized)}
%   \label{fig:overcooked_EUS_vs_baselines}
\end{subfigure}
\begin{subfigure}{.23\textwidth}
  \centering
  \includegraphics[width=1\linewidth]{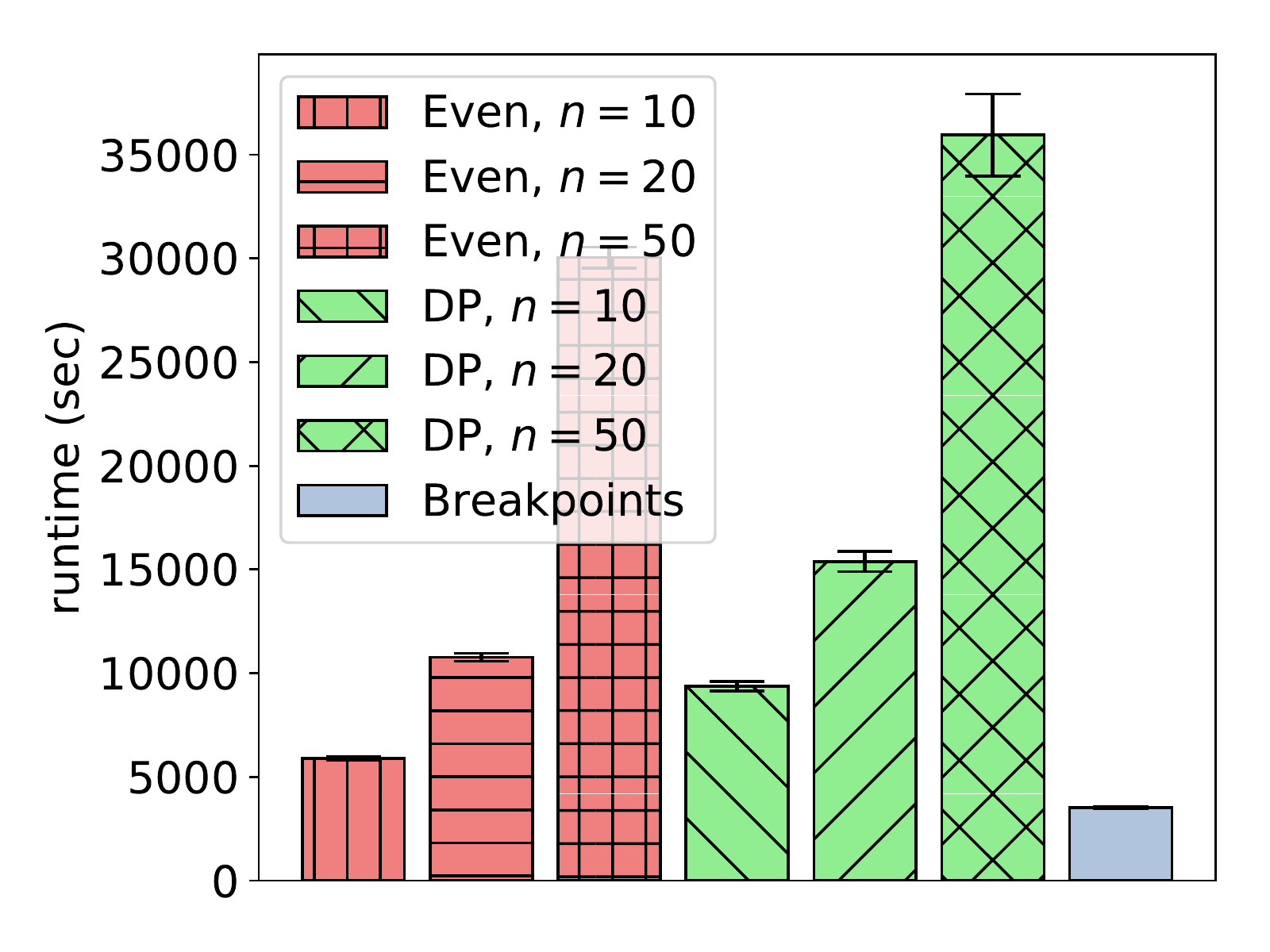}
%   \caption{runtime (sec.)}
%   \label{fig:overcooked_runtime_vs_baselines}
\end{subfigure}
\caption{Means and standard errors of the EUS (left) and runtime (right) 
% of the alternative discretizations 
in Overcooked with $m=3$ food items.
% The runtime is for forming the discretization and evaluating the commitments in the discretization.
}
\vspace{-2mm}
\label{fig:overcooked_EUS_runtime_vs_baselines_m3}
\end{figure}

\begin{table}[ht]
\centering
\caption{Averaged discretization size per commitment time (mean and standard error) in Overcooked with $m=3$ food items.}
\label{table:overcooked discretiztion size m3}
\begin{tabular}{c|ccc} 
\toprule
            & $n=10$        & $n=20$  & $n=50$            \\ 
\hline
Even        & $6.6 \pm 0.1$ & $11.1\pm0.2$ & $28.8\pm0.5$      \\
DP          & $5.3 \pm 0.2$ & $8.7\pm0.4$ & $17.0\pm0.8$       \\ 
\hline
Breakpoints & \multicolumn{3}{c}{$5.0\pm0.1$}  \\
\bottomrule
\end{tabular}
\vspace{-3mm}
\end{table}

\begin{figure}[ht]
\centering
\begin{subfigure}{.23\textwidth}
  \centering
  \includegraphics[width=\linewidth]{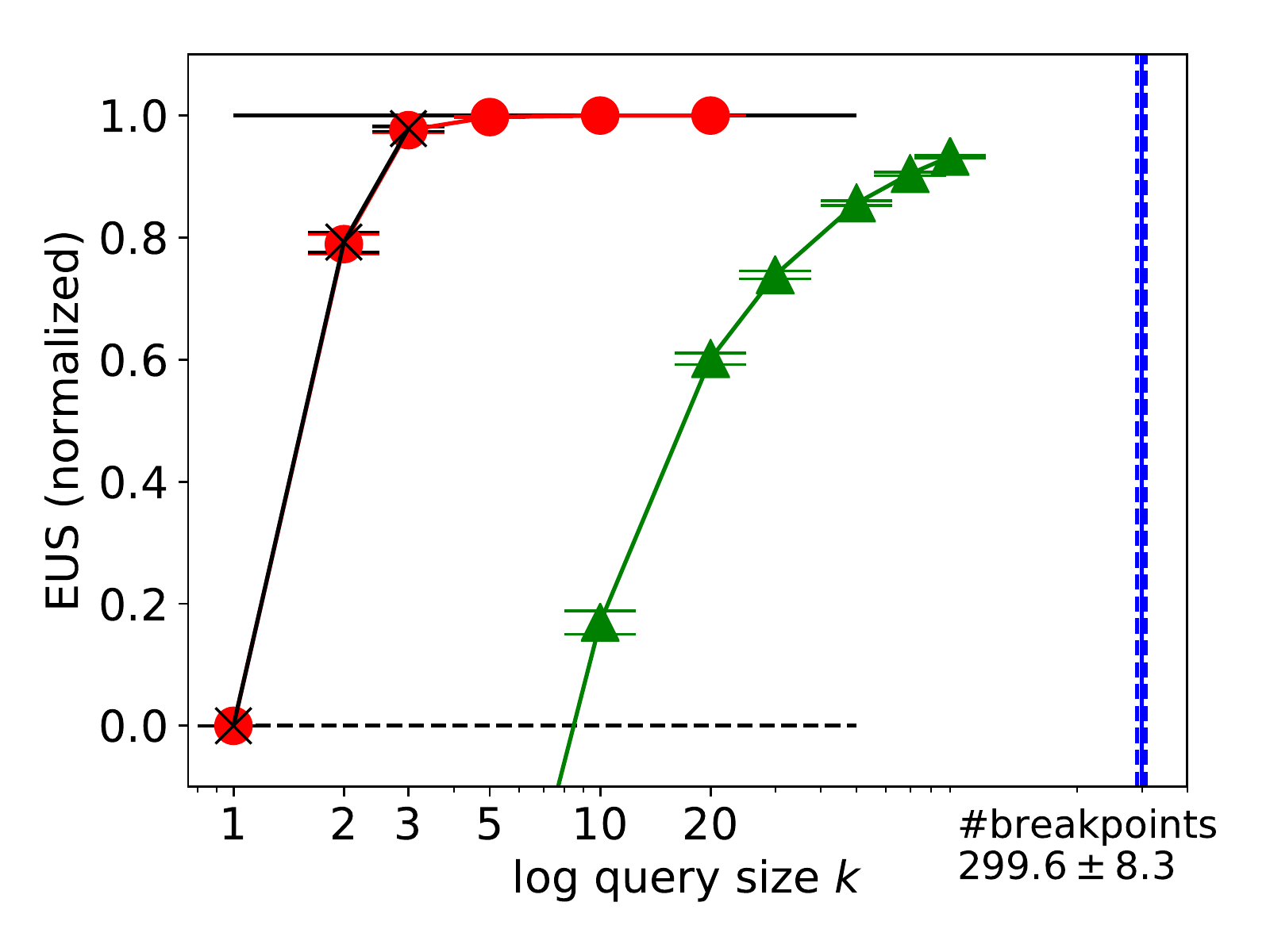}
\end{subfigure}
\begin{subfigure}{.23\textwidth}
  \centering
  \includegraphics[width=\linewidth]{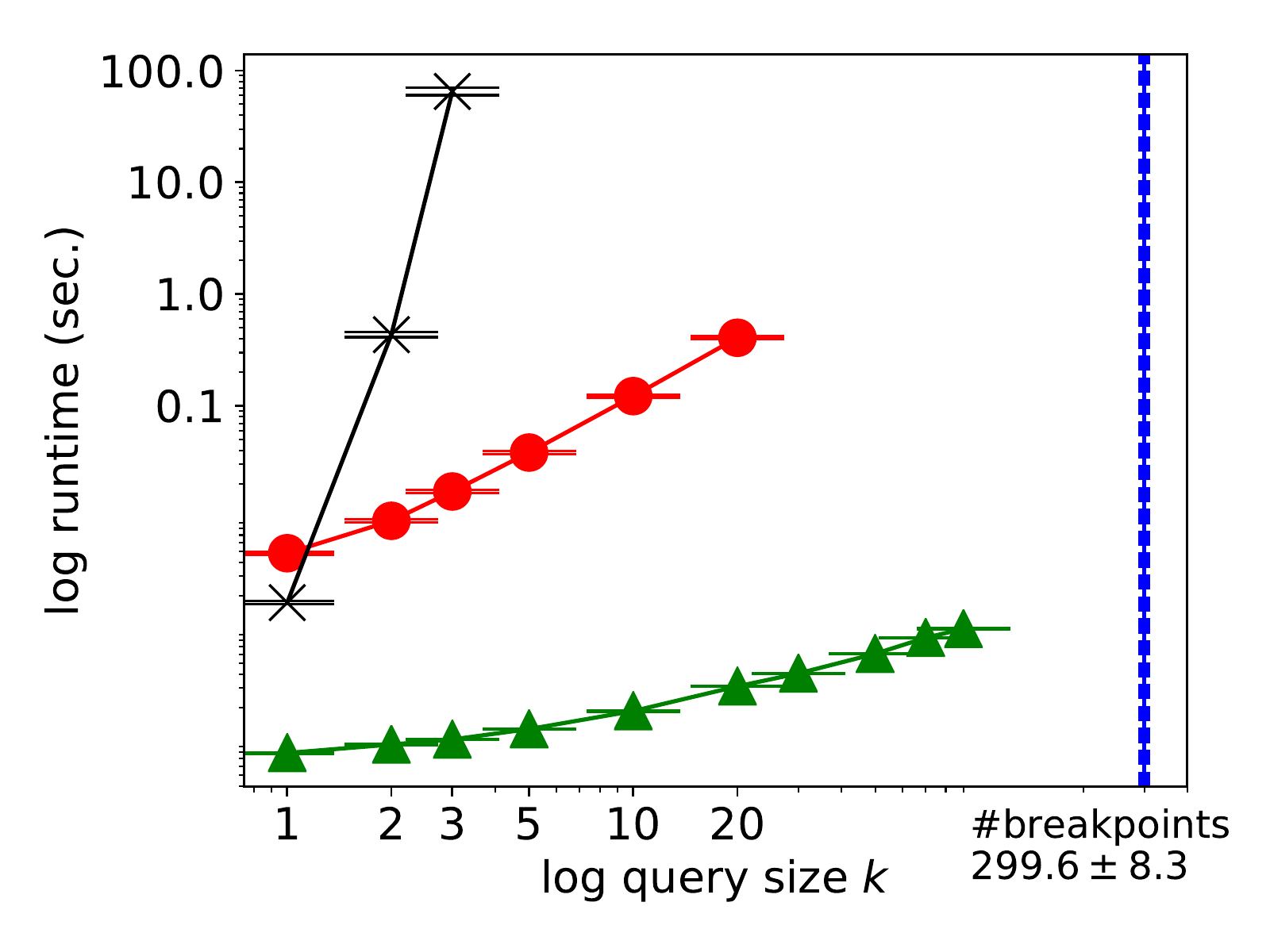}
\end{subfigure}
\caption{ Means (markers) and standard errors (bars) of the EUS (left) and runtime (right) of the optimal, the greedy, and the random queries formulated from the breakpoints in Overcooked with $m=3$ food items.}
\vspace{-3mm}
\label{fig:overcooked_EUS_runtime_vs_k_m3}
\end{figure}

% \begin{table}[t]
% \centering
% \caption{Joint value (mean and standard error) of the policies based on commitment queries in Overcooked with $m=3$ food items.
% }
% \label{table:overcooked joint value of commitment query m3}
% \begin{tabular}{cccc} 
% \toprule
% \begin{tabular}[c]{@{}c@{}}Random~\\Commitment\end{tabular} & \begin{tabular}[c]{@{}c@{}}Query\\ $k = 2$\end{tabular} & \begin{tabular}[c]{@{}c@{}}Query\\$k = 5$\end{tabular} & \begin{tabular}[c]{@{}c@{}}Optimal\\Commitment~ \end{tabular}  \\ 
% \midrule
% $14.5 \pm 1.4 \%$~                                             & $99.6 \pm 0.1\%$~                                      & $99.9 \pm 0.1\%$~                                      & $99.9 \pm 0.1\%$                                                 \\
% \bottomrule
% \end{tabular}
% \end{table}

\begin{table}
\centering
\caption{Values of joint policies (mean and standard error in \%) in Overcooked with $m=3$ food items, with MMDPs normalized to 100 and null commitment to 0.}
\label{table:overcooked joint value of commitment query m3}
\begin{tabular}{l|c|c} 
\toprule
                                 & Centralized                                                & Decentralized                                                         \\ 
\hline
\multicolumn{1}{c|}{Global Obs.} & \begin{tabular}[c]{@{}c@{}}100\\(MMDPs) \end{tabular}      & \begin{tabular}[c]{@{}c@{}} - \\(Wang et al.) \end{tabular}      \\ 
\hline
\multicolumn{1}{c|}{Local Obs.}  & \begin{tabular}[c]{@{}c@{}}$99.9 \pm 0.1$\\(Optimal $c$) \end{tabular} & \begin{tabular}[c]{@{}c@{}}$99.6 \pm 0.1$, $99.9 \pm 0.1$\\(Query $k = 2$, $5$) \end{tabular}  \\ 
\cline{2-3}
                                 & \multicolumn{2}{c}{Null $c$: $0$;~~~~Random $c$: $14.5 \pm 1.4$ }                                                                                               \\
\bottomrule
\end{tabular}

\end{table}

\end{document}